\newcommand*\diff{\mathop{}\!\mathrm{d}}
\DeclareMathOperator*{\argsup}{arg\,sup}
\DeclareMathOperator*{\argmax}{arg\,max}
\DeclareMathOperator*{\argmin}{arg\,min}
\DeclareMathAlphabet\mathbfcal{OMS}{cmsy}{b}{n}
\pgfplotsset{compat=newest} 
\pgfplotsset{
        layers/my layer set/.define layer set={
        background,
        main,
        foreground
    }{
       },
       set layers=my layer set,
    }
\definecolor{mygray}{RGB}{195,195,195}
\newcommand\todoFrame[2]{\vspace{.3cm}\noindent\tikz{
\node (contentnode) [draw, color = #1!25, fill=#1!15, text=black, rectangle, outer sep = 0, rounded corners = 1mm, minimum width=\linewidth-1, text width=0.9\linewidth, align=justify, below right] at (0,0) {\noindent #2};
\draw[fill opacity = 1, color=#1, fill=#1] (0,0) rectangle ([xshift=5]contentnode.south west);}
\par}
\newcommand\todoEnv[3]{\par\todoFrame{#1}{\noindent\hspace*{.3cm}\textbf{\textcolor{#1}{\Large{#2}}}\vspace*{.2cm}\newline\noindent\hspace*{.3cm}\begin{minipage}{\dimexpr\linewidth-.3cm\relax}#3\end{minipage}}\ignorespaces}
\newtcolorbox{AIbox}[2][]{aibox,title=#2,#1}
\definecolor{lightblue}{rgb}{0.22,0.45,0.70}%
\newcolumntype{C}{>{\centering\arraybackslash}X}
\newcommand{\ms}[1]{\mbox{\footnotesize{$#1$}}}
\definecolor{rliableolive}{HTML}{BBCC33}
\definecolor{rliableblue}{HTML}{77AADD}
\definecolor{rliablered}{HTML}{EE8866}
    \let\Cref\crtCref
    \let\cref\crtcref
\newtcolorbox{analysisbox}[1][]{
    enhanced jigsaw,
    colback=white,
    colframe=blue!75!black,
    fonttitle=\bfseries,
    boxsep=5pt,
    left=5pt,
    right=5pt,
    top=5pt,
    bottom=5pt,
    title=#1,
}
\definecolor{editInitialResponse}{RGB}{255, 235, 156} %
\definecolor{editBacktrack}{RGB}{0, 0, 139} %
\definecolor{editRevisedResponse}{RGB}{255, 182, 193} %
\definecolor{highlightmistake}{RGB}{255, 179, 179} 
\definecolor{highlightcorrect}{RGB}{179, 255, 179}
\theoremstyle{plain}
\newtheorem{theorem}{Theorem}[section]
\newtheorem{lemma}[theorem]{Lemma}
\theoremstyle{definition}
\theoremstyle{remark}
\let\cite\citep
\title{Test-time Verification via Optimal Transport: Coverage, ROC, \& Sub-optimality}
\author[$\dagger$]{Arpan Mukherjee}
\author[$\dagger$]{Marcello Bullo}
\author[*, a]{Debabrota Basu}
\author[$\dagger$, a]{Deniz Gündüz}
\affil[$\dagger$]{Imperial College London}
\affil[*]{Univ. Lille, Inria, CNRS, Centrale Lille, UMR 9189 - CRIStAL, France}
\affil[a]{Co-supervision}
\begin{document}

\maketitle
\doparttoc
\faketableofcontents

\allowdisplaybreaks

\vspace{-0.05cm}
\begin{tcolorbox}[
  colback=gray!10,    
  colframe=white,
  boxrule=0.5pt,      
  arc=2pt,            
  left=6pt,right=6pt, 
  top=4pt,bottom=4pt
]
\textbf{Abstract:} While test-time scaling with verification has shown promise in improving the performance of large language models (LLMs), role of the verifier and its imperfections remain underexplored. The effect of verification manifests through interactions of three quantities: (i) the generator’s {\em coverage}, (ii) the verifier’s {\em region of convergence} (ROC), and (iii) the sampling algorithm’s {\em sub-optimality}. Though recent studies capture subsets of these factors, a unified framework quantifying the geometry of their interplay is missing. We frame verifiable test-time scaling as a transport problem. This characterizes the interaction of coverage, ROC, and sub-optimality, and uncovers that the sub-optimality–coverage curve exhibits three regimes. A {\em transport regime} -- where sub-optimality increases with coverage, a {\em policy improvement regime} -- where sub-optimality may decrease with coverage, depending on the verifier’s ROC, and a {\em saturation regime} -- where sub-optimality plateaus, unaffected by coverage. We further propose and analyze two classes of sampling algorithms -- {\em sequential} and {\em batched}, and examine how their computational complexities shape these trade-offs. Empirical results with \texttt{Qwen}, \texttt{Llama}, and \texttt{Gemma} models corroborate our theoretical findings.\\

\vspace{.2cm}
\begin{minipage}{\textwidth}
{\footnotesize
\begin{tabular}{rl}
\textbf{Correspondence:} & Arpan Mukherjee (\texttt{a.mukherjee@imperial.ac.uk})\\
& Marcello Bullo (\texttt{bullo.marcello@gmail.com}) \\
\textbf{Code:} & \url{https://github.com/marcellobullo/ot-resampling}
\end{tabular}
}
\end{minipage}
\end{tcolorbox}

\section{Motivations \& Contributions}\label{sec: introduction}

Test-time scaling has emerged as a promising axis for improving the performance of large language models (LLMs)~\citep{openai2024openaio1card}. Existing approaches for test-time scaling fall into two categories: {\em verifier-free} and {\em verifier-based} (details in Appendix~\ref{sec: literature review appendix}). The latter category leverages {\em verifiers} --- binary reward mechanisms grounded in \textit{de facto} correctness criteria (e.g., unit tests, gold solutions). Verifiers have widely shown potential to improve post-training performance while used in both training and inference phases~\citep{cobbe2021training, guo2025deepseek, luo2025deepscaler, huang2025best, dorner2025rocnrerollverifierimperfectionaffects}. 

A typical test-time pipeline consists of three components: a {\em generator} (the reference LLM), a {\em verifier}, and a {\em sampling algorithm} (e.g., Best-of-$N$ (BoN)~\citep{aminian2025best}). Performance of the generated responses (e.g., accuracy for objective tasks) results from the combined attributes of each of these components. Following rapid empirical progress, efforts have been made to uncover the theoretical underpinnings of test-time verification, specifically its aggregate scaling laws such as $pass@N$ performance~\citep{brown2024large} and policy divergence~\citep{beirami2024theoretical}. A majority of these studies assume an {\em accurate} verifier, a simplifying assumption which is seldom satisfied in practice. While recent studies investigate these imperfections~\citep{huang2025best, dorner2025rocnrerollverifierimperfectionaffects}, a unified perspective that elucidates the \textit{interactions between the components' characteristics and verification inaccuracy is missing}. Motivated by this gap, we ask the following overarching question:
\todoEnv{mygray}{}{{\em To what extent can verifier-based sampling approximate the induced optimal policy, and how are the approximations shaped by verification inaccuracies?}}
Addressing these questions requires moving beyond the asymptotic scaling curves, and towards a finer-grained analysis that captures the exact dependence of performance on the generator, the verifier, and the sampling algorithm. In this paper, we study the interplay between the generator's {\em coverage}, the verifier's {\em region of convergence} (ROC), and the sampling algorithm's {\em sub-optimality} through an {\em exact analysis}. We formulate \textit{test-time verification as a sampling problem}. Given generative access to a proposal distribution $\mu$, we are tasked with sampling from a target distribution $\nu^\star$. The only access we have to the target distribution is through an {\em approximately correct} verifier $\widehat r$, assuming a de facto ground truth $r^\star$. 

\begin{figure}[t!]
    \centering
    \begin{minipage}{0.44\textwidth}
        \centering
        \pgfplotsset{
  colormap={blueshades}{
    color(0cm)=(blue)
    color(1cm)=(blue!40)
    color(2cm)=(blue!20)
  }
}

\begin{tikzpicture}
  \begin{axis}[
    xlabel={Coverage ($\beta-1$)},
    ylabel={Sub-optimality},
    axis lines=left,
    width=6cm,
    height=3.25cm,
    grid=none,
    xmin=0, xmax=35,
    ymin=-0.05, ymax=1.2,
    xtick=\empty,
    ytick=\empty,
    axis on top=true,
    scale only axis=true,
    point meta min=0.,
    point meta max=2,
    point meta=explicit,
    colorbar,
    colorbar style={
        title style={at={(0.5,-0.1)}, anchor=north},
        height=3.25cm,
        width=0.3cm,
        tick label style={font=\scriptsize},
        title={$J$},
        ytick={0., 1, 2},
        yticklabels={$0$,$0.35$,$0.68$}
        },
  ]

  \def\onesh{24.096385542168672}    
  \def\ones{16.449803424849073}      
  \pgfmathsetmacro{\pimidpoint}{(\onesh + \ones)/2}
  \pgfmathsetmacro{\satmidpoint}{(\onesh + 35)/2}
  \pgfmathsetmacro{\otcmidpoint}{(\ones + 35)/2}

  \addplot[
    draw=none,
    fill=black!5
  ] coordinates {(0.,-0.5) (\ones,-0.5) (\ones,1.5) (0.,1.5)};

  \addplot[
    draw=none,
    fill=black!10,
  ] coordinates {(\ones,-0.5) (\onesh,-0.5) (\onesh,1.5) (\ones,1.5)};

  \addplot[
    draw=none,
    fill=black!15,
  ] coordinates {(\onesh,-0.5) (35,-0.5) (35,1.5) (\onesh,1.5)};
    
  \node[] at (axis cs:\ones/2,1.12) {\scriptsize T};
  \node[] at (axis cs:(\pimidpoint, 1.12) {\scriptsize PI };
  \node[] at (axis cs:\satmidpoint, 1.12) {\scriptsize S};

  \draw[dotted, thick, color=gray] (axis cs:\ones,-0.1) -- (axis cs:\ones,1.5);
  \draw[dotted, thick, color=gray] (axis cs:\onesh,-0.1) -- (axis cs:\onesh,1.5);

  \node[blue] at (axis cs:\otcmidpoint, 0.99) {\tiny Optimal Transport Cost (OTC)};

  \addplot+[
  very thick,
  color=blue,  
  mark=none,
  ] table[x=x, y=y, col sep=comma]%
  {figures/intro/data/subopt_intro_J0.0.csv};

  \addplot+[
    very thick,
    mark=none,
    color=blue!40,
  ] table[ x=x, y=y, col sep=comma]%
  {figures/intro/data/subopt_intro_J0.35.csv};

  \addplot+[
    very thick,
    mark=none,
    color=blue!20,
    solid,
  ] table[ x=x, y=y, col sep=comma]%
  {figures/intro/data/subopt_intro_J0.683.csv};
  \end{axis}
\end{tikzpicture}
    \vspace{-.7cm}
    \caption{Regimes of test-time verification.}\label{fig:suboptimality_vs_coverage_intro}
    \end{minipage}\hfill
    \begin{minipage}{0.5\textwidth}
        \centering
        \vspace{.14cm}
            \usetikzlibrary{calc, arrows.meta, decorations.markings}
    \begin{tikzpicture}[scale=.6]
        \begin{axis}[%
            name=manifold,
            scale=1,
            view={25}{65},
            axis lines=none,
            mark layer=like plot, 
            colormap/blackwhite,
        ]
            \addplot3 [
                surf,
                shader=faceted,
                draw=none,
                fill opacity=0.75,
                samples=25,
                domain=-3:3,
                y domain=-3:3,
                on layer=main,
                opacity=0.2
                ] {x^2-y^2};
                
            \addplot3[
                variable=t,
                mesh,
                domain=-120:120,
                color=magenta,
            ] (2.2*cos(t),{2.2*sin(t)}, {2.2^2*cos(2*t)});

            \addplot3[
                only marks,
                mark=*,
                color=black,
                mark size=1.pt
            ] coordinates {
                (-1.1, -1.905, -2.42)
            };
            
            \addplot3[
                only marks,
                mark=*,
                color=black,
                mark size=1.pt
            ] coordinates {
                (-2.2, 0, 4.84)
            };

            \addplot3[
                only marks,
                mark=*,
                color=black,
                mark size=1.pt
            ] coordinates {
                (-1.1, 1.905, -2.42)
            };

            \addplot3[
                domain=0:1,
                variable=t,
                mesh,
                color=blue,
                ->, >=stealth,
                mark=none
            ]
            (
                { (1 - t)^2 * (-1.1) + 2*(1 - t)*t*(-2.6) + t^2 * (-2.2) },
                { (1 - t)^2 * (-1.905) + 2*(1 - t)*t*(-2) + t^2 * (0) },  
                { 
                    (
                        (1 - t)^2 * (-1.1) + 2*(1 - t)*t*(-2.6) + t^2 * (-2.2)
                    )^2
                    -
                    (
                        (1 - t)^2 * (-1.905) + 2*(1 - t)*t*(-2) + t^2 * (0)
                    )^2
                }
            );

            \node[anchor=south] at (axis cs:-1.1, -1.905, -2.42) {$\mu$};

            \node[anchor=south] at (axis cs:-2.2, 0, 4.84) {$\nu^\star$};

            \node[anchor=south] at (axis cs:-0.8, 2.005, -2.42) {$\nu_{\mathfrak{A}}$};
            

            \path let \p1 = (axis cs:-2.2, 0, 4.84) in coordinate (nustar) at (\p1);

            \path let \p2 = (axis cs:-1.1, 1.905, -2.42) in coordinate (nuA) at (\p2);

            \path let \p3 = (axis cs:-1.1, -1.905, -2.42) in coordinate (mu) at (\p3);
            
            \path let \p4 = (axis cs:1.9, 0, 0) in coordinate (rhat) at (\p4);
        \end{axis}

        \coordinate (piref) at (-2.8, 2.8);
        \def\radius{2.199cm}
        
        \coordinate (piA) at ($(piref) + (50:\radius)$);
        \coordinate (pistar) at ($(piref) + (20:\radius)$);

        \draw[fill=blue!5] (piref) circle (\radius);

        \fill (piref) circle[radius=1pt];
        \node[anchor=south] at (piref) {$\pi_{\rm ref}$};            
        
        \draw[->, >=latex] (piref) -- ++(-\radius, 0) node[pos=0.5, below=0.pt] {$\beta-1$};

        \fill[blue] (piA) circle[radius=1.2pt];
        \node[anchor=east] at (piA) {$\pi_\mathfrak{A}$};
        
        \fill[blue] (pistar) circle[radius=1.2pt];
        \node[anchor=east] at (pistar) {$\pi^\star$};

        \draw[->, >=latex, dashed, black] (pistar) -- (nustar);
        \draw[->, >=latex, dashed, black] (piA) to[out=30, in=120] (nuA);
        \draw[->, >=latex, dashed, black] (piref) to[out=-30, in=-150] (mu);

        \draw[<->, >=latex, black, green!50!black] (nustar) -- node[pos=0.5, below=1pt, sloped] {\small SubOpt} (nuA);

        \draw[black, opacity=0, text opacity=1, blue] (mu) -- node[pos=0.5, xshift=-22pt] {\small OTC} (nustar);

        \node[draw=magenta, fill=red!10, shape=rectangle, minimum size=10pt, inner sep=0pt] at (rhat) {$\hat{r}$};

   \end{tikzpicture}     
    \caption{Geometry of test-time verification.}\label{fig:manifold}
    \end{minipage}
\end{figure}

In this context, we make the following contributions:

\textbf{I. Framework.} By recognizing test-time verification as a sampling problem, we study it through the lens of {\em optimal transport}. Here, the goal is to transport the proposal distribution $\mu$ of the reference LLM to a target distribution $\nu^\star$, defined by the ground-truth verifier $r^\star$. Since $\nu^\star$ is not directly accessible, we instead rely on discriminative access via an imperfect verifier $\widehat r$ to guide sampling. If the algorithm accepts proposals too generously, the induced distribution remains close to $\mu$, leading to high sub-optimality. Conversely, if it applies an overly stringent rejection policy and discards most proposals, sub-optimality may shrink, albeit at the cost of an excessive compute budget. The key challenge is to design a transport plan that balances proposal usage against induced sub-optimality.

\textbf{II. Geometry of sub-optimality vs. coverage.} We decompose sub-optimality into two components: an {\em optimal transport cost}, capturing the intrinsic difficulty of transporting the proposal distribution $\mu$ to the target $\nu^\star$, and a {\em policy improvement} term, reflecting how the sampling algorithm mitigates this cost. Sampling directly from $\nu^\star$ achieves policy improvement exactly matching the transport cost, and yielding an {\em optimal sampling} scheme. In practice, however, verifier inaccuracies render this ideal infeasible, and sub-optimality is governed jointly by the verifier’s ROC, particularly, its Youden's index, and the generator’s coverage. Our analysis reveals that as coverage constraints are relaxed, the sub-optimality$-$coverage curve exhibits three distinct regimes, as depicted in Figure~\ref{fig:suboptimality_vs_coverage_intro}: (1) a \textbf{transport regime}, where the optimal transport cost dominates policy improvement; (2) a \textbf{policy improvement regime}, where the optimal transport cost saturates and a sufficiently accurate verifier enables sub-optimality reduction; and (3) a \textbf{saturation regime}, where both terms plateau, leaving sub-optimality constant regardless of further coverage.



\textbf{III. Algorithms and their properties.} We study two protocols: a {\em sequential generation} protocol, where responses are generated until acceptance, and a {\em batched generation} protocol, where a batch of responses is drawn and the algorithm distills a winning response. In the sequential protocol, we revisit the naïve {\em accept-if-correct} (AiC) strategy analyzed by~\citet{dorner2025rocnrerollverifierimperfectionaffects}\footnote{\citet{dorner2025rocnrerollverifierimperfectionaffects} refer to this strategy as rejection sampling. Our analysis, however, distinguishes rejection sampling from AiC, motivating our separate nomenclature.} and show that AiC violates our coverage constraint in the transport regime. To address this limitation, we propose {\em sequential rejection sampling} (SRS), a valid transport plan for which we derive exact sub-optimality. In addition, to reduce the number of proposals, we introduce {\em sequential maximal coupling} (SMC), which minimizes transport cost and achieves the same sub-optimality as SRS. Surprisingly, despite being derived from different principles, SRS and SMC require the same expected number of proposals. Table~\ref{tab:complexity_subopt_summary_compact} summarizes the properties of AiC, SRS, and SMC. Finally, to account for batched generation schemes such as  BoN sampling, we investigate batched variants of SRS and BoN. Our analyses and empirical studies reveal that rejection sampling-type algorithms are better suited to {\em low-coverage} regimes, whereas BoN-type algorithms are preferable under relaxed coverage.
\vspace{-1.1em}

\section{Formulation: Test-time Verification as a Transport Plan}\label{sec: setting}
Let $\mcX$ be the space of prompts.\footnote{\textbf{Notations:} $\bZ$, $\bz$, and $\mathcal{Z}$ refer to a random vector, its realization, and a set, respectively. } Each prompt $\bx\in\mcX$ admits a response $\by\in\mcY$ generated by a reference LLM with conditional kernel $\pi_{\rm ref}(\cdot\mid\bx)$. For generality, we assume $\mcY$ is a Polish space equipped with a Borel $\sigma$-algebra $\mathfrak{B}(\mcY)$. 
The induced reference distribution over responses is $\mu \triangleq {\rm law}(\bY\med\bX = \bx)$. Test-time verification assumes existence of a ground-truth verifier $r^\star : \mcX \times \mcY \mapsto \{0,1\}$ that assigns a binary reward to each (prompt, response) pair. Specifically, we model verification as a set-membership problem, i.e., for each prompt $\bx \in \mcX$, there exists a set of correct responses $\mcS^\star(\bx) \subseteq \mcY$, and the verifier asserts membership via $r^\star(\bx,\by) \triangleq \mathds{1}\big\{\by \in \mcS^\star(\bx)\big\}$. 
$\mcS^\star(\bx)$ abstracts different verifier designs depending on the task. For example, in a coding problem, $\mcS^\star(\bx)$ corresponds to all programs that pass the unit tests. For a math problem, it represents all solutions that yield a correct final answer, possibly attained by different reasoning steps or expressed in different yet mathematically equivalent forms. When using LLM-as-a-judge, $\mcS^\star(\bx)$ contains the set of responses with scores exceeding a predetermined threshold characterizing the de facto ground truth. Since all notations implicitly depend on the prompt $\bx$, we omit this dependency for brevity whenever it is unambiguous from the context.

\begin{table}[t]
\centering
\renewcommand{\arraystretch}{1.1}
\begin{tabularx}{\linewidth}{C C C C C C}
\toprule
\multirow{2}{*}{\shortstack{Metrics}} & \multicolumn{3}{c}{Sequential} & \multicolumn{2}{c}{Batched}\\
\cmidrule(lr){2-4} \cmidrule(lr){5-6}
& AiC & SRS & SMC & BoN & BRS \\
\midrule
\makecell[c]{Coverage}
  & \mbox{PI, S}
  & \mbox{T, PI, S}
  & \mbox{T, PI, S}
  & \mbox{PI, S}
  & \mbox{T, PI, S} \\
  \midrule
\makecell[c]{Comp.\\ complexity}
  & \makecell[c]{\scriptsize{$\tfrac{1}{s_{\rm ver}}$}\\ \scriptsize(\textit{Thm.}~\ref{theorem: AiC})}
  & \makecell[c]{\ms{\frac{(1\wedge m(s_{\rm ver}))}{s_{\rm ver}}}\\ \scriptsize(\textit{Thm.}~\ref{theorem: computational complexity})}
  & \makecell[c]{\ms{\frac{(1\wedge m(s_{\rm ver}))}{s_{\rm ver}}}\\ \scriptsize(\textit{Thm.}~\ref{theorem: computational complexity})}
  & \ms{N{+}1}
  & \ms{N{+}1} \\
  \midrule
\makecell[c]{Sub-\\optimality}
  & \makecell[c]{\tiny{${\rm OTC}\,(1-\tilde{\alpha}_k\,J)$}\\ \scriptsize(\textit{Thm.}~\ref{theorem: AiC})}
  & \makecell[c]{\tiny{${\rm OTC}\,(1-\alpha_k\,J)$}\\ \scriptsize(\textit{Thm.}~\ref{theorem: SRC_SMC_subopt})}
  & \makecell[c]{\tiny{${\rm OTC}\,(1-\alpha_k\,J)$}\\ \scriptsize(\textit{Thm.}~\ref{theorem: SRC_SMC_subopt})}
  & \makecell[c]{\scriptsize(\textit{Thm.}~\ref{theorem: BoN_suboptimality_approx})}
  & \makecell[c]{\scriptsize(\textit{Thm.}~\ref{theorem: BRS_suboptimality_approx})} \\
\bottomrule
\end{tabularx}
\caption{Complexity and sub-optimality across algorithms. Here, ${\rm OTC}$ is the optimal transport cost, $s_{\rm ver}$ is the generator’s mass on the verifier set, $m(s_{\rm ver})$ is the induced optimal policy’s mass on that set, $J$ is Youden’s index, and $k\in\{{\rm T},{\rm PI},{\rm S}\}$. T = transport, PI = policy improvement, S = saturation.}\label{tab:complexity_subopt_summary_compact}\vspace*{-1em}
\end{table}

\textbf{Coverage and optimal policy.} Test-time verification is a sampling problem, where the goal is to sample from a target distribution that maximizes the average reward obtained from the verifier. However, it is unrealistic to define an optimal policy that may arbitrarily deviate from the generator, since responses from such an optimal policy might not be generatable via sampling from the reference policy. Hence, following the state-of-the-art~\citep{huang2025best}, we adopt an $\ell_1$-type coverage constraint on the class of optimal policies. Specifically, we constrain the optimal policy to belong to a set of policies, which are {\em sufficiently covered} by the reference LLM, i.e. 
\begin{align}\label{eq: policy_class}
    \Pi(\beta\med\bx)\;\triangleq\;\left\{\pi(\cdot\med\bx) : \mcX\mapsto\Delta(\mcY) \;\big |\; \E_{\bY\sim\pi(\cdot\med\bx)}\left[\frac{\pi(\bY\med\bx)}{\pi_{\rm ref}(\bY\med\bx)}\right]\;\leq\;\beta \right\}\ ,
\end{align}
where $\Delta(\mcY)$ denotes the space of Borel probability measures on $\mcY$. Note that~\eqref{eq: policy_class} implies that $\chi^2(\mu\|\nu)\leq \beta - 1$ for any measure $\nu$ (induced by $\pi$), where $\chi^2(\mu\|\nu)\triangleq\int_{\mcY} (\frac{\diff \nu}{\diff \mu})^2 \mu(\diff \by) - 1$ denotes the $\chi^2$-divergence between measures $\mu$ and~$\nu$. We overload the notation $\Pi(\beta\med\bx)$ to denote both the set of conditional kernels and the set of induced probability measures satisfying the constraint. Hence, the optimal conditional kernel, or the optimal policy, is 
\begin{align}
    \pi^\star(\cdot\med\bx)\;\in\;\argsup_{\pi\in\Pi(\beta\med\bx)}\;\E_{\by\sim\pi(\cdot\med\bx)}\big[r^\star(\bx,\by)\big]\,,
\end{align}
and the corresponding measure is $\nu^\star\triangleq {\rm law}(\bZ\med\bX=\bx)$, where $\bZ\sim\pi^\star(\cdot\med\bx)$. Now, we use the binary structure of the verifier's reward to obtain the induced optimal policy in closed-form.
\begin{theorem}[Analytical Form of Optimal Policy]
\label{theorem: optimal policy}
    For any (prompt, response) pair $(\bx,\by)\in\mcX\times\mcY$, let $r(\bx,\by) \triangleq \mathds{1}\{\by\in\mcS_r(\bx)\}$ be a verifier, where $\mcS_r(\bx)\subseteq\mcY$. Further, let $\nu_r \triangleq \argsup_{\nu\in\Pi(\beta\med\bx)} \int r(\bx, \by)\diff\nu(\by|\bx)$ denote the induced optimal measure. The Radon-Nikodym derivative of the target measure $\nu_r$ with respect to the reference $\mu$, denoted by $\eta_r\triangleq \frac{\diff\nu_r}{\diff\mu}$, is
    \begin{align}
    \label{eq: RN_derivative}
    \eta_r(\by)\;\triangleq\;
    \begin{cases}
        \left(\frac{1}{s_r} \:\wedge\: \frac{m_{\beta}(s_r)}{s_r} \right), \ &\textnormal{if}\ \ \by\in\mcS_r,\\
        \left( 0\:\vee\: \frac{1-m_{\beta}(s_r)}{1-s_r}\right), \ &\textnormal{if}\ \ \by\notin\mcS_r \,,
    \end{cases}\ ,
    \end{align}
where $m_{\beta}(s) \triangleq s + \sqrt{s(1-s)(\beta-1)}$, and $s_r\triangleq \int_{\mcS_r} r\diff\mu$.
\end{theorem}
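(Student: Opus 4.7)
The plan is to reduce the constrained variational problem to a one-dimensional convex program that can be solved in closed form. First, I would introduce the Radon--Nikodym derivative $\eta \triangleq \diff\nu/\diff\mu$ (the coverage constraint forces $\nu \ll \mu$, otherwise $\chi^2(\mu\|\nu) = +\infty$). The constraint in~\eqref{eq: policy_class} then reduces to $\int \eta^2 \diff\mu \leq \beta$, complemented by $\int \eta \diff\mu = 1$ and $\eta \geq 0$ $\mu$-a.s., while the objective becomes $\int r\,\eta\diff\mu = \int_{\mcS_r}\eta\diff\mu$.

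Second, I would argue that an optimizer may be chosen constant on each block of the partition $\{\mcS_r, \mcS_r^c\}$. Given a feasible $\eta$, define $\tilde\eta \triangleq \E_\mu[\eta \mid \sigma(\mathds{1}_{\mcS_r})]$; explicitly, $\tilde\eta = s_r^{-1}\int_{\mcS_r}\eta\diff\mu$ on $\mcS_r$ and $(1-s_r)^{-1}\int_{\mcS_r^c}\eta\diff\mu$ on $\mcS_r^c$. Then $\tilde\eta\geq 0$, $\int\tilde\eta\diff\mu = 1$, and $\int r\,\tilde\eta\diff\mu = \int r\,\eta\diff\mu$ since $r$ is $\sigma(\mathds{1}_{\mcS_r})$-measurable; meanwhile the conditional Jensen inequality yields $\int \tilde\eta^2\diff\mu \leq \int \eta^2\diff\mu$, so the coverage constraint is preserved. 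Hence it suffices to optimize within the ansatz $\eta = a\mathds{1}_{\mcS_r} + b\mathds{1}_{\mcS_r^c}$ with $a,b \geq 0$.

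Third, writing $s \triangleq s_r$, the problem becomes: maximize $as$ subject to $as + b(1-s) = 1$, $a^2 s + b^2(1-s) \leq \beta$, and $a,b \geq 0$. Solving the normalization for $b = (1-as)/(1-s)$ and substituting yields the quadratic inequality $s\,a^2 - 2s\,a + 1 - \beta(1-s) \leq 0$ with discriminant $4s(1-s)(\beta-1)$. Its larger root is $a = 1 + \sqrt{(1-s)(\beta-1)/s} = m_\beta(s)/s$, so the objective-maximizing choice under the $\chi^2$-constraint alone is $a = m_\beta(s)/s$. Requiring $b \geq 0$ adds the further bound $a \leq 1/s$; combining the two gives $a^\star = (1/s)\wedge(m_\beta(s)/s)$, and back-substitution yields $b^\star = 0 \vee (1 - m_\beta(s))/(1-s)$, matching~\eqref{eq: RN_derivative}.

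The step that requires the most care is the conditional-expectation reduction: one must verify that $\tilde\eta$ remains a valid density (nonnegativity and unit mass), that the objective is genuinely invariant because $r$ is block-measurable, and that Jensen weakens rather than violates the $\chi^2$-constraint. Once this symmetrization is in place, the remainder is elementary algebra. Boundary cases ($s_r \in \{0,1\}$ or $\beta = 1$) are degenerate and consistent with the stated formula once $\wedge$ and $\vee$ are interpreted at the endpoints.
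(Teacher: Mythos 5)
Your proposal is correct, and while it lands in the same place as the paper, the logical organization is genuinely different. The paper's proof applies Cauchy--Schwarz separately on $\mcS_r$ and $\overline{\mcS_r}$ to derive the value bound $\beta \geq m^2/s_r + (1-m)^2/(1-s_r)$, solves for $m$, caps by $1$, and then exhibits the two-level density as a witness that the bound is tight. You instead symmetrize first: the conditional-Jensen step ($\tilde\eta = \E_\mu[\eta\mid\sigma(\mathds{1}_{\mcS_r})]$ preserves the objective, preserves nonnegativity and unit mass, and can only shrink the second moment) shows that the supremum over all feasible $\eta$ equals the supremum over the two-dimensional family $a\mathds{1}_{\mcS_r}+b\mathds{1}_{\mcS_r^c}$, after which the problem is a quadratic in one variable. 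The two underlying inequalities are of course the same thing, but your route has a small structural advantage: feasibility of the claimed $\eta_r$ is automatic from the substitution and the sign constraints $a,b\ge 0$, whereas the paper's construction step states the witness without explicitly re-verifying the $\chi^2$ constraint. One small remark worth adding if you wrote this out in full: since the block-constant minimizer exists and is itself in the original feasible set, the $\argsup$ in the theorem statement is actually attained, which the symmetrization argument gives you for free.
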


\textbf{Sampling as a transport problem.} Now, we cast sampling as a transport problem. Given generative access to $\mu$, the goal of a sampling algorithm is to obtain samples from $\nu^\star$ by formalizing a valid {\em transport plan} that is a coupling between $\mu$ and $\nu^\star$. We define the set of all couplings $\mcM(\mu,\nu)$ between the reference $\mu$ and any target $\nu$ as the set of all joint measures on $\mcY\times\mcY$ such that its projections on the first and second coordinates are $\mu$ and $\nu$, respectively. For any coupling $\rho(\diff\by,\diff\bz)\in\mcM(\mu,\nu)$, we assign the Hamming distance as the price to be paid for transporting $\mu$ to $\nu$ through $\rho$, i.e., $C(\rho)\triangleq \int_{\mcY\times\mcY} \mathds{1}\{\by\neq\bz\}\rho(\diff\by,\diff\bz)$. The average Hamming cost captures the fraction of rejections required to sample from the target $\nu$, and hence, comes up as a natural candidate for the transportation cost. A sampling algorithm $\mathfrak{A}$ is characterized by a coupling $\rho_{\mathfrak{A}}(\diff\by,\diff\bz)$ such that its projection on the first coordinate yields the reference law $\mu$, and we define $\nu_{\mathfrak{A}}$ as the projection of $\rho$ on the second coordinate. In order to capture the efficiency of the sampling algorithm in generating from the optimal policy $\nu^\star$, we adopt the notion of {\em sub-optimality} that assesses the change in policy performance of RL pre- and post-training~\citep{zhu2023principled, huang2025best}:
\begin{align}\label{eq:sub_opt_def}
    {\rm SubOpt}(\mathfrak{A})\;\triangleq\; \displaystyle\int r^\star(\bx,\by) \diff\nu^\star(\by\mid\bx) - \displaystyle\int r^\star(\bx,\by)  \diff\nu_{\mathfrak{A}}(\by\mid\bx)\, .
\end{align}
It immediately follows that any transport plan $\rho(\diff\by,\diff\bz)\in\mcM(\mu,\nu^\star)$ is optimal. The challenge, as depicted in Figure~\ref{fig:manifold}, is that we do not have access to $\nu^\star$, but only membership access to an {\em approximately correct verifier} $\widehat r: \mcY\times\mcY\mapsto \{0,1\}$, such that for any response $\bY\sim\pi_{\rm ref}(\cdot\med\bx)$ generated for a prompt $\bx\in\mcX$, an approximately correct reward signal $\widehat r(\bx,\by) = \mathds{1}\{\by\notin\widehat\mcS\}$ is available to the sampling algorithm for some $\widehat\mcS\subseteq\mcY$. The optimal policy, which lies within a $\chi^2$-ball of radius $\beta-1$ from $\pi_{\rm ref}$, induces the maximal reward on the manifold induced by $r^\star$. Given access to $\widehat r$, the sampling algorithm's distribution $\nu_{\mathfrak{A}}$ should also satisfy the $\chi^2$-constraint. Naturally, the approximation quality of the verifier should affect the sampling performance. To formalize this, we define the true positive rate (TPR), false positive rate (FPR), and Youden's index $J$~\citep{youden1950index} of the imperfect verifier as
\begin{align}\label{eq: tpr_fpr_j}
    {\rm TPR}\;\triangleq\;\frac{1}{s_{r^\star}}\mu\big(\widehat\mcS\cap\mcS\big)\ ,\quad{\rm FPR}\;\triangleq\;\frac{1}{1-s_{r^\star}}\mu\big(\widehat\mcS\setminus\mcS\big)\ ,\quad\text{and}\quad J\;\triangleq\; {\rm TPR} - {\rm FPR}\ .
\end{align}
These are the standard quantifiers of the goodness of binary classifiers~\citep{kumari2017machine,santos2019accuracy}, or equivalently, the power of binary hypothesis tests~\citep{li2020statistical}.
\section{Algorithms \& Analyses: Sequential and Batched Sampling}
\label{sec: algorithms}

\begin{figure}[t!]
    \centering
    \begin{subfigure}[b]{0.49\textwidth}
        \usetikzlibrary{shapes.geometric, arrows.meta, calc}

\tikzstyle{startstop} = [rectangle, 
minimum width=.8cm, 
minimum height=0.8cm,
text centered, 
draw=black, 
]

\tikzstyle{arrow} = [semithick,->,>=stealth]

\begin{tikzpicture}[node distance=3.cm]

\node (gen) [startstop] {$G$};

\node[
  draw,
  rectangle,
  minimum width=1.5cm,
  minimum height=.8cm,
  right of=gen,
  fill=blue!10
] (sampler) {};

\coordinate (in)   at ([xshift=-.5cm]sampler.center);     
\coordinate (out1) at ([xshift= .25cm,yshift= 0.25cm]sampler.center); 
\coordinate (out2) at ([xshift= .25cm,yshift=-0.25cm]sampler.center); 

\draw[semithick] (in) -- ([xshift= .25cm,yshift=0.125cm]sampler.center);
\draw [arrow, dashed] (out2) -- ++(0,-0.5) -| (gen.south)
    node[below] at (1.75,-0.7) {\small resample};
\draw [arrow] (out1) -- ([xshift= -.2cm, yshift=0.25cm]sampler.east) -- ([xshift=-0.2cm]sampler.east) -- ([xshift=.7cm]sampler.east)
    node[midway, above] {}
    node[midway, below] {$\bz$};
\draw [draw=white] (gen) -- (sampler)
    node[midway, above] {\small response}   
    node[midway, below] {$\by$}; 
\draw [arrow] (gen.east) -- (in);
\node[circle, draw, fill=white, inner sep=1.pt] at (in)   {};
\node[circle, draw, fill=white, inner sep=1.pt] at (out1) {};
\node[circle, draw, fill=white, inner sep=1.pt] at (out2) {};
\draw[-{Stealth[length=1.5mm]}] ([xshift= -.15cm,yshift=0.26cm]sampler.center)
      arc[start angle=30,end angle=-30,radius=5mm];

\draw [arrow] ($(gen.west)+(-1.4,0)$) -- (gen.west)
    node[midway, above] {\small prompt}   
    node[midway, below] {$\bx$}; 


\end{tikzpicture}
     \end{subfigure}
    \begin{subfigure}[b]{0.49\textwidth}
        \centering
        \usetikzlibrary{shapes.geometric, arrows, calc}

\tikzstyle{startstop} = [rectangle, 
minimum width=.8cm, 
minimum height=0.8cm,
text centered, 
draw=black, 
]

\tikzstyle{arrow} = [semithick,->,>=stealth]

\begin{tikzpicture}[node distance=3.cm]

\node (gen) [startstop] {$G$};

\node[
  draw,
  rectangle,
  minimum width=1.5cm,
  minimum height=.8cm,
  right of=gen,
  fill=blue!10
] (sampler) {};

\coordinate (in1)   at ([xshift=-.5cm, yshift=.3cm]sampler.center);     
\coordinate (in2)   at ([xshift=-.5cm, yshift=0.15cm]sampler.center);     
\coordinate (in3)   at ([xshift=-.5cm, yshift=-0.3cm]sampler.center);     
\coordinate (out) at ([xshift=0.25cm]sampler.center); 

\draw[semithick] ([xshift= -.55cm,yshift=-0.125cm]sampler.center) -- (out);
\draw [arrow, dashed, draw=white, opacity=0] (out) -- ++(0,-0.72) -| (gen.south)
    node[below] at (1.75,-0.7) {\small \textcolor{white}{resample}};
\draw [arrow] (out) -- ([xshift=-0.2cm]sampler.east) -- ([xshift=.7cm]sampler.east)
    node[midway, above] {}
    node[midway, below] {$\bz$};

\draw [arrow] ([yshift=0.3cm]gen.east) -- (in1) node[midway, above] {$\by_1$};
\draw [arrow] ([yshift=0.15cm]gen.east) -- (in2);
\draw [arrow] ([yshift=0.-.3cm]gen.east) -- (in3) node[midway, below] {$\by_{N+1}$};
\node at (1.4,0.05) {\footnotesize $\vdots$};

\node[circle, draw, fill=white, inner sep=1.pt] at (in1)   {};
\node[circle, draw, fill=white, inner sep=1.pt] at (in2)   {};
\node[circle, draw, fill=white, inner sep=1.pt] at (in3)   {};
\node[circle, draw, fill=white, inner sep=1.pt] at (out) {};
\draw[-{Stealth[length=1.5mm]}] ([xshift= -.15cm,yshift=0.15cm]sampler.center)
      arc[start angle=30,end angle=-30,radius=5mm];

\draw [arrow] ($(gen.west)+(-1.4,0)$) -- (gen.west)
    node[midway, above] {\small prompt}   
    node[midway, below] {$\bx$}; 


\end{tikzpicture}
     \end{subfigure}
     \caption{Sequential (\textit{left}) and batched (\textit{right}) sampling protocols of test-time verification with a generator $G$ and a verifier (light purple box).}\label{fig: protocols}\vspace*{-1em}
\end{figure}
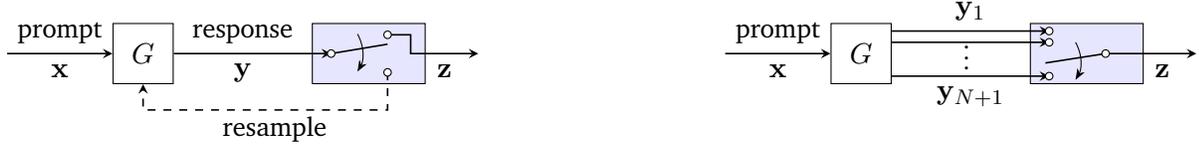

Now, we study different test-time verification algorithms as transport plans, and analyze their achieved sub-optimality and other properties. Depending on how the sampling algorithm interacts with the generator, we study two protocols: {\em sequential} and {\em batched}, as illustrated in Figure~\ref{fig: protocols}.
\begin{itemize}[leftmargin=*]
\item \textbf{Sequential sampling protocol:} Generation is modeled as a sequential decision process. Given a prompt $\bx \in \mcX$, at each round $n \in \N$, the generator produces a response $\bY_n \sim \pi_{\rm ref}(\cdot \mid \bx)$. The sampling algorithm $\mathfrak{A}$ observes the history $Y^n \triangleq (\bY_1, \ldots, \bY_n)$, and based on verifier feedback, issues a decision $\delta_n^{\mathfrak{A}}: Y^n \mapsto \{\text{accept}, \text{reject}\}$. If a response is accepted, the algorithm stops and outputs it. Otherwise, it queries the generator for another sample. The instant at which $\tau_{\mathfrak{A}}$ stops sampling is referred as the {\em stopping time},
$
    \tau_{\mathfrak{A}}\;\triangleq\;\inf\big\{n\in\N\: :\: \delta_n^{\mathfrak{A}}(Y^n)\:=\:{\rm accept} \big\}
$.
\item \textbf{Batched sampling protocol:} In the batched setting, following~\citet{huang2025best}, the generator produces $N+1$ independent responses in parallel. The sampling algorithm inspects any $N$ of them, using the verifier to identify a candidate. If none are accepted, the algorithm defaults to returning {\em any one} of the $(N+1)$ responses.
\end{itemize}

To evaluate sampling algorithms, we consider two key metrics: the \textbf{number of proposals} (\textit{computational efficiency}), and the algorithm’s \textbf{sub-optimality} (\textit{performance efficiency}). In the sequential setting, computational efficiency is measured by the expected number of proposals $\E[\tau_{\mathfrak{A}}]$, while sub-optimality is defined as in Equation~\eqref{eq:sub_opt_def}. There is a natural tension between these objectives: drawing more samples may reduce sub-optimality, albeit, at the expense of larger computation. In the batched setting, the computational budget is fixed at $N+1$ samples. Hence, performance is evaluated solely through sub-optimality. In what follows, we introduce a range of sampling algorithms under both protocols and analyze their performance with respect to these metrics.  

\subsection{Sequential Sampling Algorithms: AiC, SRS, and SMC}
\label{subsec: sequential}
We present three algorithms for sequential protocol: (1) a naïve AiC algorithm asserting membership of the generated response through the approximate verifier, (2) an SRS algorithm which has a mechanism of accepting a sample even if its set-membership assertion fails, and (3) an SMC algorithm derived by optimizing the transport cost. For brevity, we defer the pseudo-codes to Appendix~\ref{appendix: algorithm pseudo-codes}. 

Central to analyzing these algorithms is the {\em optimal (Hamming) transport cost} (OTC), which is the minimum probability of rejections required to transport the reference law $\mu$ to the target $\nu^\star$, and is defined as 
\begin{align}
    {\rm OTC}\;\triangleq\;\min\limits_{\rho\in\mcM(\mu,\nu^\star)}\;C(\rho)\ .
\end{align}
The following lemma provides a closed-form for OTC.
\begin{lemma}[Optimal Transport Cost for Hamming Distance]
\label{lemma: OHC}
Given Hamming cost $c(\by,\bz) \triangleq \mathds{1}\{\by\neq\bz\}$, we have 
\begin{align}
    {\rm OTC}(\beta)\;=\;\big( 1\wedge m_{\beta}(s_{r^\star})\big) - s_{r^\star}\ .
\end{align}
\end{lemma}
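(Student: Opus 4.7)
My plan is to reduce the optimal Hamming transport problem to computing the total variation distance between $\mu$ and $\nu^\star$, and then evaluate that distance explicitly using the Radon--Nikodym derivative supplied by Theorem~\ref{theorem: optimal policy}.

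First, I would invoke the classical maximal-coupling characterization of total variation: for any two Borel probability measures on a Polish space,
\[
\min_{\rho\in\mcM(\mu,\nu^\star)}\int \mathds{1}\{\by\neq\bz\}\,\rho(\diff\by,\diff\bz)\;=\;\mathrm{TV}(\mu,\nu^\star).
\]
The ``$\le$'' direction is witnessed by the coupling that places mass $\mu\wedge\nu^\star$ on the diagonal and couples the residuals $\mu-(\mu\wedge\nu^\star)$ and $\nu^\star-(\mu\wedge\nu^\star)$ in product form. The matching ``$\ge$'' direction follows since for every Borel set $A$ and any coupling, $\rho(\bY=\bZ)\le \mu(A)\wedge\nu^\star(A)+\mu(A^c)\wedge\nu^\star(A^c)\le 1-\mathrm{TV}(\mu,\nu^\star)$, hence $C(\rho)=1-\rho(\bY=\bZ)\ge \mathrm{TV}(\mu,\nu^\star)$.

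Second, since Theorem~\ref{theorem: optimal policy} provides $\nu^\star\ll\mu$ with a piecewise-constant density $\eta_{r^\star}$, I would compute
\[
\mathrm{TV}(\mu,\nu^\star)\;=\;\tfrac12\int|\eta_{r^\star}(\by)-1|\,\mu(\diff\by)
\]
by splitting into the two regimes dictated by the min/max structure of $\eta_{r^\star}$. When $m_\beta(s_{r^\star})\le 1$, the density equals $m_\beta(s_{r^\star})/s_{r^\star}\ge 1$ on $\mcS^\star$ and $(1-m_\beta(s_{r^\star}))/(1-s_{r^\star})\le 1$ on its complement; each of the two integrals of $|\eta_{r^\star}-1|$ evaluates to $m_\beta(s_{r^\star})-s_{r^\star}$, so after halving the sum, $\mathrm{TV}=m_\beta(s_{r^\star})-s_{r^\star}$. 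When $m_\beta(s_{r^\star})>1$, the density collapses to $1/s_{r^\star}$ on $\mcS^\star$ and $0$ on the complement, yielding $\mathrm{TV}=1-s_{r^\star}$. Merging the two cases recovers $(1\wedge m_\beta(s_{r^\star}))-s_{r^\star}$, as claimed.

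The main obstacle is really just careful bookkeeping in the case split: one checks that the two branches agree at the boundary $m_\beta(s_{r^\star})=1$, and that the resulting OTC stays nonnegative, which is immediate from $m_\beta(s_{r^\star})\ge s_{r^\star}$ (since $\sqrt{s(1-s)(\beta-1)}\ge 0$). The structural content is entirely in Step~1; Step~2 is a direct integration against the density already supplied by Theorem~\ref{theorem: optimal policy}.
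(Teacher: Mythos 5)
Your proof is correct and follows essentially the same route as the paper: reduce the optimal Hamming transport cost to the total-variation distance between $\mu$ and $\nu^\star$, then evaluate that TV distance by integrating against the Radon--Nikodym derivative from Theorem~\ref{theorem: optimal policy}, splitting on whether $m_\beta(s_{r^\star})\le 1$ or $>1$. The only difference is cosmetic: the paper cites the TV/coupling identity (Villani) outright, while you sketch both directions; note that in your ``$\ge$'' direction the chain $\mu(A)\wedge\nu^\star(A)+\mu(A^c)\wedge\nu^\star(A^c)\le 1-\mathrm{TV}(\mu,\nu^\star)$ does not hold for \emph{every} $A$ (that sum equals $1-|\mu(A)-\nu^\star(A)|$), so you should instead take the infimum over $A$, or a set attaining the TV supremum, to close the bound.
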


\textbf{Accept-if-correct (AiC)(Algorithm~\ref{alg:aic}).} The AiC algorithm, proposed by~\citep{dorner2025rocnrerollverifierimperfectionaffects} is an extension of the BoN sampling strategy to the sequential setting. Given the (approximate) verifier through the set-membership oracle $\widehat\mcS$, at each time $n\in\N$, AiC samples a response $\bY_n\sim\mu$, asserts its membership in $\widehat\mcS$, and resamples if the assertion fails. Noticeably, AiC is not cognizant of the policy coverage bound $\beta$. Hence, as we show subsequently, this results in constraint violation in certain coverage regimes. In the following theorem, we characterize the two key properties of AiC, i.e., the average number of proposals, and sub-optimality.  
\begin{theorem}[Computational Complexity and Sub-optimality of AiC]
\label{theorem: AiC}
    The AiC sampling algorithm satisfies the following properties.
    \begin{enumerate}
    \item The computational complexity of AiC is $\E[\tau_{\rm AiC}] = \frac{1}{s_{\rm ver}}$.
    \item The sub-optimality of AiC is 
        \begin{align}
            {\rm SubOpt}({\rm AiC})\;=\;
            \begin{cases}
                {\rm OTC}(\beta)\cdot\left( 1-\frac{s_{r^\star}}{s_{\rm ver}}\cdot J\right), \ &\textnormal{if}\ \ \beta > \frac{1}{s_{r^\star}},\\
                {\rm OTC}(\beta)\cdot\left( 1- \frac{1}{s_{\rm ver}}\sqrt{\frac{s_{r^\star}(1-s_{r^\star})}{\beta - 1}}\cdot J\right), \ &\textnormal{if}\ \  \beta\leq\frac{1}{s_{r^\star}} \ .
            \end{cases}
        \end{align}
    Here, $s_{\rm ver}\triangleq s_{r^\star}\cdot {\rm TPR} + (1-s_{r^\star})\cdot {\rm FPR}\ .$
    \end{enumerate}
\end{theorem}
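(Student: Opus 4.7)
Both parts follow from a single structural observation: AiC is memoryless. At every round $n$ it draws $\bY_n\sim\mu$ independently and stops iff $\bY_n\in\widehat\mcS$. Hence the stopping time $\tau_{\rm AiC}$ is geometrically distributed with per-round success probability $\mu(\widehat\mcS)$, and the accepted sample is distributed as $\mu$ conditioned on $\widehat\mcS$, i.e., $\nu_{\rm AiC}(A)=\mu(A\cap\widehat\mcS)/\mu(\widehat\mcS)$ for every $A\in\mathfrak{B}(\mcY)$. The rest is bookkeeping against the TPR/FPR identities.

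\emph{Part~(1) (complexity).} I would split $\widehat\mcS=(\widehat\mcS\cap\mcS)\cup(\widehat\mcS\setminus\mcS)$ and invoke the definitions in~\eqref{eq: tpr_fpr_j} to obtain $\mu(\widehat\mcS)=s_{r^\star}\cdot{\rm TPR}+(1-s_{r^\star})\cdot{\rm FPR}=s_{\rm ver}$, from which $\E[\tau_{\rm AiC}]=1/s_{\rm ver}$ is immediate.

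\emph{Part~(2) (sub-optimality).} The same decomposition gives $\int r^\star\diff\nu_{\rm AiC}=\nu_{\rm AiC}(\mcS)=s_{r^\star}\cdot{\rm TPR}/s_{\rm ver}$. For the target term I would apply Theorem~\ref{theorem: optimal policy} with $(r,\mcS_r,s_r)=(r^\star,\mcS,s_{r^\star})$: a one-line algebraic check on $m_\beta$ shows that $m_\beta(s_{r^\star})>1$ iff $\beta>1/s_{r^\star}$, which selects which branch of the Radon--Nikodym derivative~\eqref{eq: RN_derivative} is active and yields $\int r^\star\diff\nu^\star=1\wedge m_\beta(s_{r^\star})$. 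Lemma~\ref{lemma: OHC} then supplies ${\rm OTC}(\beta)=1-s_{r^\star}$ when $\beta>1/s_{r^\star}$ and ${\rm OTC}(\beta)=\sqrt{s_{r^\star}(1-s_{r^\star})(\beta-1)}$ otherwise.

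\emph{Assembly.} Substituting the two ingredients into ${\rm SubOpt}({\rm AiC})=\int r^\star\diff\nu^\star-s_{r^\star}\cdot{\rm TPR}/s_{\rm ver}$ and matching the claimed form reduces to one identity per regime, both immediate from $s_{\rm ver}=s_{r^\star}\cdot{\rm TPR}+(1-s_{r^\star})\cdot{\rm FPR}$ and $J={\rm TPR}-{\rm FPR}$. In the $\beta>1/s_{r^\star}$ regime, the identity $s_{\rm ver}-s_{r^\star}\cdot{\rm TPR}=(1-s_{r^\star})\cdot{\rm FPR}$ factors $(1-s_{r^\star})={\rm OTC}(\beta)$ out of the difference and exposes the multiplicative correction $1-(s_{r^\star}/s_{\rm ver})\cdot J$. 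In the $\beta\leq 1/s_{r^\star}$ regime, writing $m_\beta(s_{r^\star})=s_{r^\star}+{\rm OTC}(\beta)$ and using $s_{\rm ver}-{\rm TPR}=-(1-s_{r^\star})\cdot J$ produces a residual of $-s_{r^\star}(1-s_{r^\star})\cdot J/s_{\rm ver}$; dividing through by ${\rm OTC}(\beta)$ and recognising that $s_{r^\star}(1-s_{r^\star})/{\rm OTC}(\beta)=\sqrt{s_{r^\star}(1-s_{r^\star})/(\beta-1)}$ exposes the claimed factor multiplying $J$. There is no genuine obstacle here --- the only care required is tracking which branch of the $1\wedge$ (equivalently $0\vee$) in~\eqref{eq: RN_derivative} is active, and the two regimes of the theorem coincide exactly with this branching.
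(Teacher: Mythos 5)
Your proposal is correct and matches the paper's argument in all essentials: the geometric-stopping-time observation for Part~(1), the identification $\nu_{\rm AiC}=\mu(\cdot\mid\widehat\mcS)$ hence $\nu_{\rm AiC}(\mcS^\star)=s_{r^\star}{\rm TPR}/s_{\rm ver}$, reading off $\nu^\star(\mcS^\star)=1\wedge m_\beta(s_{r^\star})$ from Theorem~\ref{theorem: optimal policy}, and the case split on $\beta\lessgtr 1/s_{r^\star}$ governed by Lemma~\ref{lemma: OHC}. The only cosmetic difference is that the paper first adds and subtracts $\mu(\mcS^\star)$ to exhibit sub-optimality as ${\rm OHC}(\beta)$ minus a regime-independent policy-improvement term $\tfrac{1}{s_{\rm ver}}s_{r^\star}(1-s_{r^\star})J$ before specializing, whereas you perform the equivalent algebra (via $s_{\rm ver}-{\rm TPR}=-(1-s_{r^\star})J$) directly within each regime; the two routes are term-for-term the same.
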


Theorem~\ref{theorem: AiC} shows that AiC’s sub-optimality depends linearly on two quantities: (a) the optimal transport cost (OTC), and (b) Youden’s index $J$. A smaller Youden’s index --- corresponding to a verifier closer to random guessing --- yields higher sub-optimality. Since AiC ignores the coverage constraint, its sub-optimality is not comparable uniformly over all coverage regimes. As evident from the next theorem, AiC fails to satisfy the coverage requirement in low-coverage regimes, thereby incurring constraint violations.
\begin{theorem}[Constraint Violation of AiC]
\label{theorem: AiC_constraint_violation}
    Given any prompt $\bx\in\mcX$, AiC policy $\pi_{\rm AiC}(\cdot\med\bx)$ does not satisfy the coverage constraint for $\beta < \frac{1}{s_{\rm ver}}$, i.e., $\pi_{\rm AiC}(\cdot\med\bx)\notin \Pi(\beta\med\bx)$ for all $\beta < \frac{1}{s_{\rm ver}}$.
\end{theorem}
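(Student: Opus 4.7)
The plan is to identify the measure induced by AiC in closed form and then verify directly that the coverage statistic exceeds $\beta$ whenever $\beta<1/s_{\rm ver}$. Because the claim is a single inequality, the proof should be short once the AiC kernel is written explicitly.

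First, I would characterize $\pi_{\rm AiC}(\cdot\med\bx)$. In the sequential protocol, AiC draws i.i.d.\ proposals $\bY_n\sim\pi_{\rm ref}(\cdot\med\bx)$ and accepts the first one satisfying $\bY_n\in\widehat\mcS$. A standard geometric-trials argument, obtained by conditioning on $\{\tau_{\rm AiC}=n\}$ and invoking the independence of the proposals, shows that the accepted response has the same law as $\mu$ conditioned on $\widehat\mcS$. Equivalently, the Radon--Nikodym derivative is
\begin{align*}
\frac{\diff\pi_{\rm AiC}}{\diff\mu}(\by)\;=\;\frac{\mathds{1}\{\by\in\widehat\mcS\}}{s_{\rm ver}}\,,
\end{align*}
where $s_{\rm ver}=\mu(\widehat\mcS)$.

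Next, I would evaluate the coverage functional appearing in the definition of $\Pi(\beta\med\bx)$ in~\eqref{eq: policy_class}. Since $\mu$ is the law induced by $\pi_{\rm ref}$, a direct calculation gives
\begin{align*}
\E_{\bY\sim\pi_{\rm AiC}}\!\left[\frac{\pi_{\rm AiC}(\bY\med\bx)}{\pi_{\rm ref}(\bY\med\bx)}\right]\;=\;\int\!\left(\frac{\diff\pi_{\rm AiC}}{\diff\mu}\right)^{\!2}\!\diff\mu\;=\;\frac{\mu(\widehat\mcS)}{s_{\rm ver}^{2}}\;=\;\frac{1}{s_{\rm ver}}\,.
\end{align*}
Therefore membership $\pi_{\rm AiC}\in\Pi(\beta\med\bx)$ requires $\beta\ge 1/s_{\rm ver}$, and whenever $\beta<1/s_{\rm ver}$ the constraint is strictly violated, which is exactly the statement of the theorem.

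I do not anticipate any serious obstacle. The only step that requires a mild justification is identifying the AiC law with the normalized restriction of $\mu$ to $\widehat\mcS$, which is the classical rejection-sampling fact and can be made rigorous by expanding the acceptance probability as a geometric series over the stopping time. Everything else collapses to a one-line computation. I note in passing that the argument is agnostic to the structure of $\widehat\mcS$ and only uses $s_{\rm ver}>0$; otherwise $\tau_{\rm AiC}=\infty$ almost surely and the AiC policy is undefined, so the case $s_{\rm ver}=0$ can be excluded without loss of generality.
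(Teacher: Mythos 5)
Your proof is correct and follows essentially the same route as the paper: identify the AiC sampling law as $\mu$ conditioned on $\widehat\mcS$ with Radon--Nikodym derivative $\mathds{1}\{\cdot\in\widehat\mcS\}/s_{\rm ver}$, compute the coverage statistic to obtain $1/s_{\rm ver}$ (equivalently, $\chi^2 = 1/s_{\rm ver}-1$ as the paper writes it), and conclude that membership in $\Pi(\beta\mid\bx)$ forces $\beta\ge 1/s_{\rm ver}$. The only cosmetic difference is that the paper phrases the bound via the $\chi^2$-divergence while you work directly with the $\E[\pi/\pi_{\rm ref}]$ form of the constraint; these are trivially the same.
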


\textbf{Sequential rejection sampling (SRS, Algorithm~\ref{alg:srs}).} To circumvent AiC's lack of coverage, we propose a rejection sampling (RS)-based algorithm, which is cognizant of the coverage constraint.

Canonical RS~\citep{RS_Forsythe, RS_neal2003slice} evaluates a scaled likelihood ratio against a uniform random variable to determine sample acceptance, essentially flipping a Bernoulli coin where the scaling factor, known as the envelope, dictates the acceptance probability. However, our context lacks the target-to-proposal likelihood ratio $\eta_{r^\star}$, since $\mcS^\star$ is unknown and the sampling algorithm only has access to an approximate membership oracle $\widehat\mcS$. We therefore introduce SRS, which substitutes $\eta_{\widehat r}$, obtained by replacing $s_{r^\star}$ in Equation~\eqref{eq: RN_derivative} with $s_{\widehat r}$. While $s_{\widehat r}$ is computable in principle, it may not be accessible at test time. In Section~\ref{sec: experiments}, we treat $s$ as a tunable hyperparameter with ablations across multiple models. Our theoretical analyses, however, assumes that the mass $s_{\widehat r}$ --- the reference policy’s probability mass on the verifier’s set $\widehat\mcS$ --- is available to the sampling algorithm. While seemingly strong, this assumption enables a fundamental characterization of how the verifier’s ROC influences sub-optimality. Note that by our construction, SRS {\em always} satisfies the coverage constraint. Performance analysis of SRS is presented jointly with our next algorithm, SMC, to facilitate direct comparison and maintain brevity.

\textbf{Sequential maximal coupling (SMC, Algorithm~\ref{alg:smc}).} {\em Maximal coupling} (MC) is a canonical technique for constructing optimal transport maps~\citep{den2012probability}. The goal is to find a joint distribution $\rho^\star$ that {\em minimizes} the transport cost, i.e., $\rho^\star \in \argmin_{\rho\in\mcM(\mu,\nu^\star)} C(\rho)$. Under the Hamming cost, this amounts to minimizing the rejection probability, suggesting the potential to improve computational efficiency. The MC algorithm in this setting is well studied: the generator first produces a sample, which is evaluated by the sampling algorithm. The algorithm compares the likelihood ratio at this sample against a uniform random draw. If the ratio exceeds the threshold, the sample is accepted. Otherwise, MC samples from a {\em residual measure} as a correction. Consequently, MC requires {\em at most two proposals} to produce a valid sample from the target distribution.

In the test-time setting, however, the sampling algorithm \textbf{lacks} access to samples from the residual measure, making a direct application of MC infeasible. Nevertheless, we identify an alternative representation of the residual that \emph{is} generatable, as formalized in the following lemma.

\begin{lemma}[Residual Measure]
\label{theorem: residual}
    Given a proposal measure $\mu$ and a target measure $\nu$ on $\mcY$ induced by a verifiable reward $r$ with a membership oracle $\mcS$, the residual distribution for MC, defined as $\mu_{\rm res}\triangleq (\nu-(\mu\wedge\nu))/(1-(\mu\wedge\nu)(\mcY))$, can be equivalently characterized as $\mu_{\rm res} = \mu(\cdot\med\mcS)$, where we have defined the conditional measure $\mu(\cdot\med\mcS) \triangleq \frac{\mu(\cdot\cap\mcS)}{\mu(\mcS)}$.
\end{lemma}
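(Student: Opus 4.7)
The plan is to unpack the optimal-policy density $\eta_r = \diff\nu/\diff\mu$ given by Theorem~\ref{theorem: optimal policy}, then compute $\mu\wedge\nu$, $\nu-(\mu\wedge\nu)$, and the normalizer directly, and finally observe that the resulting measure concentrates on $\mcS$ with density $1/s_r$ with respect to $\mu$, which is exactly $\mu(\cdot\med\mcS)$.

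First I would verify the key inequality $\eta_r\geq 1$ on $\mcS$ and $\eta_r\leq 1$ on $\mcS^c$. On $\mcS$, $\eta_r = (1\wedge m_\beta(s_r))/s_r$, and since $m_\beta(s_r) = s_r + \sqrt{s_r(1-s_r)(\beta-1)}\geq s_r$ and $1\geq s_r$, we get $\eta_r\geq 1$. On $\mcS^c$, $\eta_r = (0\vee(1-m_\beta(s_r)))/(1-s_r)\leq 1$, since $1-m_\beta(s_r)\leq 1-s_r$. Since $\nu\ll\mu$, the minimum measure admits the closed form $\diff(\mu\wedge\nu)/\diff\mu = 1\wedge\eta_r$, which from the previous step equals $\mathds{1}_{\mcS} + \eta_r\,\mathds{1}_{\mcS^c}$.

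Next I would subtract: $\nu - (\mu\wedge\nu)$ has Radon--Nikodym derivative $\eta_r - (1\wedge\eta_r)$ with respect to $\mu$, which vanishes on $\mcS^c$ and equals $\eta_r-1 = ((1\wedge m_\beta(s_r))-s_r)/s_r$ on $\mcS$. Integrating over $\mcY$ (equivalently, over $\mcS$, which has $\mu$-mass $s_r$) gives the total mass $(1\wedge m_\beta(s_r)) - s_r$. The normalizer $1-(\mu\wedge\nu)(\mcY) = 1 - s_r - (0\vee(1-m_\beta(s_r))) = (1\wedge m_\beta(s_r)) - s_r$ is identical, independently of whether $m_\beta(s_r)\lessgtr 1$.

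Hence $\diff\mu_{\rm res}/\diff\mu = \mathds{1}_{\mcS}/s_r$, giving $\mu_{\rm res}(A) = \mu(A\cap\mcS)/\mu(\mcS) = \mu(A\med\mcS)$ for every $A\in\mathfrak{B}(\mcY)$. The only subtlety is that the expressions for $\eta_r$ and for the normalizer both branch at $m_\beta(s_r)=1$ (i.e., $\beta = 1/s_r$); the main thing to check is that the two case-splits cancel uniformly, which they do because the numerator and denominator in $\diff\mu_{\rm res}/\diff\mu$ carry the same factor $(1\wedge m_\beta(s_r)) - s_r$. I expect this branching bookkeeping to be the only real obstacle, and writing the derivative w.r.t. $\mu$ (rather than w.r.t. an abstract dominating measure) keeps it transparent.
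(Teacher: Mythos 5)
Your argument is correct and follows essentially the same route as the paper's proof: compute $\diff(\mu\wedge\nu)/\diff\mu = 1\wedge\eta_r$, observe that the numerator $\nu - (\mu\wedge\nu)$ lives only on $\mcS$ with constant density $\eta_r - 1$ there, verify the normalizer equals the same quantity $(1\wedge m_\beta(s_r)) - s_r$, and cancel. Your handling of the $m_\beta(s_r)\lessgtr 1$ branching via $1\wedge m_\beta(s_r)$ is a bit more uniform than the paper's, which states one case explicitly and defers the other, but the decomposition and the key cancellation are the same.
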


Leveraging Lemma~\ref{theorem: residual}, we now extend canonical MC to a sequential protocol, and propose SMC. We start similarly to MC, i.e., drawing a response and a uniform number, and then comparing the likelihood ratio of the obtained sample to the uniform random realization. If the ratio exceeds the uniform number, SMC accepts the sample. Otherwise, SMC keeps drawing samples from $\mu$ until the generated sample asserts the set-membership verification rather than sampling from the residual. Evidently, not having access to a residual measure imbibes a computational price to mimic sampling from the target measure. In the following theorem, we characterize the computational complexities of both SRS and SMC algorithms, and find that they require the {\em same} average number of proposals.
\begin{theorem}[Computational Complexity of SRS and SMC]
\label{theorem: computational complexity}
    Let $M\triangleq \max\big\{(\frac{1}{s_{\rm ver}}\wedge\frac{m_{\beta}(s_{\rm ver})}{s_{\rm ver}})\mathrel{, } (0\vee \frac{1-m_{\beta}(s_{\rm ver})}{1-s_{\rm ver}})\big\}$ for SRS. For both algorithms $\mathfrak{A} \in \{\mathrm{SRS}, \mathrm{SMC}\}$, the computational complexity is identical, and given by $\E[\tau_{\rm \mathfrak{A}}] = \frac{1}{s_{\rm ver}}\big(1\wedge m_{\beta}(s_{\rm ver})\big)$\ .
\end{theorem}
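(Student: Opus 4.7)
I would establish the two expected stopping times separately and observe that they agree by direct algebra. The unifying observation is that $m_\beta(s_{\rm ver}) \ge s_{\rm ver}$ (immediate from $m_\beta(s) = s + \sqrt{s(1-s)(\beta-1)}$), which pins down the structure of the Radon--Nikodym derivative $\eta_{\widehat r}$ obtained from Theorem~\ref{theorem: optimal policy} upon substituting $s_r \leftarrow s_{\widehat r} = s_{\rm ver}$: $\eta_{\widehat r} \ge 1$ on $\widehat\mcS$ and $\eta_{\widehat r} \le 1$ off $\widehat\mcS$ in every regime of $\beta$.

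\textbf{SRS.} SRS is a textbook rejection sampler with target $\nu_{\widehat r}$ and envelope $M$ as defined in the statement. By the structural observation above, the ``in-$\widehat\mcS$'' value $(1 \wedge m_\beta(s_{\rm ver}))/s_{\rm ver}$ always dominates the ``off-$\widehat\mcS$'' value $0 \vee (1 - m_\beta(s_{\rm ver}))/(1 - s_{\rm ver})$, so $M = (1 \wedge m_\beta(s_{\rm ver}))/s_{\rm ver}$. The per-trial acceptance probability is then $\int (\eta_{\widehat r}/M)\,\diff\mu = 1/M$, making $\tau_{\rm SRS}$ geometric with mean $M$, which yields the claim.

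\textbf{SMC.} The first proposal is always drawn and is accepted with probability $p_1 \triangleq (\mu \wedge \nu_{\widehat r})(\mcY) = \int \min\{1, \eta_{\widehat r}\}\,\diff\mu$. Upon rejection, Lemma~\ref{theorem: residual} identifies the residual $\mu_{\rm res}$ with $\mu(\cdot \med \widehat\mcS)$, which SMC emulates by i.i.d.\ resampling from $\mu$ until hitting $\widehat\mcS$ --- a geometric process of mean $1/s_{\rm ver}$. Hence
\begin{align*}
    \E[\tau_{\rm SMC}] \;=\; 1 + (1 - p_1)\cdot\tfrac{1}{s_{\rm ver}}.
\end{align*}
A short case split on whether $m_\beta(s_{\rm ver}) \ge 1$ or $< 1$ evaluates $p_1 = s_{\rm ver} + (1 - m_\beta(s_{\rm ver}))^+$; substituting back recovers $(1 \wedge m_\beta(s_{\rm ver}))/s_{\rm ver}$ in both cases, matching SRS.

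\textbf{Main obstacle.} The arithmetic itself is routine; the only point worth emphasising is \emph{why} two algorithms derived from opposite design principles (envelope-based rejection against a scaled density vs.\ maximal coupling with a conditional residual) produce the same expected compute. This is enforced by the algebraic identity $p_1 + M\,s_{\rm ver} = 1 + s_{\rm ver}$, valid in both regimes: any compute gained by SMC's higher first-proposal acceptance rate is exactly offset by the geometric resampling price paid to emulate the residual in the second phase.
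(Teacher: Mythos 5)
Your proof is correct and follows essentially the same route as the paper's: compute the per-trial acceptance probability $1/M$ for SRS directly, and for SMC condition on the first draw being accepted (probability $p_1 = s_{\rm ver} + (1-m_\beta(s_{\rm ver}))^+$) versus falling into the geometric resampling phase with mean $1/s_{\rm ver}$. The closing identity $p_1 + M\,s_{\rm ver} = 1 + s_{\rm ver}$ is a clean way to explain the coincidence that the paper establishes only by evaluating both sides; it is a nice observation but does not change the argument.
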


Note that the computational complexity of SRS and SMC improves upon AiC by a factor of $m_{\beta}(s_{\rm ver})$. Under liberal coverage constraints, where $m_{\beta}(s_{\rm ver}) = 1$, their complexity coincides with that of AiC. In contrast, under more stringent coverage, SRS and SMC achieve a computational speed-up over~AiC. Next, we provide SRS and SMC sub-optimality, and find that both sub-optimalities follow a piecewise curve divided into three distinct regimes.

\begin{theorem}[Sub-optimality of SRS \& SMC]
\label{theorem: SRC_SMC_subopt}
    Sub-optimalities of SRS and SMC are expressed as
    \begin{align}
            {\rm SubOpt}(\mathfrak{A})\;=\; {\rm OTC}(\beta)\cdot\left( 1-\alpha  J\right)\,,
    \end{align}
    where $\mathfrak{A}\in\{{\rm SRS},\:{\rm SMC}\}$, and $\alpha$ depends on the coverage constraint $\beta$ as follows:
    \begin{enumerate}
        \item \textbf{Transport regime:} In the transport regime, characterized by the coverage constraint $\beta \leq \left(\frac{1}{s_{r^\star}}\wedge\frac{1}{s_{\rm ver}} \right)$, we have $\alpha = \sqrt{\frac{s_{r^\star}(1-s_{r^\star})}{s_{\rm ver}(1-s_{\rm ver})}}$.
        \item \textbf{Policy improvement regime:} We have two cases. (a) If $s_{\rm ver} > s_{r^\star}$, in the policy improvement regime, characterized by the coverage constraint $\beta \in \left(\frac{1}{s_{\rm ver}},\frac{1}{s_{r^\star}} \right]$, we have $\alpha = \frac{1}{s_{\rm ver}}\cdot\sqrt{\frac{s_{r^\star}(1-s_{r^\star})}{\beta - 1}}$.

        (b) Alternatively, for $\beta \in \left(\frac{1}{s_{r^\star}},\frac{1}{s_{\rm ver}}\right]$, we have $\alpha = \sqrt{\frac{\beta - 1}{s_{\rm ver}(1-s_{\rm ver})}}\cdot s_{r^\star}$.
        \item \textbf{Saturation regime:} In the saturation regime, characterized by the coverage constraint $\beta > \left(\frac{1}{s_{r^\star}}\vee\frac{1}{s_{\rm ver}}\right)$, we have $\alpha = \frac{s_{r^\star}}{s_{\rm ver}}$.
    \end{enumerate}

\end{theorem}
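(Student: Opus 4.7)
The plan is to observe that SRS and SMC both realize a coupling between $\mu$ and the measure $\nu_{\widehat r}$ --- the induced optimal policy corresponding to the \emph{approximate} verifier $\widehat r$, obtained from \cref{theorem: optimal policy} with $s_{r^\star}$ replaced by $s_{\widehat r} = s_{\rm ver}$. Once this identification is in place, the sub-optimality reduces to a deterministic integral that can be evaluated branch-by-branch from the closed form in \cref{theorem: optimal policy}. For SRS this identification is a standard rejection-sampling argument (the envelope is exactly $\eta_{\widehat r}$); for SMC, \cref{theorem: residual} shows the required residual measure equals $\mu(\cdot\mid\widehat\mcS)$, which SMC realizes by its inner loop that resamples until the draw lands in $\widehat\mcS$. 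Consequently $\nu_{\mathrm{SRS}} = \nu_{\mathrm{SMC}} = \nu_{\widehat r}$.

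Next I would unpack ${\rm SubOpt}(\mathfrak{A}) = \int r^\star \diff\nu^\star - \int r^\star \diff\nu_{\widehat r}$ by splitting the second integral over $\widehat\mcS \cap \mcS^\star$ and $\mcS^\star \setminus \widehat\mcS$. Using the confusion-matrix identities $\mu(\widehat\mcS \cap \mcS^\star) = s_{r^\star}\,{\rm TPR}$ and $\mu(\mcS^\star \setminus \widehat\mcS) = s_{r^\star}(1-{\rm TPR})$ together with \cref{theorem: optimal policy},
\begin{equation*}
\int r^\star\,\diff\nu_{\widehat r} \;=\; s_{r^\star}\,{\rm TPR}\left(\tfrac{1}{s_{\rm ver}} \wedge \tfrac{m_{\beta}(s_{\rm ver})}{s_{\rm ver}}\right) + s_{r^\star}(1-{\rm TPR})\left(0 \vee \tfrac{1-m_{\beta}(s_{\rm ver})}{1-s_{\rm ver}}\right),
\end{equation*}
and $\int r^\star\,\diff\nu^\star = 1 \wedge m_{\beta}(s_{r^\star})$. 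Since $m_{\beta}(s) \lessgtr 1$ iff $\beta \lessgtr 1/s$, the active branch of each $\wedge / \vee$ is controlled purely by the position of $\beta$ relative to $1/s_{r^\star}$ and $1/s_{\rm ver}$, which is precisely the partition in the theorem.

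The algebraic identity powering every case is ${\rm TPR} - s_{\rm ver} = (1-s_{r^\star})({\rm TPR}-{\rm FPR}) = (1-s_{r^\star}) J$, obtained directly from the definition of $s_{\rm ver}$. In the \emph{transport regime} both Radon--Nikodym derivatives take their non-saturated branches; substituting $m_{\beta}(s) = s + \sqrt{s(1-s)(\beta-1)}$, regrouping the coefficients of ${\rm TPR}$ and $(1-{\rm TPR})$ via the identity, and dividing by $\mathrm{OTC}(\beta) = \sqrt{s_{r^\star}(1-s_{r^\star})(\beta-1)}$ (\cref{lemma: OHC}) produces $\alpha = \sqrt{s_{r^\star}(1-s_{r^\star})/(s_{\rm ver}(1-s_{\rm ver}))}$. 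In \emph{PI regime (a)}, $\eta_{\widehat r}$ saturates to $1/s_{\rm ver}$ on $\widehat\mcS$ and $0$ elsewhere, so $\int r^\star \diff\nu_{\widehat r} = s_{r^\star}\,{\rm TPR}/s_{\rm ver}$; subtracting from the still-unsaturated $\int r^\star \diff\nu^\star$ and dividing by the corresponding $\mathrm{OTC}$ yields $\alpha = (1/s_{\rm ver})\sqrt{s_{r^\star}(1-s_{r^\star})/(\beta-1)}$. \emph{PI regime (b)} is dual: here $\eta_{r^\star}$ saturates so $\int r^\star \diff\nu^\star = 1$ and $\mathrm{OTC}(\beta) = 1 - s_{r^\star}$, while $\eta_{\widehat r}$ remains unsaturated and contributes the $\sqrt{\beta-1}$ factor, giving $\alpha = s_{r^\star}\sqrt{(\beta-1)/(s_{\rm ver}(1-s_{\rm ver}))}$. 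Finally, in the \emph{saturation regime} both derivatives are fully saturated, so ${\rm SubOpt} = 1 - s_{r^\star}\,{\rm TPR}/s_{\rm ver}$; factoring via the same identity rewrites this as $(1-s_{r^\star})(1 - (s_{r^\star}/s_{\rm ver}) J)$, i.e.\ $\alpha = s_{r^\star}/s_{\rm ver}$.

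The main obstacle is not any single manipulation but the careful bookkeeping across the four branch combinations: one must pair the regime-dependent $\mathrm{OTC}$ from \cref{lemma: OHC} with the correct saturated/unsaturated mixture and repeatedly exploit the identity ${\rm TPR} - s_{\rm ver} = (1-s_{r^\star}) J$ to isolate Youden's index as the multiplicative factor in $\alpha$, including its analogue $s_{\rm ver} - s_{r^\star}{\rm TPR} = (1-s_{r^\star}){\rm FPR}$ needed in the saturation regime.
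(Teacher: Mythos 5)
Your argument is correct and follows essentially the same route as the paper's own proof: both identify $\nu_{\mathrm{SRS}}=\nu_{\mathrm{SMC}}=\widehat\nu$ (the optimal policy under the approximate verifier), decompose ${\rm SubOpt}$ as $\bigl(\nu^\star(\mcS^\star)-\mu(\mcS^\star)\bigr)-\bigl(\widehat\nu(\mcS^\star)-\mu(\mcS^\star)\bigr) = {\rm OTC}-{\rm PI}$, split the integral over $\widehat\mcS\cap\mcS^\star$ and $\mcS^\star\setminus\widehat\mcS$ via the confusion matrix, and use the identity ${\rm TPR}-s_{\rm ver}=(1-s_{r^\star})J$ to expose Youden's index before stratifying by the active branches of the $\wedge/\vee$ in the Radon--Nikodym derivatives. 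The only cosmetic difference is that the paper first derives a unified policy-improvement expression $\bigl(m_\beta(s_{\rm ver})-s_{\rm ver}\bigr)\tfrac{s_{r^\star}(1-s_{r^\star})}{s_{\rm ver}(1-s_{\rm ver})}J$ and then specializes by regime, whereas you unpack the branches directly; the resulting case analysis and final formulas coincide.
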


\textbf{Interpreting the results.} Theorem~\ref{theorem: SRC_SMC_subopt} reveals three distinct regimes. In the \emph{transport regime}, sub-optimality grows as $O(\sqrt{\beta})$ and is fully governed by ${\rm OTC}(\beta)$. In the \emph{policy improvement regime}, if $s_{\rm ver}\leq s_{r^\star}$ and Youden’s index is positive, the policy reduces sub-optimality. By contrast, $s_{\rm ver}\geq s_{r^\star}$ admits false positives and yields no improvement. In the \emph{saturation regime}, ${\rm OTC}(\beta)$ stabilizes at $1-s_{r^\star}$, and hence, sub-optimality remains constant despite increasing coverage.

Theorems~\ref{theorem: computational complexity} and~\ref{theorem: SRC_SMC_subopt} collectively establish that SMC, despite its design, is no more computationally efficient than SRS, as the lack of residual access offsets potential gains. Thus, SRS and SMC exhibit \emph{equivalent performance}, both in computational complexity and sub-optimality. 
Theorems~\ref{theorem: AiC} and~\ref{theorem: AiC_constraint_violation} show that AiC violates constraints in the transport regime, while \emph{matches} SRS and SMC in the saturation regime-- supporting their use under liberal coverage. Finally, while~\citet{huang2025best} report sub-optimality scaling with square-root of coverage, our analysis refines this observation: \textit{the coverage--sub-optimality trade-off is not universal but mediated by the verifier’s ROC}.

\subsection{Batched Sampling Algorithms: BoN and BRS}\label{sucsec: batched}

Batched sampling methods, such as BoN, are widely adopted in practice. Owing to the efficiency of parallel sampling on modern GPUs, generating a batch of responses is often preferable to sequential generation. In this section, we examine two algorithms. We first analyze BoN, characterizing its sub-optimality and identifying the maximal batch size $N+1$ beyond which constraint violations occur. We then introduce a batched variant of rejection sampling (BRS), and establish that it satisfies coverage constraints for {\em all} batch sizes. For our analysis, we focus on accurate verifiers; extension to approximately correct verifiers is deferred to Appendix~\ref{sec: batched_with_imperfect_verifiers}.

\textbf{Best-of-N (BoN).}
Given a prompt $\bx\in\mcX$, BoN obtains independent and identically distributed (iid) responses $\by^{N+1}\triangleq(\by_1,\cdots,\by_{N+1})$ from the proposal $\mu$. Subsequently, it returns a response $\bz^{(N)}\in\mcK$ uniformly at random, where we denote $\mcK\triangleq\{\by\in\by^N : \by\in\mcS^\star\}$, and $\mcS^\star$ denotes the ground-truth membership oracle accessible to BoN. In contrast to the sequential protocol, batched sampling with an accurate verifier does not guarantee zero sub-optimality, as the algorithm is restricted to selecting from only $N\!+\!1$ samples, which may fail to adequately represent the target distribution. Therefore, we begin by deriving a {\em sufficient condition} on the batch size for BoN sampling under which the coverage constraint is preserved.
\begin{theorem}[Maximum Admissible Batch Size of BoN]
\label{theorem: BoN_N}
Let $\nu_{\rm BoN}$ denote the sampling distribution induced by BoN with access to the ground-truth membership oracle $\mcS^\star$. Then $\nu_{\rm BoN}$ satisfies the coverage constraint, i.e., $\nu_{\rm BoN} \in \Pi(\beta \mid \bx)$, only if $N \leq \lfloor N_{\max} \rfloor$, where
\begin{align}
    N_{\max} \;\triangleq\;
    \begin{cases}
        \infty\,, & \textnormal{if }\quad \beta \;\geq\; (1-s_{r^\star})/s_{r^\star}, \\
        \dfrac{\ln\Big(1-\sqrt{(\beta-1) s_{r^\star}(1-s_{r^\star})^{-1}}\Big)}{\ln (1-s_{r^\star})}, 
        & \textnormal{if }\quad s_{r^\star}(1-s_{r^\star}) \;<\; \beta \;\leq\; (1-s_{r^\star})/s_{r^\star}, \\
    \textnormal{undetermined}, & \textnormal{if }\quad \beta \;<\; s_{r^\star}(1-s_{r^\star}) \, .
    \end{cases}
\end{align}
\end{theorem}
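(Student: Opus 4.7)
The plan is to derive a closed-form expression for the Radon--Nikodym derivative $\eta_{\rm BoN} \triangleq d\nu_{\rm BoN}/d\mu$, substitute it into the coverage constraint $\int \eta_{\rm BoN}^2 \, d\mu \leq \beta$, and invert the resulting inequality for $N$. The key structural facts I will exploit are that the $N$ inspected samples are iid from $\mu$, that $|\mcK|$ is ${\rm Bin}(N, s_{r^\star})$-distributed, that conditional on $|\mcK| \geq 1$ the selection is uniform on $\mcK$, and that on $\{|\mcK| = 0\}$ the fallback returns a draw independent of $\by_1,\ldots,\by_N$ and distributed as $\mu$ (identifying ``any one of the $(N{+}1)$ responses'' with the untouched $(N{+}1)$-th sample).

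First, I will compute $\eta_{\rm BoN}$. Let $s \triangleq s_{r^\star}$ and $q \triangleq (1-s)^N$. Fix $\by \in \mcS^\star$ and tag a single generated copy located at $\by$; by exchangeability its probability of being selected is $N \cdot \E\bigl[|\mcK|^{-1}\mathds{1}\{|\mcK| \geq 1\} \mid \by_1 = \by\bigr]$. Conditioning on $\by_1 \in \mcS^\star$ gives $|\mcK| = 1 + K'$ with $K' \sim {\rm Bin}(N-1, s)$, and the classical identity $\E[1/(1+K')] = (1-q)/(Ns)$ then applies. Adding the fallback term (which contributes density $q$ with respect to $\mu$ everywhere on $\mcY$) yields the piecewise form $\eta_{\rm BoN}(\by) = (1-q)/s + q$ for $\by \in \mcS^\star$ and $\eta_{\rm BoN}(\by) = q$ for $\by \notin \mcS^\star$. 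A quick sanity check confirms $\int \eta_{\rm BoN}\, d\mu = s[(1-q)/s + q] + (1-s) q = 1$.

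Next, I will substitute into the coverage constraint. A short algebraic simplification gives
\[
\int \eta_{\rm BoN}^2 \, d\mu \;=\; s\left(\frac{1-q}{s} + q\right)^2 + (1-s) q^2 \;=\; 1 + \frac{(1-s)(1-q)^2}{s}\,,
\]
so that $\nu_{\rm BoN} \in \Pi(\beta\med\bx)$ is equivalent to $(1-q)^2 \leq s(\beta - 1)/(1-s)$, i.e.,
\[
(1-s)^N \;=\; q \;\geq\; 1 - \sqrt{\frac{s(\beta-1)}{1-s}}\,.
\]
Taking logarithms (and using $\ln(1-s) < 0$ to flip the inequality) produces the stated $\lfloor N_{\max} \rfloor$ bound. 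The three cases in the theorem then correspond to the sign and feasibility of $1 - \sqrt{s(\beta-1)/(1-s)}$: nonpositive values yield $N_{\max} = \infty$; positive values below the maximum admissible $q$ yield the logarithmic bound; and values exceeding what any $N \geq 1$ can achieve render the constraint infeasible.

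The step I expect to be the main obstacle is \emph{not} the algebra but the fallback convention: the density identity critically depends on the fallback draw being $\mu$-distributed, and any other interpretation (for instance, a uniform pick from the $N$ inspected samples, which would conditionally follow $\mu(\cdot \mid \mcY \setminus \mcS^\star)$) alters the density off $\mcS^\star$ and thereby shifts the boundary $\beta$-thresholds in the case split. Once this convention is pinned down, the remaining argument is routine manipulation, and the ``only if'' direction is in fact an ``iff'' since every reduction step above is reversible.
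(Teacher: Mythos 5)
Your approach matches the paper's: compute the BoN Radon--Nikodym derivative, obtain the $\chi^2$-divergence $\tfrac{1-s}{s}(1-q)^2$ with $q=(1-s)^N$, impose the coverage constraint $\chi^2 \le \beta - 1$, and solve for $N$. Your route to the density is slightly different --- a tagging argument via the expectation identity $\E\bigl[(1+K')^{-1}\bigr] = (1-q)/(Ns)$ for $K' \sim \mathrm{Bin}(N-1,s)$, versus the paper's Lemma~\ref{lemma: BoN_RN}, which enumerates selection events and evaluates $\sum_{m=0}^N \binom{N}{m}s^m(1-s)^{N-m}/(m+1)$ via $\int_0^1 t^m\,\diff t$ --- but both give $\eta_{\rm BoN} = (1-q)/s + q$ on $\mcS^\star$ and $q$ off it, and your fallback convention (uninspected $(N{+}1)$-th sample, $\sim\mu$, independent of the others) is exactly what is needed to obtain that density, so it is worth stating it is also the convention the paper's Lemma~\ref{lemma: BoN_RN} implicitly uses.

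There is one gap: you wave at the claim that ``the three cases in the theorem then correspond to the sign and feasibility of $1 - \sqrt{s(\beta-1)/(1-s)}$'' without actually working out the resulting $\beta$-ranges, and if you do, they do \emph{not} match the theorem's statement. From your inequality, $N_{\max}=\infty$ holds iff $1-\sqrt{s(\beta-1)/(1-s)}\le 0$, i.e.\ $\beta\ge 1/s_{r^\star}$, whereas the theorem places that boundary at $\beta\ge (1-s_{r^\star})/s_{r^\star}$; and the log formula gives $N_{\max}\ge 1$ iff $1-\sqrt{s(\beta-1)/(1-s)}\le 1-s$, i.e.\ $\beta\ge 1+s_{r^\star}(1-s_{r^\star})$, whereas the theorem says $\beta > s_{r^\star}(1-s_{r^\star})$. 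The theorem's case boundaries are systematically one unit too small on the $\beta$-axis, and the paper's own proof commits the same slip: it bounds $\chi^2\le (1-s)/s$ and then concludes $N_{\max}=\infty$ for $\beta > (1-s)/s$, but the constraint reads $\chi^2\le\beta-1$, so the correct threshold is $\beta-1\ge(1-s)/s$. The stated ``only if'' survives because declaring $N_{\max}=\infty$ is a vacuously valid necessary condition, but a careful execution of your own plan would surface this discrepancy rather than fold it into ``correspond to.''
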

We observe that for conservative choices of coverage, BoN is not a feasible sampling strategy. On the other hand, beyond a necessary minimum coverage, the maximum number of samples is an increasing function of $\beta$, and becomes unbounded (as the $\chi^2$-divergence saturates) beyond $\frac{1-s_{r^\star}}{s_{r^\star}}$. Next, we state the sub-optimality of BoN as a function of $N$.
\begin{theorem}[Sub-optimality of BoN]
\label{theorem: BoN_suboptimality}
    The sub-optimality of the BoN algorithm with access to the ground truth membership oracle $\mcS^\star$ is $
        {\rm SubOpt}(\rm BoN)\;=\;(1-s_{r^\star})^{N+1} - \big( 0\vee 1-m_{\beta}(s_{r^\star})\big)\,$.
\end{theorem}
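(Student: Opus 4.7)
The strategy is to evaluate the two integrals appearing in the sub-optimality definition~\eqref{eq:sub_opt_def} separately, and then combine them through an elementary identity on $\min/\max$. Both integrals admit closed forms: the first from \Cref{theorem: optimal policy}, and the second from a binomial argument on the iid batch.

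First, I would compute the reward of the optimal policy $\int r^\star(\bx,\by)\,\diff\nu^\star(\by\mid\bx)$ by instantiating \Cref{theorem: optimal policy} with $r = r^\star$ and $\mcS_r = \mcS^\star$. Since $r^\star$ is the indicator of $\mcS^\star$, the integral collapses to $\int_{\mcS^\star}\eta_{r^\star}\,\diff\mu$. Plugging in the first branch of~\eqref{eq: RN_derivative} gives
\begin{align*}
    \int r^\star\,\diff\nu^\star \;=\; s_{r^\star}\cdot\Bigl(\tfrac{1}{s_{r^\star}}\wedge\tfrac{m_\beta(s_{r^\star})}{s_{r^\star}}\Bigr) \;=\; 1\wedge m_\beta(s_{r^\star}).
\end{align*}

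Second, I would compute the BoN reward $\int r^\star\,\diff\nu_{\mathrm{BoN}} = \Pr(\bz^{(N)}\in\mcS^\star)$, using the fact that BoN has access to the \emph{exact} oracle $\mcS^\star$. Condition on the event that at least one of the $N$ inspected samples lies in $\mcS^\star$: on this event $\bz^{(N)}$ is drawn uniformly from $\mcK\subseteq\mcS^\star$ and contributes reward~$1$, while on the complement the algorithm defaults to the uninspected response $\by_{N+1}$, which is independent of $\by^N$ and lies in $\mcS^\star$ with probability $s_{r^\star}$. By the law of total probability and iid-ness,
\begin{align*}
    \Pr(\bz^{(N)}\in\mcS^\star) \;=\; \bigl(1-(1-s_{r^\star})^N\bigr) + (1-s_{r^\star})^N\, s_{r^\star} \;=\; 1 - (1-s_{r^\star})^{N+1}.
\end{align*}

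Third, subtracting yields $\mathrm{SubOpt}(\mathrm{BoN}) = \bigl(1\wedge m_\beta(s_{r^\star})\bigr) - 1 + (1-s_{r^\star})^{N+1}$. A two-line casework on whether $m_\beta(s_{r^\star})\geq 1$ or $m_\beta(s_{r^\star})<1$ verifies the identity $(1\wedge m) - 1 = -\bigl(0\vee(1-m)\bigr)$, which rewrites the above as the claimed $(1-s_{r^\star})^{N+1} - \bigl(0\vee(1-m_\beta(s_{r^\star}))\bigr)$.

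The main obstacle is modelling, not computation: one must pin down the default rule so that the uninspected response $\by_{N+1}$ contributes exactly one additional independent $\mathrm{Bernoulli}(s_{r^\star})$ trial, thereby upgrading the exponent from $N$ to $N+1$. Once this convention is fixed, the argument reduces to invoking \Cref{theorem: optimal policy} and a brief $\min/\max$ rewrite.
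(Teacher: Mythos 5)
Your proof is correct, and it reaches the result via a genuinely different and more elementary route than the paper. The paper first establishes the full Radon--Nikodym derivative $\diff\nu_{\rm BoN}/\diff\mu$ (Lemma~\ref{lemma: BoN_RN}, a nontrivial binomial/integral manipulation) and then integrates over $\mcS^\star$ to get $\nu_{\rm BoN}(\mcS^\star)=1-(1-s_{r^\star})^{N+1}$. You bypass the RN derivative entirely by a one-line conditioning on $\{\mcK\neq\emptyset\}$: accept with certainty if some inspected sample lands in $\mcS^\star$, and otherwise fall back to an independent $\mathrm{Bernoulli}(s_{r^\star})$ trial from the uninspected sample. Since the theorem only needs the scalar $\nu_{\rm BoN}(\mcS^\star)$ and not the full density, your approach is a strict shortcut for this purpose; the paper's heavier Lemma~\ref{lemma: BoN_RN} is, however, also used for Theorem~\ref{theorem: BoN_N} (the $\chi^2$ bound), which does require the full RN derivative, so the paper's extra work is not wasted.

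One remark on the modelling point you raise yourself. The main text says the default is to return ``any one of the $(N+1)$ responses,'' and you resolve the ambiguity by returning $\by_{N+1}$. The proof of Lemma~\ref{lemma: BoN_RN} actually uses a slightly different convention (all $N+1$ are inspected, and on the all-fail event one of the $N+1$ is returned uniformly at random). The two conventions induce different Radon--Nikodym derivatives on $\mcS^\star$, but they assign the \emph{same} total mass $1-(1-s_{r^\star})^{N+1}$ to $\mcS^\star$: under your convention the fallback contributes $(1-s_{r^\star})^N\cdot s_{r^\star}$, and under the paper's convention the event ``some response is correct and is selected'' also has probability $1-(1-s_{r^\star})^{N+1}$ by symmetry. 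So your choice is consistent with the paper's formula, and your caveat that ``one must pin down the default rule so that the exponent upgrades from $N$ to $N+1$'' correctly flags that a default such as ``return $\by_1$'' or ``always abstain'' would not give the stated result. The first and third steps of your argument (instantiating Theorem~\ref{theorem: optimal policy} for $\nu^\star(\mcS^\star)=1\wedge m_\beta(s_{r^\star})$, and the identity $(1\wedge m)-1=-(0\vee(1-m))$) match the paper's.
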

From Theorem~\ref{theorem: BoN_suboptimality}, as $N$ increases, BoN sub-optimality decreases. However, Theorem~\ref{theorem: BoN_N} shows that $N$ cannot grow arbitrarily without inducing constraint violations. For small $\beta$, BoN may even outperform the skyline --- whose mass on $\mcS^\star$ can be strictly less than $1$ --- resulting in negative sub-optimality, albeit at the cost of violating the coverage constraint. More generally, combining Theorems~\ref{theorem: BoN_N} and~\ref{theorem: BoN_suboptimality}, we find that for large $\beta$ the batch size $N$ can be chosen freely, yielding vanishing sub-optimality. In contrast, for intermediate $\beta$, restricting $N$ to its maximal admissible value leads to a sub-optimality equal to $1-\sqrt{(\beta-1) s_{r^\star}/(1-s_{r^\star})} - (0\vee 1-m_{\beta}(s_{r^\star}))$.

\textbf{Batched Rejection Sampling (BRS, Algorithm~\ref{alg:brs}).} Motivated by the infeasibility and constant sub-optimality of BoN in low-coverage regimes, we extend our SRS algorithm to the batched setting, which we call BRS. BRS follows the same principles as SRS, with the key distinction that generation is truncated after $N\!+\!1$ samples. A batch $\bY^{N+1}$ is drawn in parallel, and rejection sampling is applied to any $N$ of these samples. If none are accepted, the $(N+1)^{\rm th}$ sample is returned as a fallback. Unlike SRS, however, BRS is {\em not} a valid transport plan with respect to a target measure defined by a reward $r$, and thus, incurs sub-optimality even when the ground-truth membership oracle is available. Now, we first show that BRS satisfies the coverage constraint for {\em all} $N$, allowing batch sizes to be chosen freely based on hardware capacity. We then analyze its sub-optimality establishing that it vanishes as $N$ increases.
\begin{theorem}[Batch Size of BRS]
\label{theorem: BRS_N}
    Let us denote the sampling distribution of the BRS algorithm induced by the ground truth membership oracle $\mcS^\star$ by $\nu_{\rm BRS}$. For any prompt $\bx\in\mcX$ and batch size $N+1\in\N$, we have $\nu_{\rm BRS}\in\Pi(\beta\med\bx)$. 
\end{theorem}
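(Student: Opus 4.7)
The plan is to reduce the claim to two ingredients: a mixture representation of the BRS output, and convexity of the feasible set $\Pi(\beta\med\bx)$. The core observation is that BRS is SRS truncated after $N$ rounds with a fallback to an iid draw from $\mu$, so conditioning on whether any of the first $N$ proposals is accepted splits $\nu_{\rm BRS}$ into a mixture of $\nu^\star$ and $\mu$, both of which already lie in $\Pi(\beta\med\bx)$.

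In more detail, I would first fix any prompt $\bx$ and, since $\hat r = r^\star$, let $p(\by)\in[0,1]$ denote the SRS per-sample acceptance probability with induced marginal acceptance probability $p_{\rm acc}\triangleq\int p(\by)\,\mu(\diff\by)$. Defining the first-accept time $T\triangleq\inf\{i\leq N : \text{sample }i\text{ accepted}\}$ (with $T=\infty$ if all first $N$ are rejected), the iid structure gives $\Pr(T=k,\bY_k\in A) = (1-p_{\rm acc})^{k-1}\,p_{\rm acc}\,\nu^\star(A)$ for every $k\leq N$ and Borel $A\subseteq\mcY$, so summing over $k$ yields $\Pr(\bY_T\in A\mid T<\infty) = \nu^\star(A)$. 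On the complementary event, of probability $q\triangleq(1-p_{\rm acc})^N$, BRS returns the fresh $(N+1)^{\rm th}$ draw, which is distributed as $\mu$ and independent of the prior rejections, so
\[
    \nu_{\rm BRS} \;=\; (1-q)\,\nu^\star \;+\; q\,\mu\,.
\]
Next, rewriting the constraint in~\eqref{eq: policy_class} in measure form as $\int(\diff\nu/\diff\mu)^2\,\diff\mu \leq \beta$ exhibits $\Pi(\beta\med\bx)$ as a sublevel set of a functional convex in $\nu$ (by convexity of $x\mapsto x^2$ applied to the density of any mixture). Since $\nu^\star\in\Pi(\beta\med\bx)$ by construction and $\mu\in\Pi(\beta\med\bx)$ trivially (its density with respect to itself is $1$), the convex combination $\nu_{\rm BRS}$ lies in $\Pi(\beta\med\bx)$ for every $N\in\N$.

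The main subtlety is the clean mixture decomposition above: one must justify that, conditioned on acceptance within the first $N$ trials, the returned sample has law exactly $\nu^\star$ regardless of the hit-time $k\leq N$, and that on failure the fallback draw is genuinely $\mu$-distributed and independent of the rejection history. The factorization handles both. Once the decomposition is in hand, convexity finishes the argument in a single line and makes the $N$-independence immediate. An alternative computational route would directly expand $\int (\diff\nu_{\rm BRS}/\diff\mu)^2\,\diff\mu \;=\; 1 + (1-q)^2\bigl(\int \eta_{r^\star}^2\,\diff\mu - 1\bigr)$ and bound it via $(1-q)^2\leq 1$ together with $\nu^\star\in\Pi(\beta\med\bx)$, but the convexity perspective is conceptually cleaner and transparently delivers the result for all $N$.
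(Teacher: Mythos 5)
Your proof is correct, and the finishing step is genuinely different from (and cleaner than) the paper's. Both arguments ultimately rest on the same mixture identity
\[
\nu_{\rm BRS} \;=\; \bigl(1-(1-\tfrac{1}{M})^N\bigr)\,\nu^\star \;+\; (1-\tfrac{1}{M})^N\,\mu\,,
\]
which you derive probabilistically (conditioning on the hitting time $T$ of the first acceptance and on the complementary fallback event), and which the paper establishes in Lemma~\ref{lemma: BRS_RN} by marginalizing the BRS coupling — equivalent content, different bookkeeping. The departure is what comes after. The paper plugs the RN derivative into $\chi^2(\nu_{\rm BRS}\|\mu)$, expands the square, and simplifies over a dozen algebraic lines to the form $(1-t)^2(C-1)$, then separately argues $C\leq\beta$ by casing on whether $m_\beta(s_{r^\star})$ saturates at $1$. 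You instead note that $\nu\mapsto\int(\diff\nu/\diff\mu)^2\,\diff\mu$ is convex (pointwise convexity of $x\mapsto x^2$ plus linearity of $\nu\mapsto\diff\nu/\diff\mu$), so $\Pi(\beta\med\bx)$ is a convex set; since both endpoints $\nu^\star$ and $\mu$ lie in it (the former by construction, the latter because its density is identically $1$ and $\beta\geq1$), every convex combination does too. This buys you a proof that is shorter, requires no case analysis on $\beta$, makes the $N$-uniformity manifest, and would extend verbatim to any other convex coverage constraint on the density ratio. One minor imprecision: you introduce $p(\by)$ as "the acceptance probability" — for the ground-truth verifier this is $\mathds{1}\{\by\in\mcS^\star\}+\tfrac{1}{M}\eta_{r^\star}(\by)\mathds{1}\{\by\notin\mcS^\star\}$ rather than a uniform $\tfrac{1}{M}\eta_{r^\star}(\by)$ because Algorithm~\ref{alg:brs} short-circuits on verified-correct samples; this does not affect your conclusion since the integrated acceptance rate is still $1/M$ and the accepted-sample law is still $\nu^\star$, but it is worth making explicit.
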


\begin{theorem}[Sub-optimality of BRS]\label{theorem: BRS_suboptimality}
    The sub-optimality of the BRS algorithm with access to the ground truth membership oracle $\mcS^\star$ is given by $
        {\rm SubOpt}({\rm BRS})\;=\;{\rm OTC}(\beta)\cdot\left( 1 - \frac{1}{M}\right)^{N}\ .
    $
\end{theorem}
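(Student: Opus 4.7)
The plan is to recognize BRS as a truncated rejection-sampling scheme with $N$ independent trials and a fallback, and then directly compute the induced law $\nu_{\rm BRS}$ as a two-component mixture. First I would invoke the standard correctness of rejection sampling: with the envelope $M$ defined as in Theorem~\ref{theorem: computational complexity} (now instantiated with $s_{\widehat r} = s_{r^\star}$ since BRS has access to the ground-truth oracle), the closed form of $\eta_{r^\star}$ in Theorem~\ref{theorem: optimal policy} shows that $\eta_{r^\star}(\by) \le M$ for $\mu$-almost every $\by$, so $M$ is a valid envelope. Consequently, each of the $N$ proposals submitted to the rejection step is accepted with probability $\int(\eta_{r^\star}/M)\,\diff\mu = 1/M$, and conditional on acceptance, the accepted proposal is distributed exactly as $\nu^\star$.

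Next I would decompose $\nu_{\rm BRS}$ as a mixture. Letting $A$ be the event that at least one of the first $N$ proposals is accepted, independence of the $N+1$ draws gives $\Pr[A^c] = (1-1/M)^N$. On $A$ the output is distributed as $\nu^\star$ by the previous step; on $A^c$ the output is the reserved $(N+1)$-th proposal, which, being independent of the rejection subroutine, is distributed as $\mu$. Hence
\begin{equation*}
\nu_{\rm BRS} \;=\; \bigl(1-(1-1/M)^N\bigr)\,\nu^\star \;+\; (1-1/M)^N\,\mu\ .
\end{equation*}
Plugging this into the sub-optimality definition~\eqref{eq:sub_opt_def}, the $\nu^\star$ terms cancel and I am left with
\begin{equation*}
{\rm SubOpt}({\rm BRS}) \;=\; (1-1/M)^N\Bigl(\textstyle\int r^\star\diff\nu^\star - \int r^\star\diff\mu\Bigr)\ .
\end{equation*}
Using $\int r^\star\diff\mu = s_{r^\star}$ and $\int r^\star\diff\nu^\star = 1 \wedge m_\beta(s_{r^\star})$ (which follows from Theorem~\ref{theorem: optimal policy} since the $\nu^\star$-density on $\mcS^\star$ times $s_{r^\star}$ gives its $\nu^\star$-mass), the bracketed difference collapses to ${\rm OTC}(\beta)$ by Lemma~\ref{lemma: OHC}, yielding the claim.

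The main subtlety, rather than a deep obstacle, is to justify the mixture decomposition cleanly: one must verify that conditioning on $A$ preserves the rejection-sampling output law (this is the textbook fact that, when scanning samples sequentially, the first accepted one has law $\nu^\star$ irrespective of how many were scanned), and that the fallback branch is genuinely distributed as $\mu$ despite being deployed only on $A^c$ (which follows because the $(N+1)$-th proposal is independent of both the earlier proposals and the uniform auxiliaries driving the rejection step). Once these independence facts are stated, the remainder is a direct computation.
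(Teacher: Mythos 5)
Your proof is correct and follows the same overall strategy as the paper's: establish that the BRS output law is the mixture $\nu_{\rm BRS} = \bigl(1-(1-1/M)^N\bigr)\nu^\star + (1-1/M)^N\mu$, then plug into the sub-optimality definition and identify $\int r^\star\diff\nu^\star - \int r^\star\diff\mu = (1\wedge m_\beta(s_{r^\star})) - s_{r^\star} = {\rm OTC}(\beta)$ via Lemma~\ref{lemma: OHC}. The only difference is in how you arrive at the mixture identity: the paper establishes it as Lemma~\ref{lemma: BRS_RN} by writing down the conditional acceptance kernel $K(\by^{N+1},\diff\bz)$, marginalizing, and summing the resulting geometric series, whereas you invoke the textbook rejection-sampling facts (acceptance probability $1/M$ per trial, the accepted sample has law $\nu^\star$, and independence of the $(N+1)$-th fallback from the rejection subroutine) to obtain it probabilistically. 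Both derivations are valid and equivalent; yours is arguably shorter, while the paper's density calculation is reused verbatim to prove the constraint-satisfaction result of Theorem~\ref{theorem: BRS_N}.
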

We observe that sub-optimality of BRS decays exponentially in the batch size. Furthermore, setting its envelope to its tightest value $M=\big(\frac{1}{s_{r^\star}}\wedge\frac{m_{\beta}(s_{r^\star})}{s_{r^\star}}\big)$, we observe that the sub-optimality is ${\rm OTC}(\beta)^{N+1}\cdot m_{\beta}(s_{r^\star})^{-N}$, and it scales exponentially in OTC. 
\textit{This provably shows an improvement in the performance of BRS compared to BoN in the intermediate and low coverage regimes.}
\section{Experimental Analysis}
\label{sec: experiments}

This section outlines the experimental framework employed to evaluate and corroborate our theoretical findings. Our empirical study is guided by two central questions: 

(1) \textit{To what extent do the empirical sub-optimality curves align with the three-regimes of theoretical predictions?} 

(2) \textit{How sensitive are the algorithms to misspecification of the coverage parameter $s_{\mathrm{ver}}$ used by the algorithms relative to the (unknown) true mass?} 

We pivot our empirical results on two key performance metrics, \textbf{sub-optimality} and \textbf{computational complexity}. For both metrics, we sweep the coverage budget $\beta$ over a grid spanning the three regimes highlighted by the theory--  (transport, policy improvement, and saturation). Additionally, for the batched setting, we sweep over the batch size $N\!+\!1$. We summarize the key empirical findings in this section, while deferring experimental setup details, construction of ground truth and approximately correct verifiers, and additional results to Appendix~\ref{sec: appendix_experiments}. All curves are averaged over $5{,}000$ episodes, with each algorithm run independently in each episode.


\paragraph{Observations. \textbf{(1) Sub-optimality.}}
In sequential protocol, the sub–optimality curves for \textbf{SRS} and \textbf{SMC} in Figure~\ref{fig:subopt_vs_beta} follow the characteristic three–regime geometry predicted by the analysis. In the \textit{small–coverage regime}, sub–optimality increases as $O(\sqrt{\beta})$ and exhibits little policy improvement. \textit{As $\beta$ grows}, the curves bend downward in proportion to the informativeness of the verifier (larger $J$), and finally, plateau at a level determined by $s_{r^\star}$ and $J$. In contrast, \textbf{AiC} aligns with the other methods only under the saturation regime, and otherwise, exhibits constraint violations. The three methods converge in the saturation regimes, achieving the same performance.
\textit{Varying the model scale primarily shifts the saturation level}. Larger Qwen models yield higher $s_{r^\star}$ (stronger base accuracy) and therefore lower residual sub–optimality. 

\paragraph{(2) Computational complexity.} The premise of our experiments in Figure~\ref{fig:subopt_vs_beta} comprises a smaller $s_{\rm ver}$ compared to $s_{r^\star}$; consequently, we observe that the computational complexity with the approximate verifier (saturating at $\approx 8$ proposals on average for \texttt{Qwen3-1.7B}) exceeds the complexity required by the ground truth verifier (saturating at $\approx 6$ proposal on average for \texttt{Qwen3-1.7B}). In general, computational complexity for SRS and SMC are identical and scale as $O(\sqrt{\beta})$ before saturating, while AiC has a constant computational complexity as stated in Theorem~\ref{theorem: AiC}.

In batched protocol, we present a comparison of the sub-optimality of both \textbf{BRS} and \textbf{BoN} under imperfect verifiers in Figure~\ref{fig:subopt_brs} (left), with additional details provided in Appendix~\ref{sec: batched_with_imperfect_verifiers}. The sub-optimality is evaluated as a function of $\beta$ across varying batch sizes $N \leq N_{\rm max}$. Theoretical predictions closely align with empirical results obtained using \texttt{Qwen3-14B}. As predicted by Theorems~\ref{theorem: BoN_suboptimality_approx} and~\ref{theorem: BRS_suboptimality_approx}, \textit{the sub-optimality decreases with increasing $N$ in the presence of imperfect verifiers, reflecting the expected improvement with larger batches.}

\begin{figure}[t!]
    \centering
    \includegraphics[width=\textwidth]{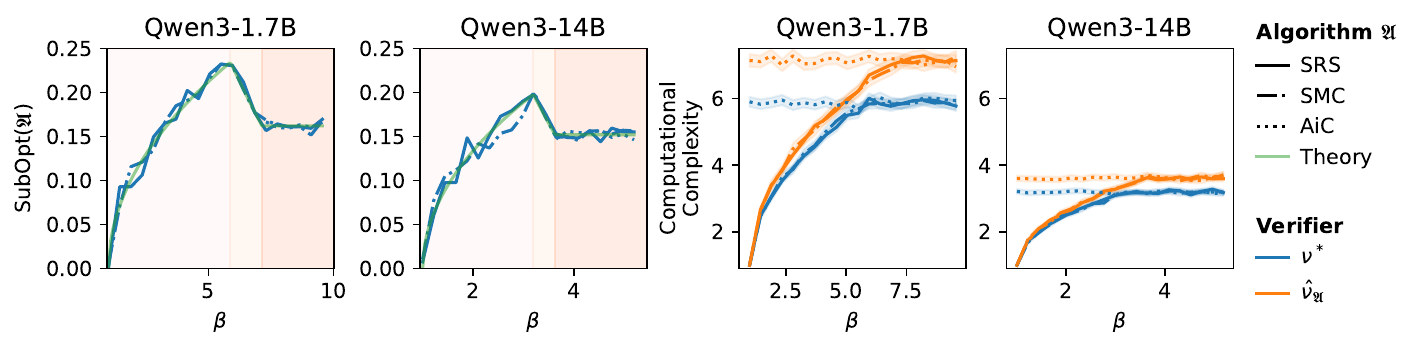}\vspace*{-1em}
    \caption{Sub-optimality (left) and computational complexity (right) as functions of $\beta$ for \texttt{Qwen3-1.7B}  and \texttt{Qwen3-14B}. Results are shown for SRS, stochastic SMC, and AiC. Solid green lines denote theoretical predictions as stated in Theorem \ref{theorem: SRC_SMC_subopt}. Background shading indicates different coverage regimes, and confidence intervals are shown as shaded bands.}   \label{fig:subopt_vs_beta}
\end{figure}

\begin{figure}[t!]
    \centering
    \includegraphics[width=\textwidth]{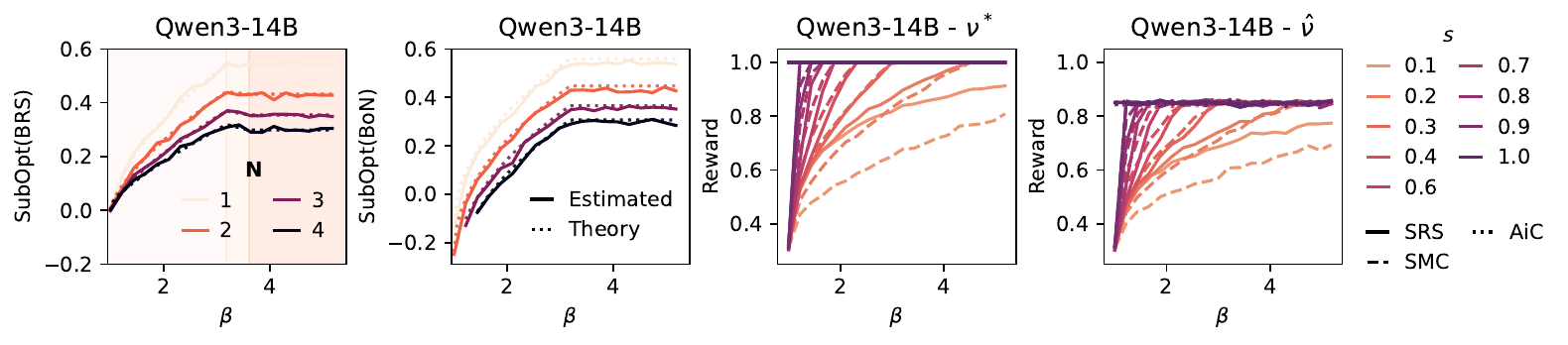}\vspace*{-1em}
    \caption{Sub-optimality for \texttt{Qwen3-14B} of BRS and BoN with imperfect verifiers (left), and ablation study in $s$ (sensitivity) for SRS, SMC, and AiC (right).}    \label{fig:subopt_brs}
\end{figure}


\paragraph{(3) Sensitivity to $s_{\rm ver}$.}
Since $s_{\rm ver}$ is typically unavailable in real-world settings, we conduct an ablation study by setting $s_{\rm ver}=s$ for various choices of $s$. The two rightmost plots in Figure~\ref{fig:subopt_brs} show the reward obtained by the algorithms when a value $s$ selected from the set shown in the legend is used instead of the one induced by the verifier under examination. Each curve corresponds to a different assumed value of $s$, and illustrates how mismatched assumptions about verifier accuracy affect the reward. Interestingly, \textit{when $s = 1$, all three algorithms reduce to \textbf{AiC} algorithm.} This is because-- (i) rejection sampling envelope becomes $M = 1$. Thus, the first check in SMC becomes identical to the SRS acceptance condition. (ii) The Radon-Nikodym derivative function becomes $\eta_r(\by) = \mathds{1}\{\by \in \mathcal{S}_r\}$ . Thus, for $s=1$, all methods restrict support to $\mathcal{S}_r$, and behave identically, as reflected in the overlapping curves at that point. Also, an interesting pattern emerges when comparing \textbf{SRS} and \textbf{SMC} across different assumed values of $s$. Specifically, \textit{the two methods exhibit matching performance when $s$ is aligned with the true verifier accuracy}, i.e., at $s = 0.31$ in ground truth case, and $s = 0.27$ when using the approximate verifier. Notably, \textit{\textbf{SMC} underperforms relative to \textbf{SRS} when the assumed $s$ is smaller than the true value} ($s \leq 0.31$ or $s \leq 0.27$), and \textit{outperforms \textbf{SRS} when the assumed $s$ is greater} ($s \geq 0.31$ or $s \geq 0.27$). This crossover behavior illustrates the sensitivity of \textbf{SMC} to over- or under-estimating verifier accuracy, and highlights that \textit{\textbf{SMC} may be advantageous in high-$s$ regimes, whereas \textbf{SRS} is more robust when verifier confidence is low.}



\section{Discussions and Future Works}

We cast test-time verification through the lens of optimal transport. By positing it as a sampling problem, we analyzed how generator's coverage, verifier's accuracy, and sampling algorithms jointly determine sub-optimality and computational complexity. Our analysis, supported by empirical evidence, reveals a three-regime structure in the sub-optimality–coverage tradeoff: a {\em transport regime}, where sub-optimality is dominated by transport cost; a {\em policy-improvement regime}, where sampling can counteract transport cost depending on the verifier’s ROC; and a {\em saturation regime}, where sub-optimality plateaus at a level dictated by the verifier’s Youden's index. These dynamics are exhibited by both the sequential and batched algorithms studied. Notably, {\em rejection sampling–type methods are advantageous under low coverage, while best-of-$N$ approaches excel under liberal coverage.}

Our study also raises several open questions. Analytically, extending from ratio-based to difference-based coverage remains unexplored. More broadly, moving beyond verifiable rewards toward general reward models for inference-time alignment is an important next step. Finally, our premise highlights a fundamental open problem in sampling:
\emph{how can we sample from a target distribution given only proposals, when the target-to-proposal likelihood ratio is partially or fully unknown and must be estimated from samples?}
We conjecture that any such algorithm must explicitly balance exploration --- estimating the likelihood ratio with sufficient confidence --- against exploitation --- using the estimate to make acceptance or stopping decisions.

\newpage
\bibliography{iclr2026_conference, references}

\newpage
\appendix
\part{Appendix}
\parttoc
\clearpage
\section{Literature Review}
\label{sec: literature review appendix}
A crucial bottleneck in pre- and post-training pipelines for large language models (LLMs) is the dwindling supply of high-quality training data, constrained by privacy, security, and cost concerns~\citep{liu2024best, villalobos2024position}. This trend threatens saturation along the train-time scaling axis. In response to such bottlenecks in scaling laws, OpenAI introduced an alternate axis -- {\em test-time scaling} -- and showcased its potential through the OpenAI-o1 release. This shift has delivered substantial gains across diverse benchmarks~\citep{openai2024openaio1card}. Ever since, the community has witnessed a plethora of investigations into attributes including, but not limited to, scaling laws, methodologies, trade-offs, and a theoretically-grounded understanding of the new scaling axis. Test-time scaling is mostly realized through two approaches- {\em verifier-free} and {\em verifier-based}~\citep{setlur2025scaling}.

\textbf{Verifier-free vs. verifier-based methods.} \textbf{Verifier-free methods} involve performing supervised fine-tuning (SFT) on pre-trained LLMs with {\em expert traces}, i.e., high-quality step-by-step rationales that directly supervise the reasoning process. Expert traces can come from diverse sources, such as human-written or curated solutions (e.g., GSM8K~\citep{cobbe2021training}), distilled chain-of-thought (CoT) from stronger teacher models~\citep{muennighoff2025s1}, reasoning trajectories obtained via search procedures~\citep{gandhi2024stream, moon2024guided}, self-bootstrapped rationales where only correct generations are retained~\citep{STaR}, and rationale distillation techniques for transferring reasoning ability across models~\citep{hsieh-etal-2023-distilling}. On the other hand, \textbf{verifier-based methods} deploy a {\em verifier}, a reward-signal apparatus, for guiding the response generation. Verifier assigns a binary $(0\:/\:1)$ value assessing the generation quality, especially in the objective tasks such as math and coding. A verifier is construed from a domain-specific \textit{de facto} ground truth, such as constructing unit tests for coding and correct solutions for mathematical tasks. Verification has been leveraged during both training (also known as reinforcement learning with verifiable rewards)~\citep{guo2025deepseek, luo2025deepscaler, team2025kimi}, and inference~\citep{cobbe2021training}. This approach has exhibited strong test-time scaling performance. Indeed, \citet{setlur2025scaling} show that verifier-based methods provably outperform verifier-free methods in test-time scaling.    

\textbf{Sequential vs. parallel compute.} A complementary concern for test-time scaling methods is how they spend their test-time compute budget. The {\em reasoning} models spend their entire budget sequentially by refining a single trajectory over multiple steps to curate a longer and more accurate response. The sequential sampling process may be verifier-based, e.g., through process reward models~\citep{liao2025reward}, or verifier-free~\citep{chen2023accelerating}. 
On the other hand, {\em resampling} methods adopt a parallel compute mode by dividing its budget to generate multiple responses, and then, distilling a winning response. Popular resampling methods are verifier-based, leveraging a verifier (e.g., unit tests, reward models trained on ground truth responses for objective tasks, etc.) to distill a winning response from its generations~\citep{huang2025best, beirami2024theoretical, cobbe2021training}. The focus of this investigation is on \textbf{verifier-based resampling} methods. 

\textbf{Best-of-$N$ (BoN) sampling.} The most popular verifier-based test-time scaling method is BoN sampling~\citep{brown2024large}. BoN generates $N$ independent responses per prompt, and chooses a winner randomly from the set of correct responses deemed by a verifier. Assuming access to an accurate verifier, \citet{brown2024large} analyze the $pass@N$ metric, i.e., the average fraction of prompts with at least one correct response, and observe an approximate {\em power-law scaling} with $N$. Extending this analysis, \citet{schaeffer2025large} establish an {\em exponential} per-instance scaling law attributing the aggregate power law to the heavy-tailed distribution over prompts. On a complementary note, \citet{beirami2024theoretical, mroueh2024information} analyze the deviation of the BoN alignment policy from the reference policy by computing tight upper-bounds on their KL-divergence. Yet another characteristic, the {\em sample complexity} of BoN to generate correct responses for objective tasks is investigated by~\citet{huang2025sample}. While such scaling laws and divergence bounds are informative, they do not directly address the central goal of resampling: \textit{how well can we approximate the verifier-induced optimal policy that maximizes expected reward?} Variations of BoN, addressing aspects such as $N$-estimation and adaptivity, and finding alternate scoring mechanisms have also been explored. For instance,~\citet{wang2025sampling} truncates BoN generations based on an estimation budget formed by solving an optimization problem, showcasing computational improvement over BoN. \citet{huang2025efficient} preaches why confidence score is preferable compared to reward scores, and thresholds it for ``confident'' test-time scaling. However, the choice of threshold is ad hoc.
\paragraph{Approximately correct verifiers.} Most of the studies on BoN assume access to an accurate verifier that rarely holds in practice. For example, unit tests can miss the edge cases, and verifiers for math benchmarks often capture only a {\em subset} of valid responses. A  relevant case is the default GSM8K evaluation in \texttt{lm-evaluation-harness}~\citep{eval-harness}, which extracts the first match to the pattern ``\texttt{The answer is (-?[0-9.,]+)}''; many correct generations that deviate from this template are thus marked incorrect. These limitations underscore the need to explicitly account for verifier imperfections in the design and analysis of test-time scaling methods --- a dimension largely absent from the literature. We are aware of two investigations accounting for verifier (aka reward model) imperfections in sampling algorithms. \citet{aminian2025best} analyze BoN under a per-prompt mean squared error (MSE) constraint on the verifier. \citet{huang2025best} adopt the same framework to show that BoN’s average reward may fail to scale with $N$ under limited coverage, motivating a rejection sampling (RS) variant that alleviates this issue. Concurrently, \citet{dorner2025rocnrerollverifierimperfectionaffects} study test-time scaling with approximately correct verifiable rewards, characterizing how a verifier's region of convergence (ROC) mediates the trade-off between accuracy and compute. Our investigation complements these perspectives by focusing on generator coverage and showing how coverage, together with the ROC, determines the {\em exact} sub-optimality of a sampling method, rather than only its asymptotic accuracy-compute profile. Here, sub-optimality is defined as the difference between the average reward obtained from the verifier while using an optimal policy and that of the sampling algorithm.

\newpage
\section{Algorithm Pseudo-codes}
\label{appendix: algorithm pseudo-codes}

\begin{algorithm}[H]

\caption{Accept-if-Correct (AiC)}
\label{alg:aic}
\KwIn{Prompt $\bx \in \mcX$, generator $\pi_{\rm ref}(\cdot \mid \bx)$, verifier set $\widehat\mcS(\bx)$}
\For{$n = 1, 2, \dots$}{
    Sample $\bY_n \sim \pi_{\rm ref}(\cdot \mid \bx)$\;
    \If{$\bY_n\in\widehat\mcS(\bx)$}{
        \Return $\bY_n$ \tcp{accept if correct}
    }
    \Else{
        continue \tcp{resample}
    }
}
\end{algorithm}

\begin{algorithm}[H]

\caption{Sequential Rejection Sampling (SRS)}
\label{alg:srs}
\KwIn{Prompt $\bx \in \mcX$; generator $\pi_{\rm ref}(\cdot \mid \bx)$; verifier set $\widehat{\mcS}(\bx)$; envelope $M$; $\widehat{s}\triangleq \mu\!\big(\widehat{\mcS}(\bx)\big)$; constraint $\beta$}
\For{$n = 1, 2, \dots$}{
    Sample $\bY_n \sim \pi_{\rm ref}(\cdot \mid \bx)$ and $u \sim \mathrm{Unif}[0,1]$\;
    Compute $\widehat{\eta}(\bY_n)$ by plugging $\widehat s$ in~\eqref{eq: RN_derivative}\;
    \If{$\bY_n \in \widehat{\mcS}(\bx)$}{
        \Return $\bY_n$ \tcp{accept if verified correct}
    }
    \ElseIf{$\frac{1}{M}\,\widehat{\eta}(\bY_n) \ge u$}{
        \Return $\bY_n$ \tcp{accept (incorrect) via RS to satisfy coverage}
    }
    \Else{
        continue \tcp{reject and resample}
    }
}
\end{algorithm}

\begin{algorithm}[H]

\caption{Sequential Maximal Coupling (SMC)}
\label{alg:smc}
\KwIn{Prompt $\bx \in \mcX$; generator $\pi_{\rm ref}(\cdot \mid \bx)$; verifier set $\widehat{\mcS}(\bx)$; $\widehat{s}\triangleq \mu\!\big(\widehat{\mcS}(\bx)\big)$; constraint $\beta$}
\For{$n = 1, 2, \dots$}{
    Sample $\bY_n \sim \pi_{\rm ref}(\cdot \mid \bx)$ and $u \sim \mathrm{Unif}[0,1]$\;
    Compute $\widehat{\eta}(\bY_n)$ by plugging $\widehat{s}$ in~\eqref{eq: RN_derivative}\;
    \If{$\widehat{\eta}(\bY_n) \ge u$}{
        \Return $\bY_n$ \tcp{accept if coverage is satisfied}
    }
    \Else{
        \tcp{do not advance $n$: keep drawing until verified-correct}
        \While{$\bY_n \notin \widehat{\mcS}(\bx)$}{
            Sample $\bY_n \sim \pi_{\rm ref}(\cdot \mid \bx)$\;
        }
        \Return $\bY_n$ \tcp{first verified-correct draw}
    }
}
\end{algorithm}

\begin{algorithm}[H]

\caption{Batched Rejection Sampling (BRS)}
\label{alg:brs}
\KwIn{Prompt $\bx \in \mcX$; generator $\pi_{\rm ref}(\cdot \mid \bx)$; verifier set $\widehat{\mcS}(\bx)$; envelope $M$; $\widehat{s}\triangleq \mu\!\big(\widehat{\mcS}(\bx)\big)$; batch size $N+1$; constraint $\beta$}
Sample $\bY^{N+1} \triangleq (\bY_1,\ldots,\bY_{N+1})$ i.i.d. from $\pi_{\rm ref}(\cdot \mid \bx)$\;
Draw $u_1,\ldots,u_{N+1} \sim \mathrm{Unif}[0,1]$\;
\For{$i = 1, \ldots, N$}{
    Compute $\widehat{\eta}(\bY_i)$ by plugging $\widehat s$ in~\eqref{eq: RN_derivative}\;
    \If{$\bY_i \in \widehat{\mcS}(\bx)$}{
        \Return $\bY_i$ \tcp{accept if verified correct}
    }
    \ElseIf{$\tfrac{1}{M}\,\widehat{\eta}(\bY_i) \ge u_i$}{
        \Return $\bY_i$ \tcp{accept (incorrect) via RS to satisfy coverage}
    }
}
\Return $\bY_{N+1}$ \tcp{return the last sample if none accepted}
\end{algorithm}
\newpage
\section{Target-to-proposal Radon-Nikodym Derivative (Proof of Theorem~\ref{theorem: optimal policy})}
\label{proof: optimal policy}

Finding the optimal policy $\nu_r$ is equivalently solving the following constrained optimization problem.
\begin{align}
    \mcP(\beta)\quad\triangleq\quad\max_{\eta\geq 0}\;\displaystyle\bigintssss_{\mcS_r} \eta \diff\mu\quad
    {\rm s.t.}\quad\displaystyle\bigintssss \eta^2\diff\mu\;\leq\;\beta\ ,\quad
    {\rm and}\quad\displaystyle\bigintssss \eta\diff\mu\;=\;1\ .
\end{align}
Let us denote the value of $\mcP(\beta)$ by $m$, i.e., $m \triangleq \int_{\mcS_r}\eta_r\diff\mu$.  Using Cauchy-Schwarz inequality, we have
\begin{align}
    \left( \bigintssss_{\mcS_r} \eta_r\diff\mu\right)^2\;\leq\; \bigintssss_{\mcS_r}\eta_r^2\diff\mu\cdot\bigintssss_{\mcS_r}\diff\mu\ ,
\end{align}
which yields:
\begin{align}
    \label{eq: opt_pol_1}
    \bigintssss_{\mcS_r}\eta_r^2\diff\mu\;\geq\;\frac{1}{s_r}m^2\ .
\end{align}
Similarly, from the fact that
\begin{align}
    \left( \bigintssss_{\overline\mcS_r} \eta_r\diff\mu\right)^2\;\leq\; \bigintssss_{\overline\mcS_r}\eta_r^2\diff\mu\cdot\bigintssss_{\overline\mcS_r}\diff\mu\ ,
\end{align}
we obtain:
\begin{align}
    \label{eq: opt_pol_2}
    \bigintssss_{\overline\mcS_r}\eta_r^2\diff\mu\;\geq\;\frac{1}{1-s_r}(1-m)^2\ .
\end{align}
Combining~\eqref{eq: opt_pol_1} and~\eqref{eq: opt_pol_2}, we have
\begin{align}
\label{eq: opt_pol_3}
    \beta\;\geq\;\bigintssss_{\mcY} \eta^2\diff\mu\;\geq\; \frac{1}{s_r}m^2 + \frac{1}{1-s_r}(1-m)^2\ .
\end{align}
Rearranging~\eqref{eq: opt_pol_3}, we obtain:
\begin{align}
    m\;\leq\; \sqrt{s_r(1-s_r)(\beta-1)} + s\ .
\end{align}
Furthermore, since $m$ is the mass that the optimal measure puts on the set of correct responses $\mcS_r$, we have
\begin{align}
\label{eq: value_of_mr}
    m\;\leq\; \Big(1\:\wedge\:\sqrt{s_r(1-s_r)(\beta-1)} + s_r\Big)\ .
\end{align}
Next, we will show that the upper-bound on $m$ is tight. Specifically, we will construct a valid $\eta_r$ such that~\eqref{eq: value_of_mr} holds with equality, noting that Cauchy-Schwarz is tight for constant functions. For some $p_r\in\R$ and $q_r\in\R$, let us~set
\begin{align}
\eta_r(\by)\;=\;
    \begin{cases}
        p_r\ ,\quad &\by\in\mcS\ ,\nonumber\\
        q_r\ , \quad &\by\notin\mcS\ .
    \end{cases}
\end{align}
Since $\nu_r$ is a probability measure, we have
\begin{align}
\label{eq: opt_pol_4}
    1\;=\;\bigintssss \eta_r\diff\mu\;=\; \bigintssss_{\mcS_r} \eta_r\diff\mu + \bigintssss_{\overline\mcS_r} \eta_r\diff\mu\;=\; \underbrace{p_r\cdot s_r}_{=\;m} + q_r\cdot(1-s_r)\ .
\end{align}
From~\eqref{eq: opt_pol_4} we have
\begin{align}
    p_r\;=\; \frac{m}{s_r}\ ,\quad\text{and}\quad q_r\;=\;\frac{1-m}{1-s_r}\ .
\end{align}
The proof concludes by setting $m= (1\:\wedge\:\sqrt{s_r(1-s_r)(\beta-1)} + s_r)$.
\section{Auxiliary Lemmas}

\subsection{Optimal Transport Cost (Proof of Lemma~\ref{lemma: OHC})}
\label{proof: OHC_lemma}
Let us define the total variation (TV) distance between measures $\mu$ and $\nu$ defined on a common measurable space $(\mcY,\mathfrak{B}(\mcY))$ as
\begin{align}
\label{eq: TV_def}
    D_{\sf TV}(\mu\|\nu)\;\triangleq\; \frac{1}{2}\cdot\displaystyle\int_{\mcY}\big\lvert \mu(\diff\by) - \nu(\diff\by)\big\rvert\ .
\end{align}
We have
\begin{align}
    {\rm OHC}(\beta)\;&=\;\min\limits_{\rho\in\mcM(\mu,\nu^\star)}\;\displaystyle\int \mathds{1}\{\by\neq\bz\}\diff\rho(\by,\bz)\\
    &=\; \min\limits_{\rho\in\mcM(\mu,\nu^\star)}\;\P_{\by,\bz\sim\rho}(\by\neq\bz)\\
    \label{eq: ohc_proof_1}
    &=\; D_{\sf TV}(\mu\|\nu^\star)\\
    &\stackrel{\eqref{eq: TV_def}}{=}\; \frac{1}{2}\left( \displaystyle\int_{\mcS^\star}|\mu(\diff\by) - \nu^\star(\diff \by)| + \displaystyle\int_{\overline\mcS^\star}|\mu(\diff\by) - \nu^\star(\diff \by)|\right)\\
    &\stackrel{\eqref{eq: RN_derivative}}{=}\;\frac{1}{2}\bigg( \Big\lvert\big(1\wedge m_{\beta}(s_{r^\star})\big) - s_{r^\star}\Big\rvert + \Big\lvert\big(0\vee 1-m_{\beta}(s_{r^\star})\big) - (1-s_{r^\star})\Big\rvert\bigg)\\
    &=\;\Big(1\wedge m_{\beta}(s_{r^\star})\Big) - s_{r^\star}\ ,
    \label{eq: OHC_1}
\end{align}
where~\eqref{eq: ohc_proof_1} is a well known result, see, for example, \citep[page $22$]{villani2008optimal}, and~\eqref{eq: OHC_1} follows by noting that $m_{\beta}(s_{r^\star})\geq s_{r^\star}$ by definition, since $m_{\beta}(s_{r^\star})$ is the mass that the {\em optimal policy} puts on $\mcS^\star$, and must be at least equal to $s_{r^\star}$.

\subsection{BoN Sampling Distribution}
In this subsection, we characterize the BoN sampling distribution, where the BoN algorithm has access to a membership oracle $\mcS$, such that $r(\bx,\by)=\mathds{1}\{\by\in\mcS\}$ for any prompt $\bx\in\mcX$ and response $\by\in\mcY$. Note that the analysis for BoN sampling distribution presented in~\citep{beirami2024theoretical} \textbf{does not apply} to our setting, since it assumes a {\em strictly monotonic} reward function. We have the following result. Let us denote $s\triangleq\int_{\mcS} r\diff\mu$.

\begin{lemma}[BoN -- Radon-Nikodym derivative]
\label{lemma: BoN_RN}
    Let $\nu_{\rm BoN}^{(N)}$ denote the sampling distribution induced by BoN with batch size $N+1$ and access to a membership oracle $\mcS$. We have
    \begin{align}
    \dfrac{\diff\nu_{\rm BoN}^{(N)}}{\diff\mu}(\by) \;\triangleq\;
    \begin{cases}
        (1-s)^N, & \text{if }\quad \by\notin\mcS, \\
        \frac{1}{s}\Big( 1- (1-s)^{N+1}\Big), 
        & \text{if }\quad \by\in\mcS\ .
    \end{cases}
\end{align}
\end{lemma}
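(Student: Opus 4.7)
The plan is to characterize $\nu_{\rm BoN}^{(N)}$ by conditioning on the number of correct samples among the $N$ inspected responses. Let $K \triangleq |\{i \in [N] : \bY_i \in \mcS\}|$; since $\bY_1, \ldots, \bY_N$ are iid from $\mu$ with $\mu(\mcS) = s$, $K \sim \mathrm{Binomial}(N, s)$. On the event $\{K = 0\}$, the algorithm's fallback returns the uninspected $(N+1)$-th sample, which is independent of $K$ and marginally distributed as $\mu$. On the event $\{K = k\}$ for $k \geq 1$, the output $\bZ$ is drawn uniformly from the $k$ verified-correct samples; by the iid assumption and exchangeability, each such sample (conditional on $\bY_i \in \mcS$) has law $\mu(\cdot \mid \mcS) = s^{-1}\,\mu|_{\mcS}(\cdot)$, so $\P(\bZ \in A \mid K = k) = \mu(A)/s$ for every $A \subseteq \mcS$ uniformly in $k \geq 1$.

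Next, for any measurable $A \in \mathfrak{B}(\mcY)$, I would expand
\begin{equation*}
\nu_{\rm BoN}^{(N)}(A) \;=\; \sum_{k=0}^{N} \P(K = k) \cdot \P(\bZ \in A \mid K = k)
\end{equation*}
and split into the two cases defining the density. If $A \subseteq \overline{\mcS}$, only the $k=0$ term survives, because $\bZ \in \mcS$ whenever $K \geq 1$; this immediately yields $\nu_{\rm BoN}^{(N)}(A) = (1-s)^N\,\mu(A)$, giving density $(1-s)^N$ on $\overline{\mcS}$. If $A \subseteq \mcS$, the $k=0$ term still contributes $(1-s)^N\,\mu(A)$ (from the unconditional law of $\bY_{N+1}$), while each $k \geq 1$ contributes $\binom{N}{k} s^k (1-s)^{N-k}\cdot \mu(A)/s$.

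Summing over $k \geq 1$ via the binomial identity $\sum_{k=1}^N \binom{N}{k} s^k (1-s)^{N-k} = 1 - (1-s)^N$ and combining with the $k=0$ term gives
\begin{equation*}
\nu_{\rm BoN}^{(N)}(A) \;=\; \mu(A)\left[(1-s)^N + \frac{1-(1-s)^N}{s}\right] \;=\; \mu(A)\cdot\frac{1 - (1-s)^{N+1}}{s}
\end{equation*}
for $A \subseteq \mcS$, which is the claimed density. A quick sanity check confirms total mass: $(1-s)\cdot(1-s)^N + s\cdot s^{-1}(1-(1-s)^{N+1}) = 1$. I would also cross-validate by noting that $\nu_{\rm BoN}^{(N)}(\mcS^\star) = 1-(1-s_{r^\star})^{N+1}$, which is precisely what is needed for \Cref{theorem: BoN_suboptimality}.

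There is no real technical obstacle here: the argument is pure conditioning and a binomial sum. The one subtlety worth flagging in the writeup is the modelling choice on $\{K=0\}$: the returned sample must be $\bY_{N+1}$ (which carries the unconditional law $\mu$), rather than a uniform pick among the $N$ verified-incorrect samples (which would carry $\mu(\cdot\mid\overline{\mcS})$) or among all $N+1$ (a mixture). Misreading the fallback would inflate or deflate the density on $\overline{\mcS}$ by a factor $(1-s)^{-1}$ and destroy the clean closed form, so I would make the convention explicit before starting the conditioning argument.
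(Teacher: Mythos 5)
Your proof is correct and takes a genuinely different route from the paper. You condition on $K$, the number of verified-correct responses among the $N$ inspected samples, which reduces the density on $\mcS$ to the trivial binomial identity $\sum_{k=1}^N\binom{N}{k}s^k(1-s)^{N-k}=1-(1-s)^N$ together with the observation that a uniform pick from the correct subset has law $\mu(\cdot\mid\mcS)$. The paper instead exploits exchangeability of all $N+1$ responses, writes $\nu_{\rm BoN}^{(N)}(\mcA)=(N+1)\P(\bY_1\in\mcA,\text{response 1 selected})$, and splits the acceptance event by whether $\bY_1\in\mcS$; this leads to the messier sum $\sum_{m=0}^N\frac{\binom{N}{m}}{m+1}s^m(1-s)^{N-m}$, which the paper evaluates via the integral representation $\frac{1}{m+1}=\int_0^1 t^m\,\diff t$. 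Your decomposition is more elementary and avoids that trick entirely. On the modelling caveat you raise: the paper's $T_1$ computation implicitly treats the fallback as a uniform draw over all $N+1$ responses (it requires all $N+1$ to be incorrect and multiplies by $\tfrac{1}{N+1}$), whereas you take the fallback to be the deterministic return of the uninspected $\bY_{N+1}$; these two conventions yield the same marginal density (and hence the same lemma), but you are right that a third reading --- uniform over only the $N$ verified-incorrect samples --- would break the closed form, so flagging the convention explicitly is a genuine improvement over the paper's exposition, whose protocol description leaves the fallback ambiguous.
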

\begin{proof}
    For any set $\mcA\in\mcY$, we have
    \begin{align}
        \nu_{\rm BoN}^{(N)}(\mcA)\;&=\;\sum\limits_{n\in[N+1]}\P\Big( \bY_n\in\mcA,\: \text{response }n\text{ is selected}\Big)\;\\
        &=\;(N+1)\P\Big( \bY_n\in\mcA,\: \text{response }n\text{ is selected}\Big)\ ,
    \end{align}
    which follows from the independence of the sampled response $(\bY_1,\cdots,\bY_{N+1})$. Next, note that
    \begin{align}
        \P\big(&\bY_1\in\mcA,\:\text{response }1\text{ is selected} \big)\;\\
        &=\;\displaystyle\int \P\big(\bY_1\in\mcA,\:\text{response }1\text{ is selected} \med \bY_1 = \by\big)\mu(\diff\by)\\
        &=\displaystyle\int \P\big(\bY_1\in\mcA\med\text{response }1\text{ is selected},\:\bY_1=\by \big)\cdot\P\big(\text{response }1\text{ is selected}\med\bY_1=\by\big)\mu(\diff\by)\\
        &=\;\displaystyle\int\P\big(\bY_1\in\mcA\med\bY_1=\by\big)\cdot\P\big( \text{response }1\text{ is selected}\med\bY_!=\by\big)\mu(\diff\by)\\
        &=\;\displaystyle\int \mathds{1}\big\{ \by\in\mcA\big\}\cdot\P\big( \text{response }1\text{ is selected}\med\bY_1=\by\big)\mu(\diff \by)\ ,
    \end{align}
    which implies that
    \begin{align}
        \nu_{\rm BoN}^{(N)}(\mcA)\;=\;(N+1)\displaystyle\int \mathds{1}\big\{ \by\in\mcA\big\}\cdot\P\big( \text{response }1\text{ is selected}\med\bY_1=\by\big)\mu(\diff \by)\ .
    \end{align}
    Thus, the Radon-Nikodym derivative of $\nu^{(N)}_{\rm BoN}$ with respect to the proposal $\mu$ is given by
    \begin{align}
        \frac{\diff\nu^{(N)}_{\rm BoN}}{\diff\mu}(\by)\;&=\;(N+1)\cdot\P\big( \text{response }1\text{ is selected}\med\bY_1=\by\big)\\
        &=\;(N+1)\underbrace{\P\big( \text{response }1\text{ is selected},\:\bY_1\notin\mcS\med\bY_1=\by\big)}_{\triangleq\;T_1}\\
        &\qquad+(N+1)\cdot\underbrace{\P\big( \text{response }1\text{ is selected},\:\bY_1\in\mcS\med\bY_1=\by\big)}_{\triangleq\;T_2}\ .
    \end{align}
    Expanding $T_1$, we have
    \begin{align}
        T_1\;&=\;\P\big(\bY_j\notin\mcS\;\forall\;j\in[N+1],\;\text{response }1\text{ is selected} \med\bY_1=\by\big)\\
        &=\;\P\big( \text{response }1\text{ is selected}\med\bY_1=\by,\;\bY_j\notin\mcS\;\forall\;j\in[N+1]\big)\\
        &\qquad\qquad\times\P\big(\bY_j\notin\mcS\;\forall\;j\in[N+1]\med\bY_1=\by\big)\\
        &=\;\frac{1}{N+1}\displaystyle\prod\limits_{j\in[N+1]} \P\big( \bY_j\notin\mcS\med\bY_1=\by\big)\\
        &=\;\frac{1}{N+1}\cdot(1-s)^N\mathds{1}\{\by\notin\mcS\}\ .
        \label{eq: BoN_RN_1}
    \end{align}
    Furthermore, we have
    \begin{align}
        T_2\;&=\;\sum\limits_{m\in[N+1]}\P\Big((\bY_1,\cdots,\bY_m)\in\mcS^{\otimes m},\;(\bY_{m+1},\cdots,\bY_{N+1})\notin\mcS^{\otimes N-m},\\
        &\qquad\qquad\qquad\qquad\text{response }1\text{ is selected}\med\bY_1=\by \Big)\\
        &=\;\sum\limits_{m=0}^{N+1} \frac{\binom{N}{m-1}}{m}\cdot s^{m-1}\cdot(1-s)^{N-m+1}\mathds{1}\{\by\in\mcS\}\\
        &=\;\sum\limits_{m=0}^N \frac{\binom{N}{m}}{m+1}\cdot s^{m}\cdot(1-s)^{N-m}\mathds{1}\{\by\in\mcS\}\ .
        \label{eq: BoN_RN_2}
    \end{align}
    Combining~\eqref{eq: BoN_RN_1} and~\eqref{eq: BoN_RN_2}, we get:
    \begin{align}
    \dfrac{\diff\nu_{\rm BoN}^{(N)}}{\diff\mu}(\by) \;\triangleq\;
    \begin{cases}
        (1-s)^N, & \text{if }\quad \by\notin\mcS, \\
        (N+1)\cdot\sum\limits_{m=0}^N \frac{\binom{N}{m}}{m+1}\cdot s^{m}\cdot(1-s)^{N-m}, 
        & \text{if }\quad \by\in\mcS\ .
        \label{eq: BoN_RN_3}
    \end{cases}
\end{align}
Furthermore, note that $\int_0^1 t^m\diff t = \frac{1}{m+1}$. Hence, we can further simplify~\eqref{eq: BoN_RN_3} as follows.
\begin{align}
        \cdot\sum\limits_{m=0}^N \frac{\binom{N}{m}}{m+1}\cdot s^{m}\cdot(1-s)^{N-m}\;&=\; 
        \cdot\sum\limits_{m=0}^N \binom{N}{m}\cdot s^{m}\cdot(1-s)^{N-m}\int_0^1t^m\diff t\\
        &=\;\displaystyle\int_0^1 \sum\limits_{m=0}^N \binom{N}{m} (st)^m\cdot(1-s)^{N-m}\diff t\\
        &=\;\displaystyle
        \int_0^1 \big( 1-s+st\big)^N\diff t\\
        &=\;\frac{1}{s}\cdot \dfrac{1-(1-s)^{N+1}}{N+1}\ ,
\end{align}
which yields the desired result.
\end{proof}

\subsection{BRS Sampling Distribution}
In this subsection, we characterize the BRS sampling distribution, where we assume BRS' acces to a membership oracle $\mcS$ obtainable through a verifier $r(\bx,\by)=\mathds{1}\{\by\in\mcS\}$ for any prompt $\bx\in\mcX$ and response $\by\in\mcY$. Denoting $s\triangleq\int_{\mcS}r\diff\mu$, we have the following lemma.
\begin{lemma}[BRS -- Radon-Nikodym derivative]
\label{lemma: BRS_RN}
    Let $\nu_{\rm BRS}^{(N)}$ denote the sampling distribution induced by BRS with batch size $N+1$ and access to a membership oracle $\mcS$. Furthermore, let $\nu$ denote the optimal policy in $\Pi(\beta\med\bx)$ induced by $\mcS$, i.e., $\nu \triangleq \argmax_{\rho\in\Pi(\beta\med\bx)} \int_{\mcS} \diff\rho$. We have
    \begin{align}
        \dfrac{\diff\nu_{\rm BRS}^{(N)}}{\diff\mu}(\by)\;=\;\left( 1-\left(1-\frac{1}{M}\right)^N\right)\frac{\diff\nu}{\diff\mu}(\by) + \left( 1-\frac{1}{M}\right)^N\ .
    \end{align}
\end{lemma}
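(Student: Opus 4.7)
The plan is to analyze the BRS procedure round-by-round, show that a single rejection step produces a sample distributed as $\nu$ conditional on acceptance, and then decompose $\nu_{\rm BRS}^{(N)}$ via the law of total probability across the event ``some acceptance among the first $N$ proposals'' versus its complement.

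First I would verify the key algebraic fact that for the target $\nu$ induced by $\mathcal{S}$ and envelope $M \triangleq \max\{\tfrac{1}{s}\wedge\tfrac{m_\beta(s)}{s},\ 0\vee\tfrac{1-m_\beta(s)}{1-s}\}$, the Radon-Nikodym derivative $\eta \triangleq d\nu/d\mu$ from Theorem~\ref{theorem: optimal policy} satisfies $\eta(y) = M$ whenever $y \in \mathcal{S}$. This is a quick case check on $m_\beta(s) \lessgtr 1$ and allows me to unify the two acceptance branches of Algorithm~\ref{alg:brs} into the single statement: given $Y \sim \mu$ and $u \sim \mathrm{Unif}[0,1]$ independent, the conditional acceptance probability is
\begin{equation*}
\Pr(\text{accept} \mid Y = y) \;=\; \frac{\eta(y)}{M}.
\end{equation*}
Integrating against $\mu$ and using $\int \eta \diff \mu = 1$ gives a per-round acceptance probability of exactly $1/M$, and Bayes' rule gives that, conditional on acceptance, the accepted sample has law $\nu$.

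Next I would define $T^\star \triangleq \min\{i \in [N] : Y_i \text{ is accepted}\}$, with the convention $T^\star = N+1$ when no acceptance occurs in the first $N$ rounds. By independence of the samples and uniform draws, the event $\{T^\star = i\}$ for $i \leq N$ has probability $(1 - 1/M)^{i-1}(1/M)$, and conditional on this event the returned sample $Y_{T^\star}$ is distributed as $\nu$. The residual event $\{T^\star = N+1\}$ has probability $(1-1/M)^N$; in that case BRS returns $Y_{N+1}$, which is independent of the first $N$ proposals and thus still distributed as $\mu$. Summing these contributions for any Borel set $A \in \mathfrak{B}(\mcY)$,
\begin{equation*}
\nu_{\rm BRS}^{(N)}(A) \;=\; \sum_{i=1}^{N}\Bigl(1 - \tfrac{1}{M}\Bigr)^{i-1}\tfrac{1}{M}\,\nu(A) \;+\; \Bigl(1 - \tfrac{1}{M}\Bigr)^N \mu(A) \;=\; \Bigl(1 - (1-\tfrac{1}{M})^N\Bigr)\nu(A) + (1-\tfrac{1}{M})^N\,\mu(A),
\end{equation*}
after evaluating the geometric sum. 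Differentiating with respect to $\mu$ yields the claimed identity.

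The only subtle point, and the main place to be careful, is the first step: checking that the two-branch acceptance rule of Algorithm~\ref{alg:brs} collapses to the unified probability $\eta(y)/M$ across both regimes of $m_\beta(s)$. Once this is established, the rest is a standard first-acceptance-time decomposition and a geometric series, with the mixture form arising naturally from the fallback step that returns $Y_{N+1}\sim\mu$ when the rejection loop exhausts the batch.
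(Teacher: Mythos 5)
Your proof is correct and is essentially the same argument as the paper's, differing only in how it is organized. The paper writes out the Markov kernel $K(\by^{N+1},\diff\bz)$ explicitly (as a sum of acceptance-at-step-$n$ terms plus a residual term) and then marginalizes over $\mu^{\otimes(N+1)}$; you instead package the same computation as a first-acceptance-time decomposition plus the standard rejection-sampling facts that the per-round acceptance probability is exactly $1/M$ (from $\int \eta\,\diff\mu = 1$) and that the accepted sample has conditional law $\nu$. Your version is arguably cleaner, and you make explicit a point the paper glosses over: that the two branches of Algorithm~\ref{alg:brs} (accept deterministically on $\mcS$, else accept w.p.\ $\eta(\by)/M$) collapse into the single acceptance rule $\eta(\by)/M$ precisely because the tightest envelope satisfies $\eta(\by) = M$ for $\by \in \mcS$ in both regimes of $m_\beta(s)$. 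This check is genuinely needed for the kernel used in the paper's proof to match the algorithm as written, so spelling it out is a small improvement in rigor rather than a deviation.
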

\begin{proof}
    Recall that BRS obtains a batch of $N+1$ samples, which we denote by $\by^{N+1}\triangleq (\by_1,\cdots,\bY_{N+1})$. Denote the target-to-proposal Radon-Nikodym derivative that BRS uses to accept sample $\by$ by $\eta(\by)$. For our analysis, this corresponds to~\eqref{eq: RN_derivative} induced by $\mcS$. The conditional probability kernel for the BRS sampling strategy, which we denote by $K(\by^{N+1},\diff\bz)$, is given by
    \begin{align}
        K\big(\by^{N+1},\diff\bz \big)\;&=\; \sum\limits_{n\in[N]} \left(\frac{1}{M}\eta(\by_n)\displaystyle\prod\limits_{j<n}\left( 1-\frac{1}{M}\eta(\by_j)\right) \right)\cdot\delta_{\by_n}(\diff\bz)\\
        &\qquad\qquad\qquad+ \left(\displaystyle\prod_{n\in\N} \left( 1-\frac{1}{M}\eta(\by_n)\right)\right)\mu(\diff\bz)\ .
    \end{align}
    The BRS coupling is then obtained as
    \begin{align}
        \rho^{(N)}_{\rm BRS}\;=\;K\big(\by^{N+1}, \diff\bz \big)\cdot\mu^{\otimes(N+1)}(\diff\by^{N+1})\ .
        \label{eq: RN_BRS_1}
    \end{align}
    Marginalizing~\eqref{eq: RN_BRS_1} with respect to $\by^{N+1}$, we obtain
    \begin{align}
        \nu^{(N)}_{\rm BRS}(\diff\bz)\;&=\;\displaystyle\int\rho(\diff\by^{N+1},\diff\bz)\\
        &=\;\bigintsss \sum\limits_{n\in[N]} \left(\frac{1}{M}\eta(\by_n)\displaystyle\prod\limits_{j<n}\left( 1-\frac{1}{M}\eta(\by_j)\right) \right)\cdot\delta_{\by_n}(\diff\bz)\mu^{\otimes(N+1)}(\diff\by^{N+1})\\
        &\qquad\qquad\qquad+ \bigintsss\left(\displaystyle\prod_{n\in\N} \left( 1-\frac{1}{M}\eta(\by_n)\right)\right)\mu^{\otimes(N+1)}(\diff\by^{N+1})\mu(\diff\bz)\\
        &=\sum\limits_{n\in[N]} \left( \displaystyle\prod_{j<n}\displaystyle\int \left(1-\frac{1}{M}\frac{\diff\nu}{\diff\mu}(\by_j)\mu(\diff\by_j) \right)\right)\cdot\left( \displaystyle\int\frac{1}{M}\diff\nu(\by_n)\delta_{\by_n}(\diff\bz)\right)\\
        &\qquad\qquad\qquad+ \mu(\diff\bz)\displaystyle\prod_{n\in[N]}\displaystyle\int\left( 1-\frac{1}{M}\frac{\diff\nu}{\diff\mu}(\by_n)\right)\mu(\diff\by_n)\\
        &=\;\frac{1}{M}\nu(\diff\bz)\cdot\sum\limits_{n\in[N]} \left( 1-\frac{1}{M}\right)^{n-1} + \left( 1-\frac{1}{M}\right)^N\mu(\diff\bz)\\
        &=\;\left(1 - \left( 1-\frac{1}{M}\right)^N\right)\nu(\diff\bz) + \left( 1-\frac{1}{M}\right)^N\mu(\diff\bz)\ , 
    \end{align}
    and the lemma readily follows.
\end{proof}

\section{Properties of ROC}
\label{sec: properties_of_ROC}

In this section, we briefly review the properties of the ROC for completeness and introduce useful definitions that will be leveraged in our subsequent analysis. Recall that $s_{\widehat r} = \mu(\widehat\mcS)$.
\begin{itemize}
\item {\em True positives (TP):} samples correctly identified by the verifier, i.e., $\widehat\mcS \cap \mcS^\star$. The reference mass is ${\rm TP} \triangleq \mu(\widehat\mcS \cap \mcS^\star)$.
{\em False positives (FP):} incorrect responses accepted as correct, i.e., $\widehat\mcS \setminus \mcS^\star$, with mass ${\rm FP} \triangleq \mu(\widehat\mcS \setminus \mcS^\star)$.
\item {\em False negatives (FN):} correct responses rejected by the verifier, i.e., $\mcS^\star \setminus \widehat\mcS$, with mass ${\rm FN} \triangleq \mu(\mcS^\star \setminus \widehat\mcS)$.  
{\em True negatives (TN):} incorrect responses correctly rejected, i.e., $\mcY \setminus (\mcS^\star \cup \widehat\mcS)$, with mass ${\rm TN} \triangleq \mu(\mcY \setminus (\mcS^\star \cup \widehat\mcS))$.  

\item {\em True positive rate (TPR):} the fraction of true positives among all ground-truth correct responses:  
\[
    {\rm TPR} \;=\; \frac{{\rm TP}}{{\rm TP}+{\rm FN}} \;=\; \frac{1}{s_{r^\star}}\,\mu(\mcS^\star \cap \widehat\mcS).
\]  
Thus, ${\rm TP} = s_{r^\star}\cdot{\rm TPR}$.  

\item {\em False positive rate (FPR):} the fraction of false positives among all ground-truth incorrect responses:  
\[
    {\rm FPR} \;=\; \frac{{\rm FP}}{{\rm FP}+{\rm TN}} \;=\; \frac{1}{1-s_{r^\star}}\,\mu(\widehat\mcS \setminus \mcS^\star).
\]  
Thus, ${\rm FP} = (1-s_{r^\star})\cdot{\rm FPR}$.  

\item It follows that  
\begin{align}
\label{eq: s_ver}
    s_{\widehat r} \;=\; \mu(\mcS^\star \cap \widehat\mcS) + \mu(\widehat\mcS \setminus \mcS^\star) 
    \;=\; s_{r^\star}\cdot{\rm TPR} + (1-s_{r^\star})\cdot{\rm FPR} = s_{\rm ver}.
\end{align}

\item Likewise,  
\begin{align}
\label{eq: FN_and_TPR}
    {\rm FN} \;=\; s_{r^\star}\cdot (1-{\rm TPR}).
\end{align}
\end{itemize}

\section{Complexity and Sub-optimality of AiC (Proof of Theorem~\ref{theorem: AiC})}
\label{proof: theorem_AiC}

\paragraph{Computational complexity.} For AiC, the acceptance probability is given by
\begin{align}
    p_{\rm AiC}\;\triangleq\;\P(\bZ = \bY\med \bY\sim\pi_{\rm ref}(\cdot\med\bX))\;=\;\displaystyle\int_{\widehat\mcS} \mu(\diff\by)\;=\;s_{\widehat r}\ .
\end{align}
As shown in Appendix~\ref{sec: properties_of_ROC}, since $s_{\widehat r} = s_{\rm ver}$, we have $p_{\rm AiC} = 1/s_{\rm ver}$. Finally, 
\begin{align}
    \E[\tau_{\rm AiC}]\;=\;p_{\rm AiC}\sum\limits_{n\in\N} n\cdot(1-p_{\rm AiC})^{n-1}\;=\;\frac{1}{p_{\rm AiC}}\;=\;\frac{1}{s_{\rm ver}}\ .
\end{align}

\paragraph{Sub-optimality.} The mass that the AiC sampling rule assigns to the ground truth set $\mcS^\star$ is given by
\begin{align}
    \nu_{\rm AiC}(\mcS^\star)\;&=\;\nu_{AiC}(\mcS^\star\cap\widehat\mcS) + \nu_{\rm AiC}(\mcS^\star\setminus\widehat\mcS)\\
    &=\; \frac{1}{s_{r^\star}} \mu(\mcS^\star\cap\widehat\mcS)\\
    &=\; \frac{s}{s_{\rm ver}}\cdot {\rm TPR}\ ,
    \label{eq: theorem_AiC_1}
\end{align}
where~\eqref{eq: theorem_AiC_1} follows from Appendix~\ref{sec: properties_of_ROC}. Hence, we have
\begin{align}
    \nu_{\rm AiC}(\mcS^\star) - \mu(\mcS^\star) \;&=\; \frac{s}{s_{\rm ver}}\cdot {\rm TPR} - s_{r^\star}\\
    &=\; \frac{s}{s_{\rm ver}}\cdot {\rm TPR} -1 + 1 - s_{r^\star}\\
    &\stackrel{\eqref{eq: s_ver}}{=}\;\frac{s_{r^\star}\cdot{\rm TPR} - (s_{r^\star}\cdot {\rm TPR} + (1-s_{r^\star})\cdot{\rm FPR})}{s_{\rm ver}} + (1-s_{r^\star})\\
    &=\;\frac{1-s_{r^\star}}{s_{\rm ver}}(s_{\rm ver} - {\rm FPR})\\
    &\stackrel{\eqref{eq: s_ver}}{=}\;\frac{1-s_{r^\star}}{s_{\rm ver}}\Big( (s_{r^\star}\cdot {\rm TPR} + (1-s_{r^\star})\cdot{\rm FPR}) - {\rm FPR}\Big)\\
    &=\;\frac{1}{s_{\rm ver}}\cdot s_{r^\star}(1-s_{r^\star})\cdot J\ .
    \label{eq: theorem_AiC_2}
\end{align}
Next, note that
\begin{align}
    {\rm SubOpt}({\rm AiC})\;&\stackrel{\eqref{eq:sub_opt_def}}{=}\; \nu^\star(\mcS^\star) - \nu_{\rm AiC}(\mcS^\star)\\
    &=\;\nu^\star(\mcS^\star) - \mu(\mcS^\star) + \mu(\mcS^\star) - \nu_{\rm AiC}(\mcS^\star)\\
    &=\;{\rm OHC}(\beta) - \frac{1}{s_{\rm ver}}\cdot s_{r^\star}(1-s_{r^\star})\cdot J\ ,
    \label{eq: theorem_AiC_3}
\end{align}
where~\eqref{eq: theorem_AiC_3} follows by noting that $\nu^\star(\mcS^\star) = (1\wedge m_{\beta}(s_{r^\star}))$ (using~\eqref{eq: RN_derivative}), followed by using Theorem~\ref{lemma: OHC}, and finally combining it with~\eqref{eq: theorem_AiC_2}.
\begin{itemize}
    \item \underline{Large coverage -- $\beta > \frac{1}{s_{r^\star}}$:} In this regime, we have ${\rm OHC}(\beta) = 1 - s_{r^\star}$, and hence, we have:
    \begin{align}
        {\rm SubOpt}({\rm AiC})\;=\;{\rm OHC}(\beta)\left(1-\frac{s_{r^\star}}{s_{\rm ver}}\cdot J\right)\ .
    \end{align}
    \item \underline{Small coverage -- $\beta \leq \frac{1}{s_{r^\star}}$:} In this regime, the optimal transport cost is increasing in $\beta$, and is given by ${\rm OHC}(\beta) = \sqrt{s_{r^\star}(1-s_{r^\star})(\beta-1)}$, and hence, we obtain:
    \begin{align}
        {\rm SubOpt}({\rm AiC})\;=\;{\rm OHC}(\beta)\cdot\left( 1- \frac{1}{s_{\rm ver}}\sqrt{\frac{s_{r^\star}(1-s_{r^\star})}{\beta - 1}}\cdot J\right)\ .
    \end{align}
\end{itemize}
\section{AiC Constraint Violation (Proof of Theorem~\ref{theorem: AiC_constraint_violation})}
\label{proof: theorem_AiC_constraint_violation}

Note that
\begin{align}
    \nu_{\rm AiC}(\diff\bz)\;=\;\mu(\diff\bz\med\widehat\mcS)\;\stackrel{\eqref{eq: s_ver}}{=}\;\frac{1}{s_{\rm ver}}\cdot\mu(\diff\bz\cap\widehat\mcS)\;=\;\frac{\mathds{1}\{\bz\in\widehat\mcS\}}{s_{\rm ver}}\cdot\mu(\diff\bz)\ .
\end{align}
Hence, we have
\begin{align}
    \chi^2(\mu\|\nu_{\rm AiC})\;=\;\displaystyle\int_{\mcY} \left( \frac{\diff\nu_{\rm AiC}}{\diff\mu}\right)^2\diff\mu - 1\;=\;\displaystyle\int_{\widehat\mcS} \left(\frac{1}{s_{\rm ver}} \right)^2 - 1\;=\; \frac{1}{s_{\rm ver}} - 1\ ,
\end{align}
which implies that based on our coverage constraint, we must have $\beta\geq\frac{1}{s_{\rm ver}}\ .$
\section{SMC Residual Measure (Proof of Theorem~\ref{theorem: residual})}
\label{proof: residual}

Let us define the {\em minimum} measure $\lambda\triangleq (\mu\wedge\nu)$. Furthermore, let us assume that $m_{\beta}(s_{r})\geq s_{r}$; the complementary case follows analogously. Based on the closed-form expression for the Radon-Nikodym derivative of the target-to-proposal measures stated in~\eqref{eq: RN_derivative}, we have
\begin{align}
\label{eq: mu_res_1}
    \lambda\;=\; \left( 1\wedge\frac{\diff\nu}{\diff\mu}\right)\cdot\mu\;=\;\mu\med_{\mcS} + \frac{1-m_{\beta}(s_{r})}{1-s_{r}}\cdot\mu\med_{\overline\mcS}\ ,
\end{align}
which gives
\begin{align}
    \nu - \lambda\;=\;(p-1)\cdot\mu\med_{\mcS} + (q-q)\mu\med_{\overline\mcS}\ ,
    \label{eq: mu_res_2}
\end{align}
where we have set
\begin{align}
    p\;\triangleq\;\left(\frac{1}{s_r}\wedge\frac{m_{\beta}(s_r)}{s_r} \right)\ ,\qquad\text{and}\qquad q\;\triangleq\;\left(\frac{1-m_{\beta}(s_r)}{1-s_r}\vee 0\right)\ .
\end{align}
Furthermore,
\begin{align}
    \lambda(\mcY)\;&\stackrel{\eqref{eq: mu_res_1}}{=}\; s_{r} + \frac{1-m_{\beta}(s_r)}{1-s_r}\cdot (1-s_r)\\
    &= 1- (m_{\beta}(s_r) - s_r)\\
    &\;=1 - s_r\left(\left( \frac{m_{\beta}(s_r)}{s_r}\wedge\frac{1}{s_r}\right)-1 \right)\\
    &= 1 -s_r(p-1)\ .
    \label{eq: mu_res_3}
\end{align}
Finally, we have
\begin{align}
    \mu_{\rm res}\;\stackrel{\eqref{eq: mu_res_2}-\eqref{eq: mu_res_3}}{=}\;\frac{(p-1)\mu(\cdot\cap\mcS)}{s_r(p-1)}\;=\;\mu(\cdot\med\mcS)\ .
\end{align}

\section{SRS / SMC Computational Complexity (Proof of Theorem~\ref{theorem: computational complexity})}
\label{proof: computational complexity}

\underline{\textbf{Computational complexity of SRS:}} We have
\begin{align}
    &\P\big(\bZ = \bY\med\bY\sim\pi_{\rm ref}(\cdot\med\bx) \big)\\
    &=\; \P\big(\bZ = \bY\med\bY\sim\pi_{\rm ref}(\cdot\med\bx)\:,\: \bY\in\widehat\mcS\big)\cdot\underbrace{\P\big(\bY\in\widehat\mcS\big)}_{\stackrel{\eqref{eq: s_ver}}{=}\;s_{\rm ver}}\\
    &\qquad+ \P\big(\bZ = \bY\med\bY\sim\pi_{\rm ref}(\cdot\med\bx)\:,\: \bY\notin\widehat\mcS\big)\cdot\P\big(\bY\notin\widehat\mcS\big)\\
    &=\;s_{\rm ver} + \P\left(\bZ = \bY\med\bY\sim\pi_{\rm ref}(\cdot\med\bx)\:,\: \bY\notin\widehat\mcS\:,\: \frac{1}{M}\widehat\eta(\bY)\geq U\:,\: U~\sim{\rm Unif}[0,1]\right)\\
    &\qquad\qquad\qquad\times\P\left(\frac{1}{M}\widehat\eta(\bY) \geq U\:,\:U\sim{\rm Unif}[0,1]\med\by\sim\pi_{\rm ref}(\cdot\med\bx)\:,\:\bY\notin\widehat\mcS \right)\\
    &\qquad\qquad + \underbrace{\P\left(\bZ = \bY\med\bY\sim\pi_{\rm ref}(\cdot\med\bx)\:,\: \bY\notin\widehat\mcS\:,\: \frac{1}{M}\widehat\eta(\bY)< U\:,\: U~\sim{\rm Unif}[0,1]\right)}_{=\;0}\\
    &\qquad\qquad\qquad\times\P\left(\frac{1}{M}\widehat\eta(\bY) < U\:,\:U\sim{\rm Unif}[0,1]\med\by\sim\pi_{\rm ref}(\cdot\med\bx)\:,\:\bY\notin\widehat\mcS \right)\\
    &=\;s_{\rm ver} + (1-s_{\rm ver})\cdot\frac{s_{\rm ver}}{\big(1\wedge m_{\beta}(s_{\rm ver})\big)}\cdot\left(0\vee\frac{1-m_\beta(s_{\rm ver})}{1-s_{\rm ver}}\right)\\
    &=\;s_{\rm ver} + \frac{s_{\rm ver}}{\big(1\wedge m_{\beta}(s_{\rm ver})\big)} - s_{\rm ver}\\
    &=\;\frac{s_{\rm ver}}{\big(1\wedge m_{\beta}(s_{\rm ver})\big)}\ .
\end{align}
Finally, denoting $p_{\rm SRS}\triangleq \P\big(\bZ = \bY\med\bY\sim\pi_{\rm ref}(\cdot\med\bx) \big)$, we have
\begin{align}
    \E[\tau_{\rm SRS}]\;=\;p_{\rm SRS}\sum\limits_{n\in\N} n\cdot(1-p_{\rm SRS})^{n-1}\;=\;\frac{1}{p_{\rm SRS}}\;=\;\frac{\big(1\wedge m_{\beta}(s_{\rm ver})\big)}{s_{\rm ver}}\ .
\end{align}

\underline{\textbf{Computational complexity of SMC:}} First, note that SMC's probability of acceptance for the first proposal is given by
\begin{align}
    \P\big( \bZ=\bY\big)\;&=\;\P\big( \widehat\eta(\bY)\:\geq\:U\med U\sim{\rm Unif}[0,1]\big)\\
    &=\;\underbrace{\P\big( \widehat\eta(\bY)\:\geq\:U\med U\sim{\rm Unif}[0,1]\:,\:\bY\in\widehat\mcS\big)}_{=\;1}\cdot\P\big(\bY\in\widehat\mcS\big)\\
    &\qquad\qquad + \P\big( \widehat\eta(\bY)\:\geq\:U\med U\sim{\rm Unif}[0,1]\:,\:\bY\notin\widehat\mcS\big)\cdot\P\big(\bY\notin\widehat\mcS\big)\\
    &\stackrel{\eqref{eq: s_ver}}{=}\; s_{\rm ver} + \left(0\:\vee\:\frac{1-m_\beta(s_{\rm ver})}{1-s_{\rm ver}}\right)\cdot\big( 1-s_{\rm ver}\big) \\
    &=\; 1 - \big(0\:\vee\:m_{\beta}(s_{\rm ver})-s_{\rm ver}\big)\ .
    \label{eq: comp_eff_1}
\end{align}

We have
\begin{align}
    &\E[\tau_{\rm SMC}]\;=\;\P\Big( \text{first proposal accepted}\Big)\cdot 1\\
    &\qquad\qquad\qquad+ \left(1 + \sum\limits_{n\in\N} n \P\Big( n^{\rm th}\text{ proposal is accepted}\Big) \right)\cdot\P\Big( \text{first proposal is rejected}\Big)\\
    &\stackrel{\eqref{eq: comp_eff_1}}{=}\;  \Big(1 - \big(0\:\vee\:m_{\beta}(s_{\rm ver})-s_{\rm ver}\big)\Big) + \left( 1 + \sum\limits_{n\in\N} ns_{\rm ver}(1-s_{\rm ver})^{n-1}\right)\cdot\big((1\:\wedge\:m_{\beta}(s_{\rm ver}))-s_{\rm ver}\big)\\
    &=\;\frac{1}{s_{\rm ver}}\cdot\big(1\wedge m_{\beta}(s_{\rm ver})\big)\ .
\end{align}

\section{SRS / SMC Sub-optimality (Proof of Theorem~\ref{theorem: SRC_SMC_subopt})}
\label{proof: SRS_SMC_subopt}
SRS is a transport plan in $\mcM(\mu,\widehat\nu)$ and SMC is designed from the {\em optimal} transport plan from $\mu$ to $\widehat\nu$, where we denote the optimal distribution induced by the {\em estimated reward}, i.e., $\widehat\nu \triangleq {\rm Law}(\bZ\med\bx)$ where $\bZ\sim\widehat\pi(\cdot\med\bx)$, and we define $\widehat\pi(\cdot\med\bx) \triangleq \argmax_{\pi(\cdot\med\bx)\in\Pi(\beta\med\bx)} \E_{\by\sim\pi(\cdot\med\bx)}[\widehat r(\by,\bx)]$. Consequently, SRS and SMC sample from the same distribution $\widehat\nu$; our sub-optimality analysis will quantify the discrepancy induced as a result of sampling from $\widehat\nu$ instead of $\nu^\star$. The key in our analysis is to decompose the sub-optimality into two terms: an optimal transport cost (OHC) term, and a policy improvement (PI) term. This leads to sub-optimality having three distinct regimes, which we discuss next. Note that for $\mathfrak{A}\in{{\rm SRS}, {\rm SMC}}$,
\begin{align}
    {\rm SubOpt}(\mathfrak{A})\;=\; \nu^\star(\mcS^\star) - \nu_{\mathfrak{A}}(\mcS^\star)\;=\;\nu^\star(\mcS^\star) - \widehat\nu(\mcS^\star)\;=\;\underbrace{\nu^\star(\mcS^\star) - \mu(\mcS^\star)}_{=\;{\rm OHC}} - \underbrace{\widehat\nu(\mcS^\star) - \mu(\mcS^\star)}_{=\;{\rm PI}}\ .
\end{align}
The mass that $\widehat\nu$ assigns on $\mcS^\star$ can be expanded using the Radon-Nikodym derivative in~\eqref{eq: RN_derivative} as follows.
\begin{align}
    \widehat\nu(\mcS^\star)\;&=\; \left( \frac{1}{s_{\rm ver}}\wedge\frac{m_{\beta}(s_{\rm ver})}{s_{\rm ver}}\right)\cdot\mu\big(\mcS^\star\cap\widehat\mcS \big) + \left(\frac{1-m_{\beta}(s_{\rm ver})}{1-s_{\rm ver}}\vee 0 \right)\cdot\mu(\mcS^\star\setminus\widehat\mcS)\\
    &=\;\left(\frac{1}{s_{\rm ver}}\wedge\frac{m_{\beta}(s_{\rm ver})}{s_{\rm ver}} \right)\cdot{\rm TP} + \left(\frac{1-m_{\beta}(s_{\rm ver})}{1-s_{\rm ver}} \vee 0\right)\cdot{\rm FN}\\
    &\stackrel{\eqref{eq: FN_and_TPR}}{=}\;\underbrace{\left(\frac{1}{s_{\rm ver}}\wedge\frac{m_{\beta}(s_{\rm ver})}{s_{\rm ver}} \right)}_{\triangleq\; p_{\rm ver}}\cdot s_{r^\star}\cdot{\rm TPR} + \underbrace{\left(\frac{1-m_{\beta}(s_{\rm ver})}{1-s_{\rm ver}} \vee 0\right)}_{{\triangleq\; q_{\rm ver}}}\cdot{s_{r^\star}}\cdot\big( 1-{\rm TPR}\big)\ .
    \label{eq: subopt_RSMC_1}
\end{align}
Furthermore, expanding PI, we have
\begin{align}
    \widehat\nu(\mcS^\star) - \mu(\mcS^\star)\;&\stackrel{\eqref{eq: subopt_RSMC_1}}{=}\; s_{r^\star}\Big( (p_{\rm ver}-1){\rm TPR} + (q_{\rm ver}-1)(1-{\rm TPR})\Big)\\
    &=\;s_{r^\star}\bigg(\left(\frac{1-s_{\rm ver}}{s_{\rm ver}} \wedge \frac{m_{\beta}(s_{\rm ver})-s_{\rm ver}}{s_{\rm ver}}\right)\cdot{\rm TPR}\\
    &\qquad\qquad- \left( \frac{m_{\beta}(s_{\rm ver})-s_{\rm ver}}{1-s_{r^\star}}\vee -s_{\rm ver}\right)\cdot\big( 1-{\rm TPR}\big)\bigg)\\
    &=\;s_{r^\star}\cdot\big(m_{\beta}(s_{\rm ver}) -s_{\rm ver}\big)\cdot\left( \frac{{\rm TPR}}{s_{\rm ver}} - \frac{1-{\rm TPR}}{1-s_{\rm ver}}\right)\\
    &=\;s_{r^\star}\cdot\big(m_{\beta}(s_{\rm ver}) -s_{\rm ver}\big)\cdot\frac{{\rm TPR}-s_{\rm ver}}{s_{\rm ver}(1-s_{\rm ver})}\\
    &\stackrel{\eqref{eq: s_ver}}{=}\;s_{r^\star}\cdot\big(m_{\beta}(s_{\rm ver}) -s_{\rm ver}\big)\cdot\frac{{\rm TPR} - (s_{r^\star}\cdot{\rm TPR}+(1-s_{r^\star}{\rm FPR}))}{s_{\rm ver}(1-s_{\rm ver})}\\
    &=\;\big(m_{\beta}(s_{\rm ver}) -s_{\rm ver}\big)\cdot\frac{s_{r^\star}(1-s_{r^\star})}{s_{\rm ver}(1-s_{\rm ver})}\cdot J\ .
    \label{eq: subopt_RSMC_2}
\end{align}
Next, based on coverage, we have the following fours cases.\\

\underline{\textbf{Transport regime -- $\beta\leq\big(\frac{1}{s_{r^\star}}\wedge\frac{1}{s_{\rm ver}}\big)$}:} In this regime, we have ${\rm OHC}(\beta) = \sqrt{s_{r^\star}(1-s_{r^\star})(\beta-1)}$, which combined with~\eqref{eq: subopt_RSMC_2} gives us
\begin{align}
    {\rm SubOpt}(\mathfrak{A})\;=\; {\rm OHC}(\beta)\left(1-\sqrt{\frac{s_{r^\star}(1-s_{r^\star})}{s_{\rm ver}(1-s_{\rm ver})}}\cdot J \right)\ .
\end{align}

\underline{\textbf{Policy improvement regime -- $\beta\in\big( \frac{1}{s_{\rm ver}},\frac{1}{s_{r^\star}}\big]$}:} In this regime, we have ${\rm OHC}(\beta) = \sqrt{s_{r^\star}(1-s_{r^\star})(\beta-1)}$ and $m_{\beta}(s_{\rm ver})-s_{\rm ver} = \sqrt{s_{\rm ver}(1-s_{\rm ver})(\beta-1)}$, and hence, we have
\begin{align}
    {\rm SubOpt}(\mathfrak{A})\;&=\; \sqrt{s_{r^\star}(1-s_{r^\star})(\beta-1)} - \sqrt{\frac{\beta-1}{s_{\rm ver}(1-s_{\rm ver})}}\cdot s_{r^\star}(1-s_{r^\star})\cdot J\\
    &=\;{\rm OHC}(\beta)\left( 1-\frac{1}{s_{\rm ver}}\sqrt{\frac{s_{r^\star}(1-s_{r^\star})}{\beta-1}}\cdot J\right)\ .
\end{align}

\underline{\textbf{Policy improvement regime -- $\beta\in\big( \frac{1}{s_{r^\star}},\frac{1}{s_{s_{\rm ver}}}\big]$}:} In this regime, ${\rm OHC}(\beta) = 1 - s_{r^\star}$, and hence, we have
\begin{align}
    {\rm SubOpt}(\mathfrak{A})\;&=\; (1-s_{r^\star}) - \sqrt{\frac{\beta-1}{s_{\rm ver}(1-s_{\rm ver})}}\cdot s_{r^\star}(1-s_{r^\star})\cdot J\\
    &=\;{\rm OHC}(\beta)\left( 1 - \sqrt{\frac{\beta-1}{s_{\rm ver}(1-s_{\rm ver})}}\cdot s_{r^\star}\cdot J\right)\ .
\end{align}

\underline{\textbf{Saturation regime -- $\beta>\big(\frac{1}{s_{r^\star}}\wedge\frac{1}{s_{\rm ver}}\big)$}:} In this regime, we have ${\rm OHC}(\beta) = 1-s_{r^\star}$ and $m_{\beta}(s_{\rm ver}) = 1$, and it can be readily verified that
\begin{align}
    {\rm SubOpt}(\mathfrak{A})\;&=\;{\rm OHC}(\beta)\left( 1-\frac{s_{r^\star}}{s_{\rm ver}}\cdot J\right)\ .
\end{align}

\section{BoN Batch Size (Proof of Theorem~\ref{theorem: BoN_N})}
\label{proof: BoN_N}
Let us denote $a \triangleq (1-s_{r^\star})^N$. Evaluating the $\chi^2$-divergence  between the measure induced by the BoN sampling policy $\nu_{\rm BoN}$ with batch size $N+1$, we have
\begin{align}
    \displaystyle\int_{\mcY}\left( \dfrac{\diff\nu_{\rm BoN}}{\diff\mu}\right)^2\diff\mu - 1\;&=\;\displaystyle\int_{\mcS}\left( \dfrac{\diff\nu_{\rm BoN}}{\diff\mu}\right)^2\diff\mu + \displaystyle\int_{\overline\mcS}\left( \dfrac{\diff\nu_{\rm BoN}}{\diff\mu}\right)^2\diff\mu - 1\\
    \label{eq: BoN_N_1}
    &=\;\frac{1}{s_{r^\star}}\Big(1 - (1-s_{r^\star})a \Big)^2 + (1-s_{r^\star})a^2 - 1\\
    &=\;\frac{1}{s_{r^\star}}\Big( 1 + (1-s_{r^\star})^2a^2 - 2(1-s_{r^\star})a\Big) + (1-s_{r^\star})a^2 - 1\\
    &=\;a^2\cdot\frac{1-s_{r^\star}}{s_{r^\star}} - 2a\frac{1-s_{r^\star}}{s_{r^\star}} + \frac{1-s_{r^\star}}{s_{r^\star}}\\
    &=\;\left( \frac{1-s_{r^\star}}{s_{r^\star}}\right)(a-1)^2\ ,
\end{align}
where~\eqref{eq: BoN_N_1} follows from Lemma~\ref{lemma: BoN_RN}. Hence,
\begin{align}
    \chi^2\big(\nu_{\rm BoN}\|\mu \big)\;=\; \left( \frac{1-s_{r^\star}}{s_{r^\star}}\right)\cdot\Big(1- (1-s_{r^\star})^N \Big)^2\ .
    \label{eq: chi_squared_BoN_1}
\end{align}
Note that $\chi^2\big(\nu_{\rm BoN}\|\mu \big)\leq (1-s_{r^\star})/s_{r^\star}$, and hence we have $N_{\max} = +\infty$ for any $\beta > (1-s_{r^\star})/s_{r^\star}$. The regime $\beta\in (s_{r^\star}(1-s_{r^\star}),\frac{1-s_{r^\star}}{s_{r^\star}}]$ follows from bounding~\eqref{eq: chi_squared_BoN_1} by $\beta-1$. The proof completes by noting that $\chi^2\big(\nu_{\rm BoN}\|\mu \big)$ is lower bounded by $s_{r^\star}(1-s_{r^\star})$, which is obtained by setting $N=1$ in~\eqref{eq: chi_squared_BoN_1}.

\section{BoN Sub-optimality (Proof of Theorem~\ref{theorem: BoN_suboptimality})}
\label{proof: BoN_suboptimality}
From~\eqref{eq:sub_opt_def}, we have
\begin{align}
    {\rm SubOpt}({\rm BoN})\;&=\;\nu^\star(\mcS^\star) - \nu_{\rm BoN}(\mcS^\star)\\
    \label{eq: Bon_suboptimality_1}
    &=\;\nu^\star(\mcS^\star) - \big(1- (1-s_{r^\star})^{N+1} \big)\\
    \label{eq: BoN_suboptimality_2}
    &=\;(1\wedge m_{\beta}(s_{r^\star})) - s_{r^\star} + (1-s_{r^\star}) + (1-s_{r^\star})^{N+1}\\
    &=\;(1-s_{r^\star})^{N+1} - (0\vee1-m_{\beta}(s_r^\star))\ ,
\end{align}
where~\eqref{eq: Bon_suboptimality_1} follows from Lemma~\ref{lemma: BoN_RN} and~\eqref{eq: BoN_suboptimality_2} follows from Theorem~\ref{theorem: optimal policy}.

\section{BRS Batch Size (Proof of Theorem~\ref{theorem: BRS_N})}
\label{proof: BRS_N}
Let us set $a = (1-\frac{1}{M})^{-1}$. We have
\begin{align}
    \chi^2\big(\nu_{\rm BRS}\|\mu \big)\;&=\;\displaystyle\int \left( \dfrac{\diff\nu_{\rm BRS}}{\diff\mu}\right)^2\diff\mu - 1\\
    \label{eq: BRS_N_1}
    &=\;\displaystyle\int \left((1-a^{-N}) \frac{\diff\nu^\star}{\diff\mu}(\by) + a^{-N}\right)^2\mu(\diff\by) - 1\\
    &=\;\displaystyle\int(1-a^{-N})^2\left( \frac{\diff\nu^\star}{\diff\mu}\right)^2\mu(\diff\by) + \displaystyle\int a^{-2N}\mu(\diff\by) \\
    &\qquad\qquad\qquad+ 2\displaystyle\int a^{-N}(1-a^{-N})\frac{\diff\nu^\star}{\diff\mu}\mu(\diff\by) - 1\\
    &=\;\displaystyle\int_{\mcS^\star}(1-a^{-N})^2\left( \frac{\diff\nu^\star}{\diff\mu}\right)^2\mu(\diff\by) + \displaystyle\int_{\overline\mcS^\star}(1-a^{-N})^2\left( \frac{\diff\nu^\star}{\diff\mu}\right)^2\mu(\diff\by)\\
    &\qquad\qquad\qquad + a^{-2N} + 2a^{-N}(1-a^{-N}) - 1 \\
    \label{eq: BRS_N_2}
    &=\;\displaystyle\int_{\mcS} (1-a^{-N})^2\left(\frac{1}{s_{r^\star}}\wedge\frac{m_{\beta}(s_{r^\star})}{s_{r^\star}} \right)\nu^\star(\diff \by) + a^{-N}\big(a^{-N} + 2 - 2a^{-N}\big) - 1\\
    &\qquad\qquad\qquad+ \displaystyle\int_{\overline\mcS^\star} (1-a^{-N})^2\cdot\left( 0\vee\frac{1-m_{\beta}(s_{r^\star})}{1-s_{r^\star}}\right)\nu(\diff \by)\\
    &=\;(1-a^{-N})^2\cdot\frac{1}{s_{r^\star}}\big(1\wedge m_{\beta}(s_{r^\star}) \big) + (1-s^{-N})^2\cdot\frac{1}{1-s{r^\star}}\cdot\big(0\vee 1-m_{\beta}(s_{r^\star}) \big)^2\\
    &\qquad\qquad\qquad + a^{-N}(2-a^{-N}) - 1\\
    &=\; \big(1 - a^{-N} \big)^2\cdot\underbrace{\left(\frac{1}{s_{r^\star}} (1\wedge m_{\beta}(s_{r^\star}))^2 + \frac{1}{1-s_{r^\star}}(0\vee 1-m_{\beta}(s_{r^\star}))^2 \right)}_{\triangleq\;C}\\
    &\qquad\qquad\qquad+ \underbrace{a^{-N}}_{\triangleq\;t}(2-a^{-N}) - 1\\
    &=\;(1-t)^2\cdot C - (1-2t + t^2)\\
    &=\;(1-t)^\cdot(C-1)\ ,
\end{align}
where~\eqref{eq: BRS_N_1} follows from Lemma~\ref{lemma: BRS_RN} and~\eqref{eq: BRS_N_2} follows from Theorem~\ref{theorem: optimal policy}. Next, investigating $C$, we have the following two cases.\\
\underline{Case A: $(1\wedge m_{\beta}(s_{r^\star})) = m_{\beta}(s_{r^\star})$:} In this case, denoting $d\triangleq \sqrt{s_{r^\star}(1-s_{r^\star})(\beta - 1)}$, we have
\begin{align}
    C\;&=\;\frac{(s_{r^\star}+d)^2}{s_{r^\star}} + \frac{(1-s_{r^\star}-d)^2}{1-s_{r^\star}}\\
    &=\; 1 + d^2\left(\frac{1}{s_{r^\star}} + \frac{1}{1-s_{r^\star}}\right)\\
    &=\;1+s_{r^\star}(1-s_{r^\star})(\beta-1)\left(\frac{1}{s_{r^\star}} + \frac{1}{1-s_{r^\star}}\right)\\
    &=\; a + (\beta-1)(1-s_{r^\star} + s_{r^\star})\\
    &=\; \beta\ .
\end{align}
\underline{Case B: $(1\wedge m_{\beta}(s_{r^\star})) = 1$:} In this case, we have $C= \frac{1}{s_{r^\star}}$. Furthermoremore, leveraging the condition in this case that $m_{\beta}(s_{r^\star})\geq 1$, we find that $\beta\geq\frac{1}{s_{r^\star}}$, consequently establishing that $C\leq\beta$. Our proof concludes by noting that $(1-a^{-N})^2\leq 1$ for any $N\in\N$. 

\section{BRS Sub-optimality (Proof of Theorem~\ref{theorem: BRS_suboptimality})}
\label{proof: BRS_suboptimality}
From~\eqref{eq:sub_opt_def} we obtain that
\begin{align}
    {\rm SubOpt}({\rm BRS})\;&=\;\nu^\star(\mcS^\star) - \nu_{\rm BRS}(\mcS^\star)\\
    \label{eq: BRS_suboptimality_1}
    &=\;\big( 1\wedge m_{\beta}(s_{r^\star})\big) - \nu_{\rm BRS}(\mcS^\star)\\
    \label{eq: BRS_suboptimality_2}
    &=\; \left( 1-\frac{1}{M}\right)^N\cdot\big( 1\wedge m_{\beta}(s_{r^\star})\big) - \left( 1-\frac{1}{M}\right)^N\cdot s_{r^\star}\\
    &=\;\left( 1-\frac{1}{M}\right)^N\cdot \big( 1-s_{r^\star}\wedge m_{\beta}(s_{r^\star})-s_{r^\star}\big)\\
    \label{eq: BRS_suboptimality_3}
    &=\;{\rm OTC}(\beta)\cdot\left( 1-\frac{1}{M}\right)^N\ ,
\end{align}
where~\eqref{eq: BRS_suboptimality_1} follows from Theorem~\ref{theorem: optimal policy}, \eqref{eq: BRS_suboptimality_2} follows from Lemma~\ref{lemma: BRS_RN}, and finally, \eqref{eq: BRS_suboptimality_3} follows from Lemma~\ref{lemma: OHC}.
\section{Batched Sampling Algorithms with Approximate Verifiers}
\label{sec: batched_with_imperfect_verifiers}
In this section, we extend the sub-optimality analyses for the batched sampling algorithms BoN and BRS to settings where we only have access to an approximate verifier, captured through the set membership oracle $\widehat\mcS$. We begin by analyzing BoN sub-optimality with access to $\widehat\mcS$, and subsequently state the same for BRS. We conclude the section discussing the different regimes of the sub-optimality--coverage plot, and which algorithm is preferred in each of these regimes. For our analyses, we decompose the sub-optimality into two components, a {\em sampling} error, and a {\em verification} error. Specifically, for any algorithm $\mathfrak{A}\in\{{\rm BoN},\:{\rm BRS}\}$, let $\widehat\nu_{\mathfrak{A}}$ denote the distribution induced by its sampling mechanism. Accordingly, we have
\begin{align}
\label{eq: batched_sub_opt_decomposition}
    {\rm SubOpt}(\mathfrak{A})\;\stackrel{\eqref{eq:sub_opt_def}}{=}\; \nu^\star(\mcS^\star) - \widehat\nu_{\mathfrak{A}}(\mcS^\star)\;=\;\underbrace{\nu^\star(\mcS^\star) -\nu_{\mathfrak{A}}(\mcS^\star)}_{\text{sampling error}} + \underbrace{\nu_{\mathfrak{A}}(\mcS^\star) - \widehat\nu_{\mathfrak{A}}(\mcS^\star)}_{\text{verification error}}\ .
\end{align}
We have the following theorem for BoN sub-optimality.
\begin{theorem}[BoN -- sub-optimality with approximate verifiers]
\label{theorem: BoN_suboptimality_approx}
    The sub-optimality of the BoN sampling algorithm with access to an approximate membership oracle $\widehat\mcS$ is given by
    \begin{align}
        {\rm SubOpt}({\rm BoN})\;&=\; (1-s_{r^\star})\left(1 - \frac{s_{r^\star}}{s_{\rm ver}}\big( 1-(1-s_{\rm ver})^N\big)J \right)- \big( 0\vee 1-m_{\beta}(s_{r^\star})\big)\ .
    \end{align}
\end{theorem}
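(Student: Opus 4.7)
The plan is to compute $\widehat\nu_{\rm BoN}(\mcS^\star)$ in closed form and subtract it from $\nu^\star(\mcS^\star)$, rather than going through the sampling/verification decomposition of~\eqref{eq: batched_sub_opt_decomposition} (although the same algebra would work there too). The starting point is Lemma~\ref{lemma: BoN_RN}, applied with the approximate membership oracle $\widehat\mcS$ and mass $s_{\widehat r} = s_{\rm ver}$ (see~\eqref{eq: s_ver}). This gives
\begin{align}
\frac{\diff \widehat\nu_{\rm BoN}}{\diff \mu}(\by) \;=\;
\begin{cases}
(1-s_{\rm ver})^N, & \by\notin\widehat\mcS,\\[2pt]
\tfrac{1}{s_{\rm ver}}\bigl(1 - (1-s_{\rm ver})^{N+1}\bigr), & \by\in\widehat\mcS.
\end{cases}
\end{align}

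Next, I would split $\mcS^\star$ into true positives $\mcS^\star\cap\widehat\mcS$ and false negatives $\mcS^\star\setminus\widehat\mcS$, integrate the above density against $\mu$ on each piece, and invoke the ROC identities from Appendix~\ref{sec: properties_of_ROC}, namely ${\rm TP} = s_{r^\star}\cdot{\rm TPR}$ and ${\rm FN} = s_{r^\star}(1-{\rm TPR})$. Denoting $a \triangleq (1-s_{\rm ver})^N$, this yields
\begin{align}
\widehat\nu_{\rm BoN}(\mcS^\star) \;=\; \tfrac{s_{r^\star}\,{\rm TPR}}{s_{\rm ver}}\bigl(1 - (1-s_{\rm ver})a\bigr) + a\,s_{r^\star}(1-{\rm TPR}).
\end{align}

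The key algebraic simplification is the identity ${\rm TPR} - s_{\rm ver} = (1-s_{r^\star})J$, which follows directly by substituting $s_{\rm ver} = s_{r^\star}{\rm TPR} + (1-s_{r^\star}){\rm FPR}$ and using $J \triangleq {\rm TPR} - {\rm FPR}$. Regrouping the expression for $1 - \widehat\nu_{\rm BoN}(\mcS^\star)$ around this identity collapses the dependence on ${\rm TPR}$ into a single factor of $J$, producing
\begin{align}
1 - \widehat\nu_{\rm BoN}(\mcS^\star) \;=\; (1-s_{r^\star})\left(1 - \tfrac{s_{r^\star}}{s_{\rm ver}}\bigl(1-(1-s_{\rm ver})^N\bigr)\,J\right).
\end{align}

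Finally, I would combine this with $\nu^\star(\mcS^\star) = 1\wedge m_{\beta}(s_{r^\star}) = 1 - \bigl(0 \vee 1 - m_{\beta}(s_{r^\star})\bigr)$ (immediate from Theorem~\ref{theorem: optimal policy}) to obtain the claimed sub-optimality. There is no real obstacle here beyond careful bookkeeping: the two non-trivial inputs (the BoN Radon--Nikodym derivative of Lemma~\ref{lemma: BoN_RN} and the ROC identity ${\rm TPR} - s_{\rm ver} = (1-s_{r^\star})J$) do all the work, and the result reduces to Theorem~\ref{theorem: BoN_suboptimality} upon setting $\widehat\mcS = \mcS^\star$ (so ${\rm TPR}=1$, ${\rm FPR}=0$, $J=1$, $s_{\rm ver}=s_{r^\star}$), which serves as a useful sanity check.
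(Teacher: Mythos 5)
Your proposal is correct and rests on the same two ingredients as the paper's proof: Lemma~\ref{lemma: BoN_RN} applied with the approximate oracle $\widehat\mcS$ (so $s\mapsto s_{\rm ver}$), the split of $\mcS^\star$ into $\mcS^\star\cap\widehat\mcS$ and $\mcS^\star\setminus\widehat\mcS$ with the ROC masses from Appendix~\ref{sec: properties_of_ROC}, and the identity ${\rm TPR}-s_{\rm ver}=(1-s_{r^\star})J$, which the paper also uses (inline, via~\eqref{eq: s_ver}) to collapse the ${\rm TPR}$-dependence into a single factor of $J$. The only difference is bookkeeping: the paper routes through the sampling/verification-error decomposition~\eqref{eq: batched_sub_opt_decomposition} and reuses Theorem~\ref{theorem: BoN_suboptimality}, whereas you compute $\widehat\nu_{\rm BoN}(\mcS^\star)$ in one shot and subtract it from $\nu^\star(\mcS^\star)=1\wedge m_{\beta}(s_{r^\star})$; the algebra is identical and both give the stated formula.
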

\begin{proof}
    From~\eqref{eq: batched_sub_opt_decomposition}, we observe that it is sufficient to evaluate the verification error, since the sampling error has already been analyzed in Theorem~\ref{theorem: BoN_suboptimality}. We have
    \begin{align}
        &\nu_{\rm BoN}(\mcS^\star) - \widehat\nu_{\rm BoN}(\mcS^\star) \\
        &=\;\nu_{\rm BoN}(\mcS^\star) - \Big( \widehat\nu_{\rm BoN}(\mcS^\star\cap\widehat\mcS) + \widehat\nu_{\rm BoN}(\mcS^\star\setminus\widehat\mcS)\Big)\\
        \label{eq: BoN_approx_subopt_1}
        &=\;\big(1 - (1-s_{r^\star})^{N+1} \big) - \bigg(\frac{1}{s_{\rm ver}} \big( 1- (1-s_{\rm ver})^{N+1}\big)\cdot\mu(\mcS^\star\cap\widehat\mcS)\\
        &\qquad\qquad+ (1-s_{\rm ver})^N\mu(\mcS^\star\setminus\widehat\mcS)\bigg)\\
        &=\;\big(1 - (1-s_{r^\star})^{N+1} \big) - \bigg(\frac{s_{r^\star}}{s_{\rm ver}} \big( 1- (1-s_{\rm ver})^{N+1}\big)\cdot{\rm TPR}\\
        &\qquad\qquad+ (1-s_{\rm ver})^N\cdot s_{r^\star}\cdot(1-{\rm TPR})\bigg)\\
        &=\;\big(1 - (1-s_{r^\star})^{N+1} \big) - \bigg( s_{r^\star}(1-s_{\rm ver})^N \bigg( (1-{\rm TPR}) \\
        &\qquad\qquad\qquad - \frac{(1-s_{\rm ver})\cdot{\rm TPR}}{s_{\rm ver}}\bigg) + \frac{s_{r^\star}\cdot{\rm TPR}}{s_{\rm ver}}\bigg)\\
        &=\;\big(1 - (1-s_{r^\star})^{N+1} \big) - \left( s_{r^\star}(1-s_{\rm ver})^N\left( \frac{s_{\rm ver}-{\rm TPR}}{s_{\rm ver}} \right)+\frac{s_{r^\star}\cdot{\rm TPR}}{s_{\rm ver}}\right)\\
        &=\;\big(1 - (1-s_{r^\star})^{N+1} \big) - \frac{s_{r^\star}}{s_{\rm ver}}\left((1-s_{\rm ver})^N\cdot(s_{\rm ver}-{\rm TPR}) + {\rm TPR} \right)\\
        &\stackrel{\eqref{eq: s_ver}}{=}\;\big(1 - (1-s_{r^\star})^{N+1} \big) - \frac{s_{r^\star}}{s_{\rm ver}}\Big( {\rm TPR} - (1-s_{r^\star})(1-s_{\rm ver})^N\cdot J\Big)\\
        \label{eq: BoN_approx_subopt_2}
        &\stackrel{\eqref{eq: s_ver}}{=}\;\big(1 - (1-s_{r^\star})^{N+1} \big) - \frac{s_{r^\star}}{s_{\rm ver}}\Big(s_{\rm ver} + (1-s_{r^\star})\cdot\big( 1-(1-s_{\rm ver})^N\big)\cdot J\Big)\\
        &=\;\big(1 - (1-s_{r^\star})^{N+1} \big) - \Big(s_{r^\star} + \frac{s_{r^\star}(1-s_{r^\star})}{s_{\rm ver}}\big( 1-(1-s_{\rm ver})^N\big)\cdot J\Big)\ ,
        \label{eq: BoN_approx_subopt_3}
    \end{align}
    where~\eqref{eq: BoN_approx_subopt_1} follows from Lemma~\ref{lemma: BoN_RN}. The claim readily follows by combining~\eqref{eq: BoN_approx_subopt_3} with Theorem~\ref{theorem: BoN_suboptimality} using~\eqref{eq: batched_sub_opt_decomposition}. 
\end{proof}

Next, we state the sub-optimality of the BRS algorithm with access to an approximate oracle $\widehat\mcS$.
\begin{theorem}
\label{theorem: BRS_suboptimality_approx}
    Let us set $a_N\triangleq (1-(1-\frac{1}{M})^N)$. The sub-optimality of the BRS algorithm with access to an approximate membership oracle $\widehat\mcS$ is specified through the following coverage regimes.
    \begin{enumerate}
        \item \textbf{Transport regime:} In the transport regime, characterized by the coverage constraint $\beta \leq (\frac{1}{s_{r^\star}}\wedge\frac{1}{s_{\rm ver}})$, we have
        \begin{align}
            {\rm SubOpt}({\rm BRS})\;&=\; {\rm OHC}(\beta)(1-a_N) + a_N s_{r^\star}\bigg(\frac{1}{s_{r^\star}}m_{\beta}(s_{r^\star}) - \bigg( \frac{m_{\beta}(s_{\rm ver})}{s_{\rm ver}}\cdot{\rm TPR}\\
            &\qquad\qquad + \frac{1-m_{\beta}(s_{\rm ver})}{1-s_{\rm ver}}(1-{\rm TPR})\bigg)\bigg)\ .
        \end{align}
        \item \textbf{Policy improvement regime:} We have two cases. If $s_{\rm ver} > s_{r^\star}$, in the policy improvement regime, characterized by the coverage constraint $\beta \in (\frac{1}{s_{\rm ver}},\frac{1}{s_{r^\star}}]$, we have
        \begin{align}
            {\rm SubOpt}({\rm BRS})\;=\; {\rm OHC}(\beta)(1-a_N) + a_N s_{r^\star}\left(\frac{m_{\beta}(s_{r^\star})}{s_{r^\star}} - \frac{{\rm TPR}}{s_{\rm ver}}\right) \ .
        \end{align}
        Alternatively, for $\beta \in (\frac{1}{s_{r^\star}},\frac{1}{s_{\rm ver}}]$, we have
        \begin{align}
            {\rm SubOpt}({\rm BRS})\;&=\; {\rm OHC}(\beta)(1-a_N) + a_N s_{r^\star}\bigg(\frac{1}{s_{r^\star}} - \bigg( \frac{m_{\beta}(s_{\rm ver})}{s_{\rm ver}}\cdot{\rm TPR}\\
            &\qquad\qquad + \frac{1-m_{\beta}(s_{\rm ver})}{1-s_{\rm ver}}(1-{\rm TPR})\bigg)\bigg)\ .
        \end{align}
        \item \textbf{Saturation regime:} In the saturation regime, characterized by the coverage constraint $\beta > (\frac{1}{s_{r^\star}}\vee\frac{1}{s_{\rm ver}})$, we have
        \begin{align}
            {\rm SubOpt}({\rm BRS})\;=\; {\rm OHC}(\beta)(1-a_N) + a_N s_{r^\star}\left(\frac{1}{s_{r^\star}} - \frac{{\rm TPR}}{s_{\rm ver}}\right)\ .
        \end{align}
    \end{enumerate}
\end{theorem}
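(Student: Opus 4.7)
The plan is to reduce the sub-optimality to a convex combination of the (already–known) optimal transport cost and a single ``static'' verification gap, then evaluate that gap piecewise. First I would invoke Lemma~\ref{lemma: BRS_RN}, which gives the BRS sampling distribution with respect to any target $\widehat{\nu}$ as the mixture
\begin{align}
    \frac{\diff\widehat{\nu}_{\rm BRS}}{\diff\mu}(\by)\;=\;a_N\,\frac{\diff\widehat{\nu}}{\diff\mu}(\by) + (1-a_N)\ ,
\end{align}
where $a_N\triangleq 1-(1-\tfrac{1}{M})^N$ and $\widehat{\nu}$ is the optimal policy under the approximate oracle $\widehat{\mcS}$, i.e.\ the measure obtained by applying Theorem~\ref{theorem: optimal policy} with $s_r=s_{\rm ver}$. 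Evaluating on $\mcS^\star$ and subtracting from $\nu^\star(\mcS^\star)$ yields
\begin{align}
    {\rm SubOpt}({\rm BRS})\;=\;(1-a_N)\bigl(\nu^\star(\mcS^\star)-\mu(\mcS^\star)\bigr)\;+\;a_N\bigl(\nu^\star(\mcS^\star)-\widehat{\nu}(\mcS^\star)\bigr)\ ,
\end{align}
which matches the decomposition in~\eqref{eq: batched_sub_opt_decomposition}. By Lemma~\ref{lemma: OHC} the first bracket is exactly ${\rm OTC}(\beta)$, giving ${\rm SubOpt}({\rm BRS})=(1-a_N){\rm OTC}(\beta)+a_N\bigl(\nu^\star(\mcS^\star)-\widehat{\nu}(\mcS^\star)\bigr)$.

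Next I would compute the static gap $\nu^\star(\mcS^\star)-\widehat{\nu}(\mcS^\star)$. Plugging Theorem~\ref{theorem: optimal policy} into $\widehat{\nu}(\mcS^\star)=\widehat{\nu}(\mcS^\star\cap\widehat{\mcS})+\widehat{\nu}(\mcS^\star\setminus\widehat{\mcS})$ and using the ROC identities $\mu(\mcS^\star\cap\widehat{\mcS})=s_{r^\star}{\rm TPR}$ and $\mu(\mcS^\star\setminus\widehat{\mcS})=s_{r^\star}(1-{\rm TPR})$ from Appendix~\ref{sec: properties_of_ROC} gives
\begin{align}
    \widehat{\nu}(\mcS^\star)\;=\;s_{r^\star}\!\left[\Bigl(\tfrac{1}{s_{\rm ver}}\!\wedge\!\tfrac{m_\beta(s_{\rm ver})}{s_{\rm ver}}\Bigr){\rm TPR}\;+\;\Bigl(0\!\vee\!\tfrac{1-m_\beta(s_{\rm ver})}{1-s_{\rm ver}}\Bigr)(1-{\rm TPR})\right]\ ,
\end{align}
while $\nu^\star(\mcS^\star)=1\wedge m_\beta(s_{r^\star})$ which I rewrite as $s_{r^\star}\cdot\bigl(\tfrac{1}{s_{r^\star}}\wedge\tfrac{m_\beta(s_{r^\star})}{s_{r^\star}}\bigr)$ to facilitate factoring out $s_{r^\star}$.

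The last step is a branch-by-branch simplification. The four binary choices $m_\beta(s_{r^\star})\lessgtr 1$ and $m_\beta(s_{\rm ver})\lessgtr 1$ select which side of each min/max is active, and these correspond exactly to the three regimes listed in the theorem (with the policy improvement regime splitting into (a) and (b) according to whether $s_{\rm ver}>s_{r^\star}$ or $s_{\rm ver}<s_{r^\star}$). In the transport regime both mins return the $m_\beta(\cdot)/\cdot$ branch, recovering the full bracketed expression; in the saturation regime both mins return the $1/\cdot$ branch and both maxes return $0$, collapsing the bracket to $\tfrac{1}{s_{r^\star}}-\tfrac{{\rm TPR}}{s_{\rm ver}}$; and in the two policy improvement sub-cases exactly one of the two pairs of min/max collapses while the other remains piecewise. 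Substituting each case into the mixture formula and grouping the $a_N$ and $(1-a_N)$ terms yields the stated closed forms.

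The main obstacle here is bookkeeping rather than any new technique: once the mixture identity of Lemma~\ref{lemma: BRS_RN} is in hand, the computation is linear in $\widehat{\nu}(\mcS^\star)$, so all the work lies in correctly pairing the active branches of the two piecewise RN derivatives (at $s_{r^\star}$ and at $s_{\rm ver}$) across the four subregions, and in verifying that the boundary values $\beta=1/s_{r^\star}$ and $\beta=1/s_{\rm ver}$ give consistent limits across the regime expressions.
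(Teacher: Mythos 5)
Your proposal is correct and follows essentially the same route as the paper. Both proofs hinge on the mixture representation of Lemma~\ref{lemma: BRS_RN}, both decompose the sub-optimality into the sampling error ${\rm OHC}(\beta)(1-a_N)$ (imported from Theorem~\ref{theorem: BRS_suboptimality}) plus the verification error $a_N(\nu^\star(\mcS^\star)-\widehat\nu(\mcS^\star))$, both expand $\widehat\nu(\mcS^\star)$ via the ROC identities on $\mcS^\star\cap\widehat\mcS$ and $\mcS^\star\setminus\widehat\mcS$, and both finish with the same case analysis on which branch of each min/max in $\eta_r$ is active; the only superficial difference is that you substitute Lemma~\ref{lemma: BRS_RN} into ${\rm SubOpt}$ first and then split into the two brackets, while the paper splits first and cites Lemma~\ref{lemma: BRS_RN} twice, but the algebra is term-for-term identical.
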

\begin{proof}
    Similarly to Theorem~\ref{theorem: BoN_suboptimality_approx}, we will analyze the the verification error for BRS. For clarity in presentation, let us define
    \begin{align}
    \label{eq: p_q_definitions_BRS}
        p(s)\;\triangleq\;\left( \frac{1}{s}\wedge \frac{m_{\beta}(s)}{s}\right)\ ,\quad\text{and}\quad q(s)\;\triangleq\;\left( 0\vee \frac{1-m_{\beta}(s)}{1-s}\right)\ .
    \end{align}
    Note that
    \begin{align}
        &\widehat\nu_{\rm BRS}(\mcS^\star)\\
        &=\;\widehat\nu_{\rm BRS}(\widehat\mcS\cap\mcS^\star) + \widehat\nu_{\rm BRS}(\mcS^\star\setminus \widehat\mcS)\\
        \label{eq: BRS_suboptimality_approx_1}
        &=\; \Big(a_N\cdot p(s_{\rm ver}) + (1-a_N)\Big)\mu(\widehat\mcS\cap\mcS^\star)+ \Big(a_N\cdot q(s_{\rm ver} ) + (1-s_N)\Big)\mu(\mcS^\star\setminus\widehat\mcS)\\
        &=\;\Big(a_N p(s_{\rm ver}) + (1-a_N) \Big)\cdot s_{r^\star}\cdot{\rm TPR}\: + \: \Big(a_N q(s_{\rm ver}) +(1-a_N)\Big)\cdot s_{r^\star}\cdot(1-{\rm TPR})\\
        &=\;a_N\cdot s_{r^\star}\Big( p(s_{\rm ver})\cdot{\rm TPR} + q(s_{\rm ver})\cdot(1-{\rm TPR})\Big) + (1-a_N)s_{r^\star}\ ,
        \label{eq: BRS_suboptimality_approx_2}
    \end{align}
    where~\eqref{eq: BRS_suboptimality_approx_1} follows from Lemma~\ref{lemma: BRS_RN}. Furthermore, it can be readily verified that
    \begin{align}
    \label{eq: BRS_suboptimality_approx_3}
        \nu_{\rm BRS}(\mcS^\star)\;=\; a_N\cdot s_{r^\star}\cdot p(s_{r^\star}) + (1-a_N)s_{r^\star}\ .
    \end{align}
    Combining~\eqref{eq: BRS_suboptimality_approx_2} and~\eqref{eq: BRS_suboptimality_approx_3}, we have
    \begin{align}
        \nu_{\rm BRS}(\mcS^\star) &- \widehat\nu_{\rm BRS}(\mcS^\star)\\
        &=\; a_N\Big(s_{r^\star} p(s_{r^\star})-p(s_{\rm ver})\cdot{\rm TPR}\cdot s_{r^\star} - q(s_{\rm ver})\cdot(1-{\rm TPR})\cdot s_{r^\star} \Big)\\
        &\stackrel{\eqref{eq: p_q_definitions_BRS}}{=}\;a_N\Bigg( (1\wedge m_{\beta}(s_{r^\star})) - \left( \frac{1}{s_{\rm ver}}\wedge\frac{m_{\beta}(s_{\rm ver})}{s_{\rm ver}}\right)\cdot{\rm TPR}\cdot s_{r^\star} \\
        &\qquad\qquad\qquad-\left(0\vee\frac{1-m_{\beta}(s_{\rm ver})}{1-s_{\rm ver}} \right)(1-{\rm TPR})s_{r^\star}\Bigg)\ .
    \end{align}
    \underline{\textbf{Transport regime:}} In this regime, since both $m_{\beta}(a_r^\star)$ and $m_{\beta}(s_{\rm ver})$ are less than $1$, we have
    \begin{align}
        \nu_{\rm BRS}(\mcS^\star) &- \widehat\nu_{\rm BRS}(\mcS^\star)\\
        &=\;a_N\left( m_{\beta}(s_{r^\star}) - \frac{m_{\beta}(s_{\rm ver})}{s_{\rm ver}}\cdot s_{r^\star}\cdot {\rm TPR}-\frac{1-m_{\beta}(s_{\rm ver})}{1-s_{\rm ver}}\cdot(1-{\rm TPR})\cdot s_{r^\star}\right)\\
        &=\;a_N s_{r^\star}\left(\frac{m_{\beta}(s_{r^\star})}{s_{r^\star}} - \left( \frac{m_{\beta}(s_{\rm ver})}{s_{\rm ver}}\cdot{\rm TPR} + \frac{1-m_{\beta}(s_{\rm ver})}{1-s_{\rm ver}}\cdot(1-{\rm TPR})\right) \right)\ .
        \label{eq: BRS_suboptimality_approx_4}
    \end{align}
    Finally, the result readily follows by adding the sampling error, ${\rm OHC}(\beta)(1-a_N)$ proved in Theorem~\ref{theorem: BRS_suboptimality}.\\
    \underline{\textbf{Policy improvement regime:}} This regime can be divided into two cases, one in which $\beta\in(1/s_{\rm ver}, 1/s_{r^\star}]$, and the second in which $\beta\in(1/s_{r^\star}, 1/s_{\rm ver}]$. In the first regime, the result readily follows by replacing $m_{\beta}(s_{\rm ver})=1$ in~\eqref{eq: BRS_suboptimality_approx_4}, and adding the sampling error ${\rm OHC}(\beta)(1-a_N)$. In the second regime, the result readily follows by replacing $m_{\beta}(s_{r^\star})=1$ in~\eqref{eq: BRS_suboptimality_approx_4}, and adding ${\rm OHC}(\beta)(1-a_N)$, the sampling error.\\
    \underline{\textbf{Saturation regime:}} In this regime, both $m_{\beta}(s_{r^\star})=1$ and $m_{\beta}(s_{\rm ver})=1$, and the result readily follows by replacing these values in~\eqref{eq: BRS_suboptimality_approx_4} and adding the sampling error ${\rm OHC}(\beta)(1-a_N)$. This concludes our proof.
\end{proof}

\paragraph{Interpreting the results.} In Theorem~\ref{theorem: BoN_suboptimality_approx}, we note that as $N$ goes to $+\infty$, the BoN sampling error decays to $0$ (and potentially becomes negative, depending on whether the mass put on $\mcS^\star$ by the skyline policy is less than $1$). However, the estimation error saturates at ${\rm OHC}(\beta)(1-\frac{s_{r^\star}}{s_{\rm ver}}J)$, as we had observed for AiC. This is intuitive, since the verification error is entirely controlled by verifier inaccuracies, and does not depend on the design of the sampling algorithm. Similarly, from Theorem~\ref{theorem: BRS_suboptimality_approx}, we observe a similar trend --- the sampling error is driven down to $0$ as the batch size $N\rightarrow +\infty$, while the verification error stagnates at ${\rm OHC}(\beta)\cdot (1-\frac{s_{r^\star}}{s_{\rm ver}}J)$ under the saturation regime.
\section{Extended Experiments}
\label{sec: appendix_experiments}

In this section, we specify our experimental setup: how we construct ground-truth and approximate verifiers, the models used to evaluate our algorithms, and the hyperparameters employed. All evaluations are conducted on GSM8K~\citep{cobbe2021training}, a benchmark of high-quality grade-school math word problems requiring multi-step arithmetic reasoning. Following the protocol of \citet{dorner2025rocnrerollverifierimperfectionaffects}, we select the earliest test question for which two independent generations from \texttt{Llama-3.2-3B} are both incorrect --- namely, the $2^{\text{nd}}$ sample in GSM8K’s test split. Prompts are constructed by prefixing each question with \textbf{five randomly sampled training exemplars}. As in~\citep{dorner2025rocnrerollverifierimperfectionaffects, huang2025best}, we then draw $10{,}000$ responses $\by \sim \pi_{\text{ref}}(\cdot \mid \bx)$ at temperature $1$, using models from the \texttt{Qwen}, \texttt{Gemma} and \texttt{Llama} families. Specifically, we evaluate: (i) \texttt{Qwen3-1.7B}, (ii) \texttt{Qwen3-8B}, (iii) \texttt{Qwen3-14B}, (iv) \texttt{google/gemma-3-4b-it}, and (v) \texttt{meta-llama/Llama-3.1-8B-Instruct}, spanning sizes from 1.7B to 14B parameters. Generations are obtained through the \texttt{lm-eval-harness} framework~\citep{eval-harness}. Verifiers are constructed in two modes: an {\em explicit-construction} mode and a {\em reward-guided} mode. For sampling, we bootstrap from the $10{,}000$-response pool with replacement.

\paragraph{Explicit verifier construction.}
To construct $\mcS^\star$, we determine the ground-truth correctness of each response by extracting the predicted answer via pattern matching with \texttt{(-?[\$0-9.,]{2,})|(-?[0-9]+)}, and marking it correct if it matches the GSM8K gold label. The proposal’s mass on $\mcS^\star$, denoted $s_{r^\star}$, is estimated empirically by summing the normalized \texttt{logprobs} of correct responses. For the approximate verifier $\widehat\mcS$, we adopt an {\em explicit construction} designed to validate our theoretical analysis. Specifically, we curate subsets of correct and incorrect responses into $\widehat\mcS$ such that both the Youden index $J$ and the proposal’s mass $s_{\rm ver}$ are controlled, thereby fixing the verifier’s TPR and FPR. This provides direct and interpretable control over the verifier’s operating characteristics.
To ensure determinism, responses are ranked in descending order of their \texttt{logprobs}. Candidates are then selected from $\mcS^\star$ and its complement $\overline{\mcS^\star}$, starting with the highest-probability responses in each set, and iteratively added until the cumulative mass matches the preset values of $J$ and $s_{\rm ver}$.

\begin{figure}
    \centering
    \includegraphics[width=0.55\linewidth]{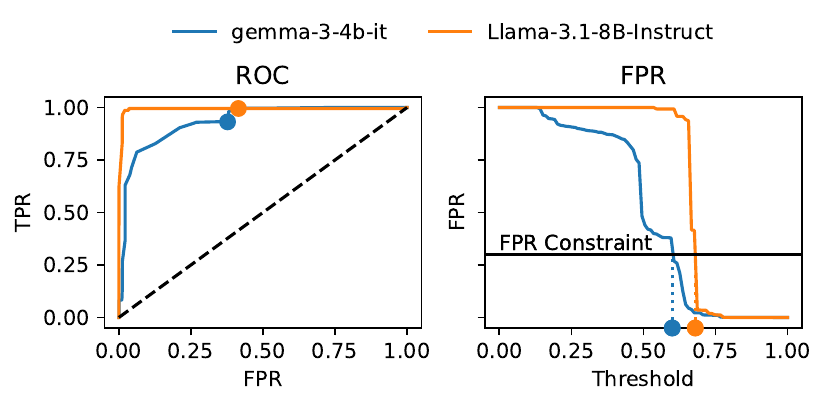}
    \caption{ROC estimated based on generations from \texttt{gemma-3-4b-it} (left), and \texttt{Llama-3.1-8B-Instruct} (right), with the latter showing larger area under the curve (AUC). 
    }
    \label{fig:roc}
\end{figure}

\paragraph{Reward-guided verifier construction.}
As a second mode of verification, we employ the reward model \texttt{Skywork/Skywork-Reward-V2-Llama-3.1-8B}, which ranks $1^{\rm st}$ on the RewardBench leaderboard~\citep{RewardBench2}, to score the generated responses. We normalize these scores and derive approximate verifiers by thresholding: for a prompt $\bx \in \mcX$ and response $\by \in \mcY$, a response is included in $\widehat\mcS$ if its reward $r_{\rm sr}(\bx,\by)$ exceeds a threshold $\gamma$. By varying $\gamma$, we obtain a family of verifiers whose receiver operating characteristics (ROCs) are plotted in Figure~\ref{fig:roc}. Since the ROCs are estimated from finite samples, we compute them separately for the two models considered in this experiment, namely \texttt{google/gemma-3-4b-it} and \texttt{meta-llama/Llama-3.1-8B-Instruct}. To select two concrete verifiers, we fix the false positive rate (FPR) at $0.3$ for both models and choose the threshold $\gamma$ that achieves this constraint, as shown in Figure~\ref{fig:roc}. 

\paragraph{List of plots.}
In Section~\ref{sec: experiments}, we presented results for the \texttt{Qwen3-1.7B} and \texttt{Qwen3-14B} models. Here, we supplement these with additional plots for \texttt{Qwen3-8B} under the explicit-verifier setting, along with further analyses illustrating how average reward varies with generator coverage and how sub-optimality scales with computational complexity. Figure~\ref{fig:qwen_missing} reports these results for the \texttt{Qwen} model family under the sequential sampling protocol. Figures~\ref{fig:subopt_BRS_qwen} and~\ref{fig:subopt_BoN_qwen} provide the corresponding plots for the batched setting, i.e., BoN and BRS. Finally, Figure~\ref{fig:reward_complexity_threshold} reports analogous plots under a reward-guided verifier constructed with \texttt{Skywork/Skywork-Reward-V2-Llama-3.1-8B}.


\begin{figure}[h]
    \centering
    \includegraphics[width=0.9\linewidth]{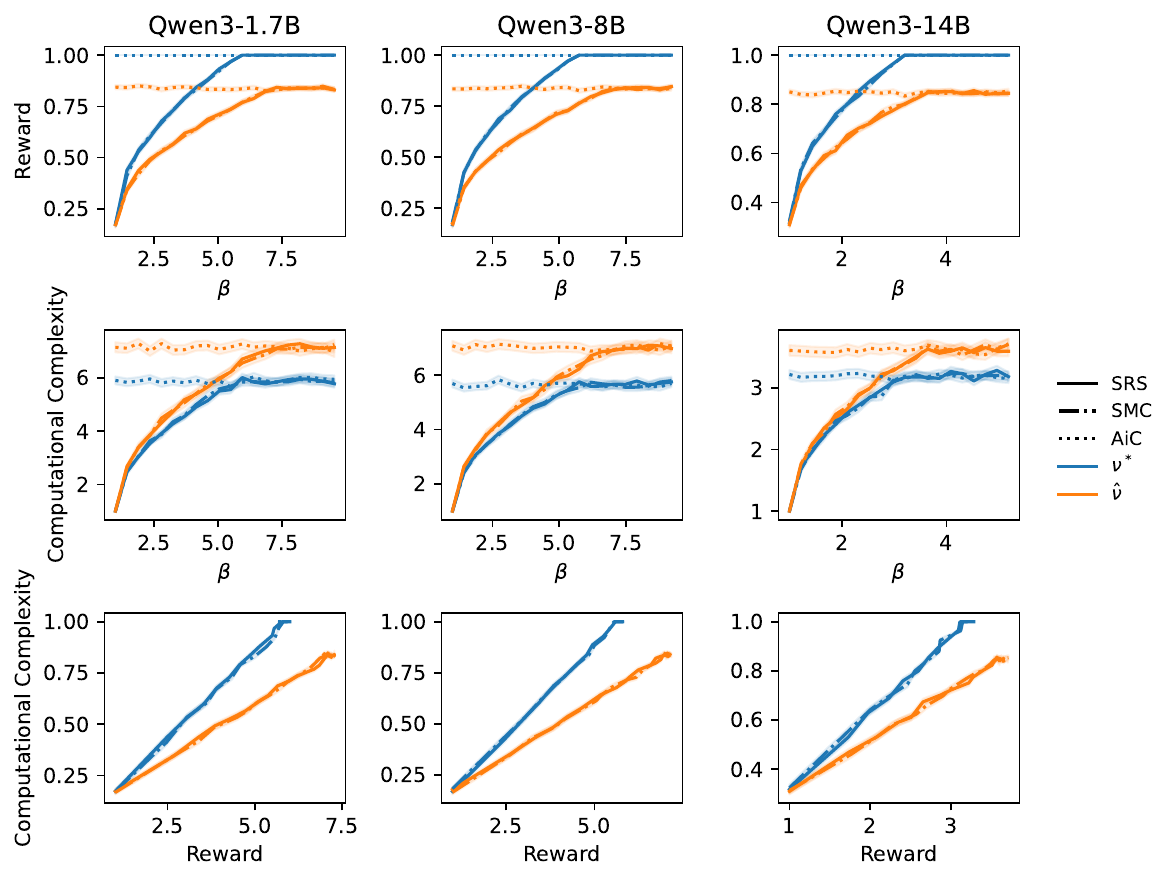}
    \caption{Plots for the \texttt{Qwen} family with an {\em explicit verifier}: \textbf{average reward versus $\beta$} (first row), \textbf{computational complexity versus $\beta$} (second row), and \textbf{computational complexity versus reward} (third row). Trends predicted in Theorems~\ref{theorem: AiC} and \ref{theorem: computational complexity} are observed.}
    \label{fig:qwen_missing}
\end{figure}
\begin{figure}[h]
    \centering
    \includegraphics[width=0.9\linewidth]{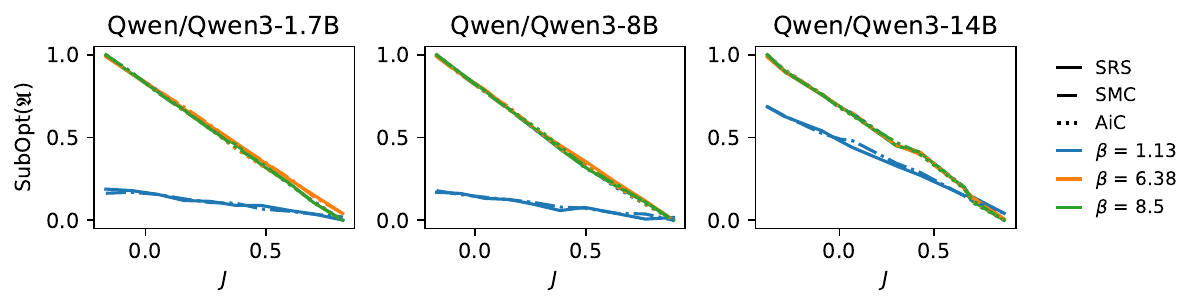}
    \caption{\textbf{Sub-optimality plotted against Youden’s index} $J$ for the \texttt{Qwen} model family with an \emph{explicit verifier}, using $\beta_{\rm T}, \beta_{\rm PI}, \beta_{\rm S}$ to represent the three distinct coverage regimes. $\beta$ values are computed as $\beta_{\rm T}=(0.2\cdot\underline{\beta_{\rm sat}} \vee 1)$, $\beta_{\rm PI}=(\beta_{\rm T}+\bar{\beta}_{\rm sat})/2$, $\beta_{\rm S}=1.2\cdot\bar{\beta}_{\rm sat}$, where $\underline{\beta_{\rm sat}}=(1/s_{r^\star} \wedge 1/s_{\rm ver} )$, and $\bar{\beta}_{\rm sat}=(1/s_{r^\star} \vee 1/s_{\rm ver} )$.}
    \label{fig:subopt_versus_j_qwen}
\end{figure}

\begin{figure}[h]
    \centering
    \includegraphics[width=0.9\linewidth]{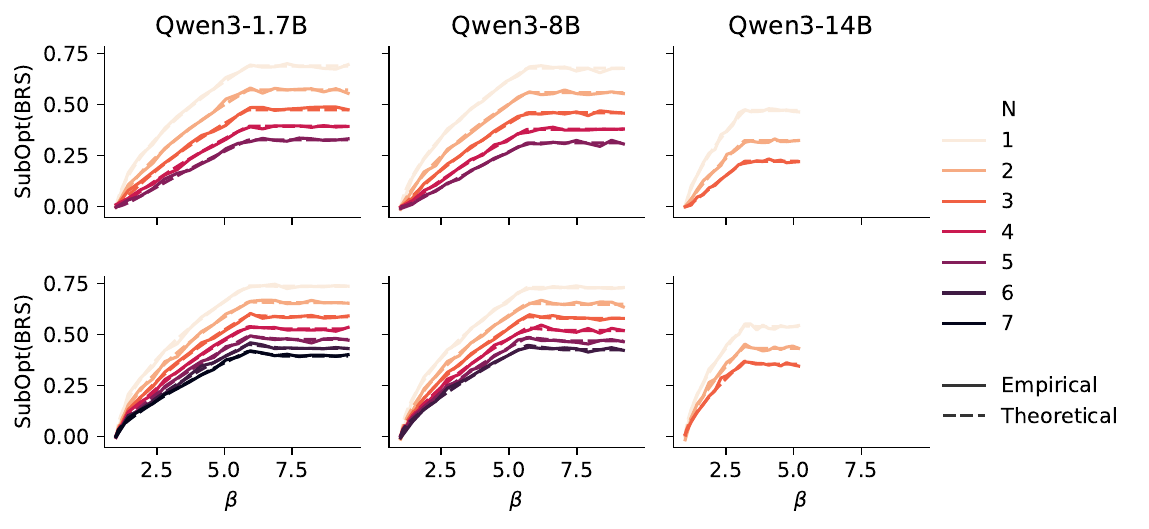}
    \caption{BRS plots for the \texttt{Qwen} family with an {\em explicit verifier}: \textbf{ground truth verifier} on the first row, \textbf{explicit verifier} on the second row. Plots match the theoretical findings in Theorems~\ref{theorem: BRS_suboptimality} and~\ref{theorem: BRS_suboptimality_approx}. Furthermore, as $N$ increases, sub-optimality decreases.}
    \label{fig:subopt_BRS_qwen}
\end{figure}

\begin{figure}[t]
    \centering
    \includegraphics[width=0.9\linewidth]{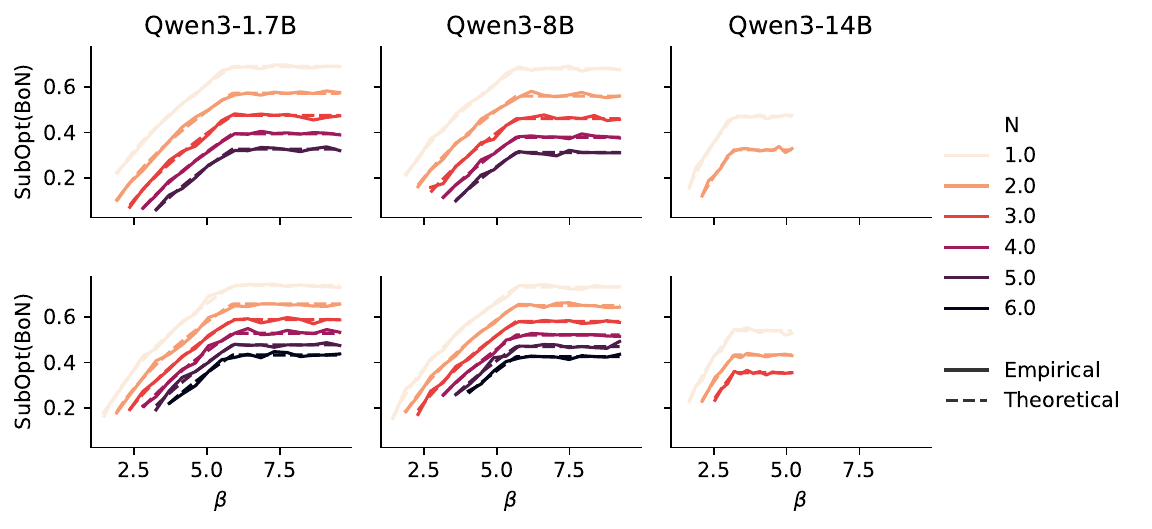}
    \caption{BoN plots for the \texttt{Qwen} family with an {\em explicit verifier}: \textbf{ground truth verifier} on the first row, \textbf{explicit verifier} on the second row. Plots match the theoretical findings in Theorems~\ref{theorem: BoN_suboptimality} and~\ref{theorem: BoN_suboptimality_approx}. Here, we choose $N\in[\lfloor(N_{\max}\wedge \frac{1}{s})\rfloor]$ as prescribed in Theorem~\ref{theorem: BoN_N} for feasibility.}
    \label{fig:subopt_BoN_qwen}
\end{figure}



\begin{figure}[t]
    \centering
    \includegraphics[width=0.8\linewidth]{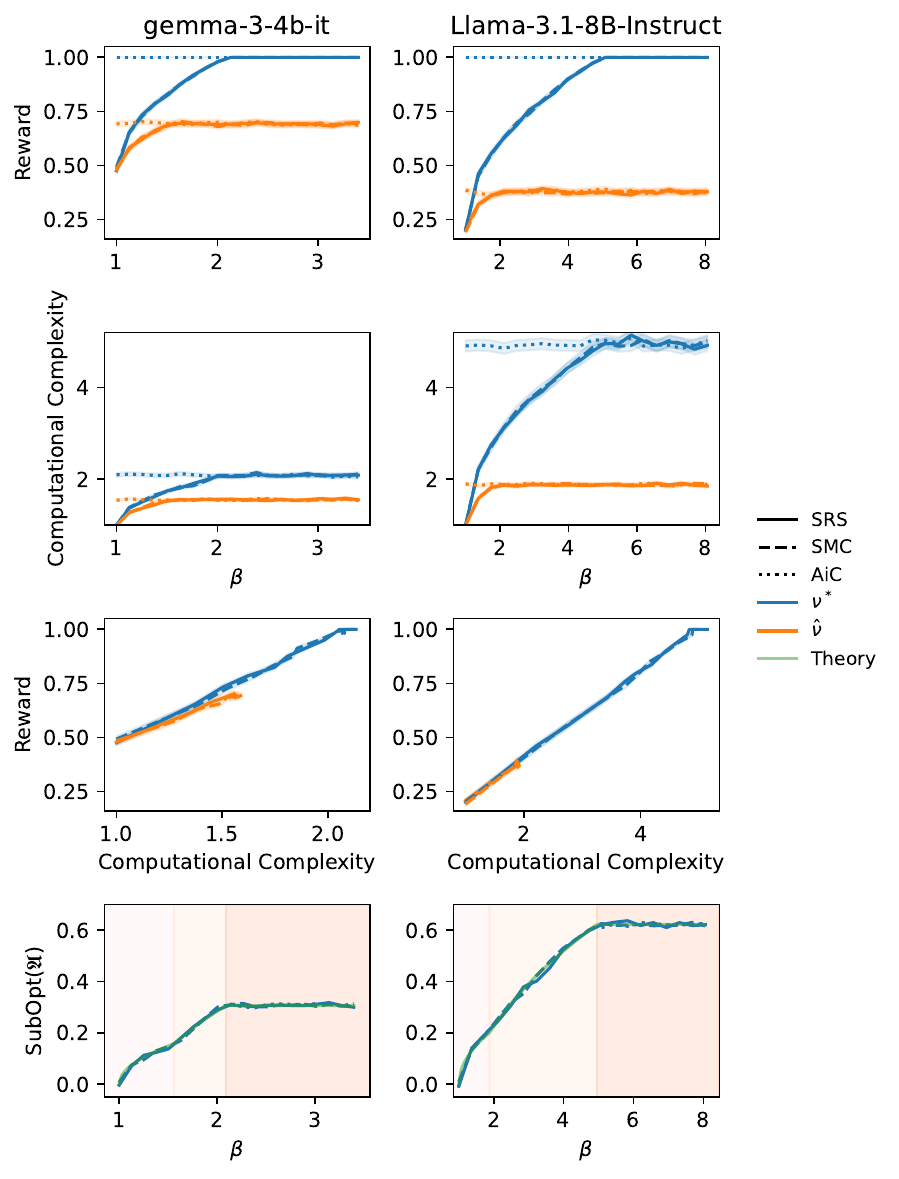}
    \caption{\textbf{Reward-guided verifier:} verifiers chosen as indicated in the ROC plot in Figure~\ref{fig:roc}. We plot \textbf{reward versus $\beta$} (first row), \textbf{computational complexity versus $\beta$} (second row), \textbf{reward versus computational complexity} (third row), and \textbf{sub-optimality versus $\beta$} (fourth row.)}
    \label{fig:reward_complexity_threshold}
\end{figure}

\begin{figure}[t]
    \centering
    \includegraphics[width=0.7\linewidth]{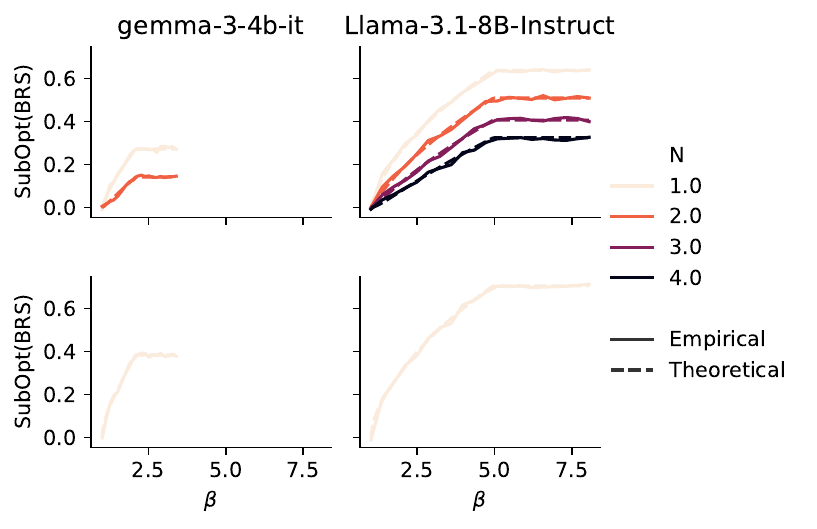}
    \caption{BRS plots with a \textbf{reward-guided verifier:} we plot \textbf{sub-optimality versus $\beta$} with the ground truth verifier on the first row, and approximate verifier on the second row.}
    \label{fig:subopt_BRS_threshold}
\end{figure}

\begin{figure}[h]
    \centering
    \includegraphics[width=0.7\linewidth]{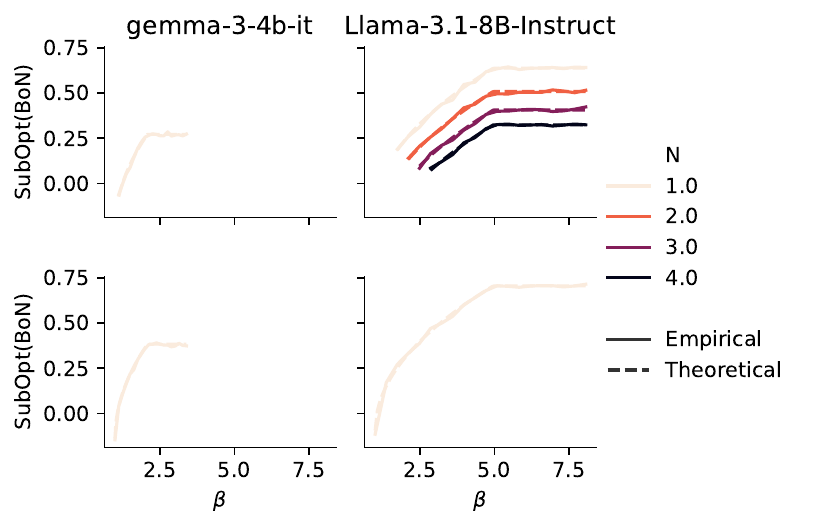}
    \caption{BoN plots with a \textbf{reward-guided verifier:} we plot \textbf{sub-optimality versus $\beta$} with the ground truth verifier on the first row, and approximate verifier on the second row.}
    \label{fig:subopt_BoN_threshold}
\end{figure}


\section{Compute and LLM Usage}

All generations are performed in $8\times$ A6000 Nvidia GPUs with 49 gigabytes of VRAM each. LLMs have been used for (1) sharpening the write-up, (2) as a coding assistant for the experiments, and (3) verifying the correctness of some algebra in the proofs of Lemma~\ref{lemma: BoN_RN} and Theorem~\ref{theorem: BoN_suboptimality}.

\end{document}